\theoremstyle{plain}
\newtheorem{theorem}{Theorem}[section]
\newtheorem{proposition}[theorem]{Proposition}
\newtheorem{lemma}[theorem]{Lemma}
\newtheorem{definition}[theorem]{Definition}
\newtheorem{assumption}[theorem]{Assumption}
\newtheorem{example}[theorem]{Example}
\crefname{assumption}{Assumption}{Assumptions}
\Crefname{assumption}{Assumption}{Assumptions}
\newcommand{\ie}{i.e.,\xspace}
\newcommand{\eg}{e.g.,\xspace}
\newcommand{\aka}{\textit{aka}\xspace}
\newcommand{\supp}{\ensuremath{\mathrm{supp}}\xspace}
\newcommand{\BK}{\ensuremath{\mathsf{K}}\xspace}
\newcommand{\KL}{\ensuremath{\mathsf{KL}}\xspace}
\newcommand{\de}{\ensuremath{\mathrm{d}}\xspace}
\newcommand{\id}{\ensuremath{\mathrm{id}}\xspace}
\newcommand{\Vset}[1]{\ensuremath{\mathsf{Vert}(#1)}\xspace}
\newcommand{\MNIST}{{\tt MNIST}\xspace}
\newcommand{\MNISTAdd}{{\tt MNIST-Add}\xspace}
\newcommand{\MNISTSumXor}{{\tt MNIST-SumParity}\xspace}
\newcommand{\CHans}{{\tt Clevr}\xspace}
\newcommand{\BOIA}{{\tt BDD-OIA}\xspace}
\newcommand{\MZero}{\raisebox{-1pt}{\includegraphics[width=1.85ex]{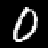}}\xspace}
\newcommand{\MOne}{\raisebox{-1pt}{\includegraphics[width=1.85ex]{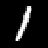}}\xspace}
\newcommand{\MFour}{\raisebox{-1pt}{\includegraphics[width=1.85ex]{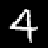}}\xspace}
\newcommand{\MFive}{\raisebox{-1pt}{\includegraphics[width=1.85ex]{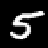}}\xspace}
\newcommand{\MSix}{\raisebox{-1pt}{\includegraphics[width=1.85ex]{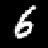}}\xspace}
\newcommand{\MEight}{\raisebox{-1pt}{\includegraphics[width=1.85ex]{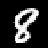}}\xspace}
\newcommand{\YAcc}{\ensuremath{\mathrm{Acc}_Y}\xspace}
\newcommand{\CAcc}{\ensuremath{\mathrm{Acc}_C}\xspace}
\newcommand{\FY}{\ensuremath{F_1(Y)}\xspace}
\newcommand{\FC}{\ensuremath{F_1(C)}\xspace}
\newcommand{\Collapse}{\ensuremath{\mathsf{Cls}(C)}\xspace}
\newcommand{\FK}{\ensuremath{F_1(\vbeta)}\xspace}
\newcommand{\KAcc}{\ensuremath{\mathrm{Acc}(\vbeta)}\xspace}
\newcommand{\NLL}{\ensuremath{\mathsf{NLL}\xspace}}
\newcommand{\DPL}{\texttt{DPL}\xspace}
\newcommand{\DSL}{\texttt{DSL}\xspace}
\newcommand{\DSLDPL}{\texttt{DPL$^*$}\xspace}
\newcommand{\CBM}{\texttt{CBNM}\xspace}
\newcommand{\DSLBEARS}{\texttt{bears$^*$}\xspace}
\newcommand{\SENN}{\texttt{SENN}\xspace}
\title{Shortcuts and Identifiability in Concept-based Models from a Neuro-Symbolic Lens}
\author{%
  Samuele Bortolotti\thanks{$^*$ Equal contribution. Correspondence to \texttt{samuele.bortolotti@unitn.it}} \\
  DISI, University of Trento \\
  Italy \\
  \texttt{samuele.bortolotti@unitn.it} \\
  \And
  Emanuele Marconato\footnotemark[1] \\
  DISI, University of Trento \\
  Italy \\
  \texttt{emanuele.marconato@unitn.it} \\
  \And
  Paolo Morettin \\
  DISI, University of Trento \\
  Italy \\
  \texttt{paolo.morettin@unitn.it} \\
  \And
  Andrea Passerini \\
  DISI, University of Trento \\
  Italy \\
  \texttt{andrea.passerini@unitn.it} \\
  \And
  Stefano Teso \\
  CIMeC and DISI, University of Trento \\
  Italy \\
  \texttt{stefano.teso@unitn.it} \\
}
\begin{document}
\maketitle

\begin{abstract}
    Concept-based Models are neural networks that learn a concept extractor to map inputs to high-level \textit{\textbf{concepts}} and an \textit{\textbf{inference layer}} to translate these into predictions. Ensuring these modules produce interpretable concepts and 
    behave reliably in out-of-distribution is crucial, yet the conditions for achieving this remain unclear.  
    We study this problem by establishing a novel connection between Concept-based Models and \textit{\textbf{reasoning shortcuts}} (RSs), a common issue where models achieve high accuracy by learning low-quality concepts, even when the inference layer is \textit{fixed} and provided upfront. 
    Specifically, we extend RSs to the more complex setting of Concept-based Models and derive theoretical conditions for identifying both the concepts and the inference layer.
    Our empirical results highlight the impact of RSs and show that existing methods, even %
    combined with multiple natural mitigation strategies, often fail to meet these conditions in practice.
\end{abstract}

\section{Introduction}

Concept-based Models (CBMs) are a broad class of self-explainable classifiers~\citep{alvarez2018towards,
chen2019looks, 
koh2020concept, zarlenga2022concept, marconato2022glancenets, taeb2022provable, pugnana2025deferring} designed for high performance and \textit{ante-hoc} interpretability.
Learning a CBM involves solving two conjoint problems:
acquiring high-level \textit{\textbf{concepts}} describing the input (\eg an image) and
an \textit{\textbf{inference layer}} that predicts a label from them.
In many applications, it is essential that these two elements are ``high quality'', in the sense that:
\textit{i}) the concepts should be \textbf{\textit{interpretable}}, as failure in doing so compromises understanding~\citep{schwalbe2022concept, poeta2023concept} and steerability \citep{teso2023leveraging, gupta2024survey}, both key selling points of CBMs; and
\textit{ii}) the concepts and inference layer should behave well also \textit{\textbf{out of distribution}} (OOD), \eg they should not pick up spurious correlations between the input, the concepts and the output~\citep{geirhos2020shortcut, bahadori2021debiasing, stammer2021right}.

This raises the question of when CBMs can acquire ``high-quality'' concepts and inference layers.
While existing studies focus on concept quality~\citep{mahinpei2021promises, zarlenga2023towards, marconato2023interpretability}, they neglect the role of the inference layer altogether.
In contrast, we cast the question in terms of whether it is possible to \textit{\textbf{identify}} from data concepts and inference layers with the intended semantics, defined formally in \cref{sec:joint-reasoning-shortcuts}.
We proceed to answer this question by building a novel connection with \textit{\textbf{reasoning shortcuts}} (RSs), a well-known issue in Neuro-Symbolic (NeSy) AI whereby models achieve high accuracy by learning low-quality concepts \textit{\textbf{even if the inference layer is fixed}}~\citep{chang2020assessing, marconato2023not, wang2023learning, yang2024analysis}.
For instance, a NeSy model for autonomous driving whose inference layer encodes the traffic laws might confuse pedestrians with red lights, as both entail the same prediction (the car has to stop)~\citep{bortolotti2024benchmark}.
We generalize RSs to CBMs in which \textit{\textbf{the inference layer is learned}}, and \textit{\textbf{concept supervision may be absent}}.
Our analysis shows that shortcuts %
can be exponentially many, even more than RSs, 
(%
we count them explicitly in \cref{sec:additional-results-counts}) and that maximum likelihood training is insufficient for attaining intended semantics.
This hinders both interpretability and OOD behavior.
On the positive side, we also specify conditions under which (under suitable assumptions) CBMs \textit{cannot} be fooled by RSs, proving that \textit{\textbf{the ground-truth concepts and inference layer can be identified}} (see \cref{thm:implications}).

Our evaluation 
on several learning tasks suggest that CBMs can be severely impacted by reasoning shortcuts in practice, as expected, and also that the benefits of popular mitigation strategies do not carry over to this more challenging problem. %
These results cast doubts on the ability of these models to identify concepts and inference layers with the intended semantics unless appropriately nudged. %

\textbf{Contributions}:  In summary, we: (i) generalize reasoning shortcuts to the challenging case of CBMs whose inference layer is learned (\cref{sec:joint-reasoning-shortcuts}), (ii) study conditions under which maximum likelihood training can identify good concepts and inference layers (\cref{sec:theoretical-analysis}), and (iii) present empirical evidence that well-tested mitigations fail in this challenging setting (\cref{sec:experiments}).

\begin{figure*}[!t]
    \centering
    \includegraphics[width=0.9\linewidth]{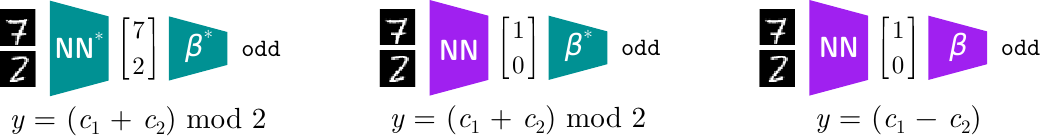}
    \caption{%
    \textbf{Joint reasoning shortcuts}. The goal is to predict whether the sum of two MNIST digits is odd (as in \cref{ex:sum-parity}) from a training set of all possible unique (even, even), (odd, odd), and (odd, even) pairs of MNIST digits.  \textbf{\textcolor{teal}{Green}} elements are fixed, \textbf{\textcolor{Purple}{purple}} ones are learned.
    {\bf Left}: ground-truth concepts and inference layer.
    {\bf Middle}: NeSy-CBMs with given knowledge can learn reasoning shortcuts, \ie concepts with unintended semantics.
    {\bf Right}: CBMs can learn joint reasoning shortcuts, \ie both concepts and inference layer have unintended semantics. %
    }
    \label{fig:sum-parity}
\end{figure*}

\section{Preliminaries}
\label{sec:preliminaries}

\textbf{Concept-based Models} (CBMs) first map the input $\vx \in \bbR^n$ into $k$ discrete {categorical} concepts $\vc = (c_1, \ldots, c_k) \in \calC$ via a neural backbone, 
and 
then infer labels $\vy \in \calY$ from this using a white-box layer, \eg a linear layer.
This setup makes it easy to figure out what concepts are most responsible for any prediction, yielding a form of \textit{ante-hoc} concept-based interpretability.
Several architectures follow this recipe, including approaches
for converting black-box neural networks 
into CBMs \citep{yuksekgonul2022post, wang2024cbmzero, dominici2024anycbms, marcinkevivcs2024beyond}. 

A key issue is how to ensure the concepts are interpretable.
Some CBMs rely on \textit{\textbf{concept annotations}} \citep{koh2020concept, sawada2022concept, zarlenga2022concept, marconato2022glancenets, kim2023probabilistic, debot2024interpretable}.
These are however expensive to obtain, prompting researchers to replace them with (potentially unreliable \citep{huang2024survey, sun2024exploring}) annotations obtained from foundation models \citep{oikarinen2022label, yang2023language, srivastava2024vlgcbm, debole2025concept} or \textit{\textbf{unsupervised}} concept discovery \citep{alvarez2018towards, chen2019looks, taeb2022provable, schrodi2024concept}.

\textbf{Neuro-Symbolic CBMs} (NeSy-CBMs) specialize CBMs to tasks in which the prediction $\vy \in \calY$ ought to comply with known safety or structural constraints.  These are supplied as a formal specification -- a logic formula $\BK$, \aka \textit{\textbf{knowledge}} -- tying together the prediction $\vy$ and the concepts $\vc$.
In NeSy-CBMs, the inference step is a \textit{\textbf{symbolic reasoning layer}} that steers %
\citep{diligenti2012bridging, donadello2017logic, xu2018semantic} or guarantees \citep{lippi2009prediction, manhaeve2018deepproblog, giunchiglia2020coherent, hoernle2022multiplexnet, ahmed2022semantic} the labels and concepts to be logically consistent according to $\BK$.
Throughout, we will consider this example task:

\begin{example}[\MNISTSumXor]
    \label{ex:sum-parity}
    Given two MNIST digits \citep{lecun1998mnist}, we wish to predict whether their sum is even or odd.
    The numerical values of the two digits can be modelled as concepts $\vC \in \{0, \ldots, 9\}^2$, and the inference layer is entirely determined by the prior knowledge:
    $
        \BK = ((y = 1) \liff \text{$(C_1 + C_2)$ is odd})
    $.
    This specifies that the label $y \in \{0, 1\}$ ought to be consistent with the predicted concepts $\vC$.
    See \cref{fig:sum-parity} for an illustration.
\end{example}

Like CBMs, NeSy-CBMs are usually trained via \textit{\textbf{maximum likelihood}} and gradient descent, but \textit{\textbf{without concept supervision}}.
The reasoning layer is typically imbued with fuzzy \citep{zadeh1988fuzzy} or probabilistic \citep{de2015probabilistic} logic semantics to ensure differentiability.
Many NeSy-CBMs require $\BK$ to be provided upfront, as in \cref{ex:sum-parity}, hence their inference layer has no learnable parameters.  Starting with \cref{sec:rss-in-cbms}, we will instead consider NeSy-CBMs that -- just like regular CBMs -- \textit{\textbf{learn the inference layer}} \citep{wang2019satnet, liu2023out, daniele2023deep, tang2023perception, wust2024pix2code}.

\subsection{Reasoning Shortcuts}
\label{sec:rss-in-nesy-cbms}

Before discussing reasoning shortcuts, we need to establish a clear relationship between concepts, inputs, and labels.
The RS literature does so by assuming the following \textit{\textbf{data generation process}} \citep{marconato2023not, yang2024analysis, umili2024neural}: each input $\vx \in \bbR^n$ is the result of sampling $k$ \textit{\textbf{ground-truth concepts}} $\vg = (g_1, \ldots, g_k) \in \calG$ (\eg in \MNISTSumXor two numerical digits) from an unobserved distribution $p^*(\vG)$ and then $\vx$ itself (\eg two corresponding MNIST images) from the conditional distribution $p^*(\vX \mid \vg)$.\footnote{This distribution subsumes stylistic factors, \eg calligraphy.}
Labels $\vy \in \calY$ are sampled from the conditional distribution of $\vg$ given by 
$p^*(\vY \mid \vg; \BK)$ %
consistently with $\BK$ (\eg $y = 1$ if and only if $g_1 + g_2$ is odd).
The ground-truth distribution is thus:
\[
    p^*(\vX, \vY) = \bbE_{\vg \sim p^*(\vG)} [
        p^*(\vX \mid \vg) p^*(\vY \mid \vg; \BK)
    ]
\]
Intuitively, a \textit{\textbf{reasoning shortcut}} (RS) occurs when a NeSy-CBM with \textit{fixed} knowledge $\BK$ attains high or even perfect label accuracy by learning concepts $\vC$ that differ from the ground-truth ones $\vG$. %

\begin{example}
    In \MNISTSumXor, a NeSy model can achieve perfect accuracy by mapping each pair of MNIST images $\vx = (\vx_1, \vx_2)$ to the corresponding ground-truth digits, that is, $\vc = (g_1, g_2)$.
    However, it would achieve the same accuracy if it were to map it to $\vc = (g_1 \ \mathrm{mod} \ 2, g_2 \ \mathrm{mod} \ 2)$, as doing so leaves the parity of the sum unchanged, see \cref{fig:sum-parity}.
    Hence, a NeSy model cannot distinguish between the two based on label likelihood alone during training.
\end{example}

RSs \textit{by definition} yield good labels in-distribution, yet they compromise out-of-distribution (OOD) performance.
For instance, in autonomous driving tasks, NeSy-CBMs can confuse the concepts of ``pedestrian'' and ``red light'', leading to poor decisions for OOD decisions where the distinction matters \citep{bortolotti2024benchmark}.
The meaning of concepts affected by RSs is unclear, affecting understanding \citep{schwalbe2022concept}, intervenability \citep{shin2023closer, zarlenga2024learning, steinmann2024learning}, debugging \citep{lertvittayakumjorn2020find, stammer2021right}
and down-stream applications that hinge on concepts being high-quality, like NeSy formal verification \citep{xie2022neuro, zaid2023distribution, morettin2024unified}.
Unfortunately, existing works on RSs do not apply to CBMs and NeSy-CBMs where the inference layer is learned.

As commonly done, we work in the setting with equal discrete predicted $\calC$ and ground-truth $\calG$ concept sets, \ie $\calG = \calC$ \citep{marconato2023not,yang2024analysis}.
Notice that, 
we make no assumption on how the set $\mathcal{G}$ is made; multiple concept vocabularies at different levels of abstraction may be valid for a given task.
At this stage, different choices of $\calG$ are allowed but may lead to distinct results for RSs, depending on the number of ground-truth concepts and how they are related to the labels. Both these two aspects will be made clear in light of the data generation process as per \cref{assu:concepts,assu:labels}.

\begin{figure*}
    \centering
    \includegraphics[width=0.85\linewidth]{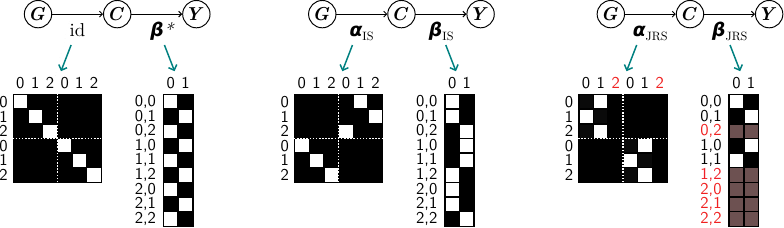}
    
    \caption{\textbf{Examples of semantics in \MNISTSumXor} restricted to $\vg, \vc \in \{0, 1, 2\}^2$ for readability.
    \textbf{Left}: ideally, $\valpha$ should be the identity (\ie $\vC$ recovers the ground-truth concepts $\vG$) and the inference layer should learn $\vbeta^*$.
    \textbf{Middle}: $(\valpha_\mathrm{IS}, \vbeta_\mathrm{IS}) \ne (\id, \vbeta^*)$ has intended semantics (\cref{def:intended-semantics}), \ie the ground-truth concepts and inference layer can be recovered and generalize OOD.
    \textbf{Right}: $(\valpha_\mathrm{JRS}, \vbeta_\mathrm{JRS})$ affected by the Joint Reasoning Shortcuts in \cref{fig:sum-parity}.  Elements (predicted concepts $\vC$ and entries in $\vbeta$) in red  are never predicted nor used, highlighting \textit{\textbf{simplicity bias}}.
    Maps are visualized as matrices.
    } 
    \label{fig:pair-of-functions}
\end{figure*}

\section{Reasoning Shortcuts in CBMs}
\label{sec:rss-in-cbms}

Given a finite set $\calS$, we indicate with $\Delta_\calS \subset [0, 1]^{|\calS|}$ the simplex of probability distributions $P(Q)$ over items in $\calS$. Any random variable $Q \in \calS$ defines a point in the simplex via its distribution $P(Q)$.  
Notice that the set of the simplex vertices $\Vset{\Delta_\calS}$ contains all point mass distributions $\Ind{Q = q}$ for all $q \in \calS$. 
All relevant notation we will use is reported in \cref{tab:notation}.

\textbf{CBMs as pairs of functions}.  CBMs and NeSy-CBMs differ in how they implement the inference layer, hence to bridge them we employ the following unified formalism.
Any CBM can be viewed as a pair of learnable functions implementing the concept extractor and the inference layer, respectively, cf. \cref{fig:pair-of-functions}.
{Formally}, the former is a function $\vf : \bbR^n \to \Delta_\calC$ mapping inputs $\vx$ to a conditional distribution $p(\vC \mid \vx)$ over the concepts, however it can be better understood as a function $\valpha : \calG \to \Delta_\calC$ taking ground-truth concepts $\vg$ as input instead, and defined as:
\[
    \valpha(\vg) := \bbE_{\vx \sim p^*(\vX \mid \vg)} [\vf(\vx)]
    \label{eq:def-alpha}
\]
In contrast, the inference layer is a function $\vomega: \Delta_\calC \to \Delta_\calY$ mapping the concept distribution output by the concept extractor into a label distribution $p(\vY \mid \vf(\vx))$.
For clarity, we also define $\vbeta: \calC \to \Delta_\calY$, which is identical to the former except it works with concept \textit{values} rather than \textit{distributions}, that is:
\[
    \vbeta(\vc) := \vomega( \Ind{\vC = \vc} )
    \label{eq:def-beta}
\]
Hence, a CBM entails both a pair $(\vf, \vomega) \in \calF \times \Omega$ and a pair $(\valpha, \vbeta) \in \calA \times \calB$.\footnote{We work in the non-parametric setting, hence $\calF$ and $\Omega$ contain all learnable concept extractors $\vf$ and inference layers $\vomega$, and similarly $\calA$ and $\calB$ contain all learnable maps $\valpha$ and $\vbeta$.}
Later on, we will make use of the fact that $\calA$ and $\calB$ are \textit{\textbf{simplices}} \citep{morton2013relations, montufar2014fisher}, \ie each $\valpha \in \calA$ (resp. $\vbeta \in \calB$) can be written as a convex combination of vertices $\Vset{\calA}$ (resp. $\Vset{\calB}$).

As mentioned in \cref{sec:preliminaries}, supplying prior knowledge $\BK$ to a NeSy-CBM is equivalent to \textit{\textbf{fixing the inference layer}} to a corresponding function $\vomega^*$ (and $\vbeta^*$).
Note that whereas $\vomega^*$ changes based on how reasoning is implemented -- \eg fuzzy vs. probabilistic logic -- $\vbeta^*$ does not, as both kinds of reasoning layers behave identically when input any point-mass concept distribution $\Ind{\vC = \vc}$.

\textbf{Standard assumptions}.  The maps $\valpha$ and  $\vbeta${, induced respectively by the concept extractor and inference layer,} are especially useful for analysis provided the following two standard assumptions about how data are distributed \citep{marconato2023not, yang2024analysis, umili2024neural}: %

\begin{assumption}[Extrapolability]
    \label{assu:concepts}
    The ground-truth distribution $p^*(\vG \mid \vX)$ is induced by a function
    $\vf^* : \vx \mapsto \vg$, \ie $
    p^*(\vG \mid \vX) = \Ind{\vG = \vf^*(\vX)}$. 
\end{assumption}

This means that the ground-truth concepts $\vg$ can always be recovered for all inputs $\vx$ by a sufficiently expressive concept extractor; the $\valpha$ it induces is the identity $\id(\vg) := \Ind{\vC  = \vg} \in \Vset{\calA}$.

\begin{assumption}[Deterministic knowledge]
    \label{assu:labels}
    The ground-truth distribution $p^*(\vY \mid \vG; \BK)$ is induced by the knowledge via a map $\vbeta^* \in \Vset{\calB}$,  such that 
    $p^*(\vY \mid \vG; \BK) = \vbeta^*(\vg)$. 
\end{assumption}

This ensures that the labels $\vy$ can be predicted without any ambiguity from the ground-truth concepts $\vg$.
Notice that, not all choices of $\calG$ guarantee that these assumptions are met. Small concept spaces may not give a deterministic knowledge, \ie labels can be confused for one another with only few concepts, or too-arbitrary choices of the constituents may not guarantee their extrapolation from the input, \ie ground-truth concepts of the input are ambiguous. 
Nonetheless, both assumptions hold in several NeSy tasks \citep{bortolotti2024benchmark} and underlie many works in RSs \citep{marconato2023not, marconato2024bears, wang2023learning}, but formulating a theory that relaxes them is not straightfoward and is technically challenging. In fact, \citet{marconato2023not} showed that upon relaxing \cref{assu:labels} it might not be possible to deal with RSs.

The maximum log-likelihood objective is then written as:
\[
    \label{eq:max-loglikelihood}
    \max_{(\vf, \vomega) \in \calF \times \Omega} \bbE_{(\vx, \vy) \sim p^*(\vX, \vY)} \log (\vomega_{\vy} \circ \vf) (\vx)
\]
Here, $\vomega_{\vy}$ is the conditional probability of the ground-truth labels $\vy$.
Notice that under the above assumptions CBMs attaining maximum likelihood perfectly model the ground-truth data distribution $p^*(\vX, \vY)$, see \cref{lemma:abstraction-from-lh}.

\subsection{Intended Semantics and Joint Reasoning Shortcuts}
\label{sec:joint-reasoning-shortcuts}

We posit that a CBM $(\valpha, \vbeta) \in \calA \times \calB$ has ``high-quality'' concepts and inference layer if it satisfies two desiderata: (i) \textbf{disentanglement}: each learned concept $C_i$ should correspond to a single ground-truth concept $G_j$ up to an invertible transformation; (ii) \textbf{generalization}: the combination of $\valpha$ and $\vbeta$ must always yield correct predictions.

In our setting, without concept supervision and prior knowledge, the learned concepts are \textit{anonymous} and users have to figure out which is which in a \textit{post-hoc} fashion, \eg by aligning them to dense annotations \citep{koh2020concept, zarlenga2022concept, daniele2023deep, tang2023perception}.  Doing so is also a prerequisite for understanding the learned inference layer \citep{wust2024pix2code, daniele2023deep, zarlenga2022concept, zarlenga2024learning}.  When \textit{\textbf{disentanglement}} holds the mapping between $\vG$ and $\vC$ can be recovered \textit{exactly} using Hungarian matching \citep{kuhn1955hungarian}; otherwise it is arbitrarily difficult to recover, hindering interpretability.
This links CBMs with many works that treat model representations' \textit{identifiability} in Independent Component Analysis and Causal Representation Learning \citep{khemakhem2020ice, gresele2021independent, lippe2023biscuit, lachapelle2023synergies, von2024nonparametric}, where disentanglement plays a central role.
\textbf{\textit{Generalization}} is equally important, as it entails that CBMs generalize beyond the support of training data and yields sensible OOD behavior, \ie output the same predictions that would be obtained by using the ground-truth pair.
Therefore, the pairs $(\valpha, \vbeta)$ which satisfy both desiderata will be \textit{\textbf{equivalent}} to the ground-truth pair $(\mathrm{id},\vbeta^*)$ and equally valid solutions for the NeSy task.
Since we want $\valpha$ to be disentangled, this implies, in turn, a specific form for the map $\vbeta$, as shown by the next definition, which formalizes these desiderata:

\begin{definition}[Intended Semantics]
    \label{def:intended-semantics}
    A CBM $(\vf, \vomega) \in \calF \times \Omega$ entailing a pair $(\valpha, \vbeta) \in \calA \times \calB$ possesses the intended semantics
    if there exists a permutation $\pi: [k] \to [k]$ and $k$ element-wise invertible functions $\psi_1, \ldots, \psi_k$ such that:
    \begin{align}
        \valpha(\vg) =
            (\vpsi \circ \vP_\pi \circ \mathrm{id} )(\vg)
        \qquad
        \qquad
        \vbeta(\vc) = 
            (\vbeta^* \circ \vP_\pi^{-1} \circ \vpsi^{-1} ) (\vc)
        \label{eq:aligned-concepts-knowledge}
    \end{align}
    Here, $\vP_\pi: \calC \to \calC$ is the permutation matrix induced by $\pi$ and $\vpsi(\vc) := (\psi_1(c_1), \ldots, \psi_k(c_k))$. In this case, we say that $(\valpha, \vbeta)$ is equivalent to $(\mathrm{id}, \vbeta^*)$, \ie $(\valpha, \vbeta) \sim (\mathrm{id}, \vbeta^*)$ 
\end{definition}

Intended semantics holds if the learned concepts $\vC$ match the ground-truth concepts $\vG$ modulo simple invertible transformations -- like reordering and negation (\cref{eq:aligned-concepts-knowledge}, left) -- and the inference layer \textit{undoes} these transformations before applying the ``right'' inference layer $\vbeta^*$ (\cref{eq:aligned-concepts-knowledge}, right).
In particular, \cref{eq:aligned-concepts-knowledge} (left) guarantees disentanglement of the concepts, ensuring that each learned concept corresponds to a distinct ground-truth concept up to an invertible transformation, matching the notion of \citep{lachapelle2023synergies}.
A similar equivalence relation was analyzed for continuous representations in energy-based models, including supervised classifiers, by \citet{khemakhem2020ice}. 
CBMs satisfying these conditions are \textit{\textbf{equivalent}} -- specifically by the equivalence relation $\sim$, see \cref{sec:proof-equivalence-relation} -- to the ground-truth pair $(\mathrm{id}, \vbeta^*)$; see \cref{fig:pair-of-functions} (middle) for an illustration.
In \cref{lemma:intended-semantics-optima}, we prove that models with the intended semantics yield the same predictions of the ground-truth pair for all $\vg \in \calG$.

Training a (NeSy) CBM via maximum likelihood does not guarantee it will embody intended semantics.
We denote these failure cases as \textit{\textbf{joint reasoning shortcuts}} (JRSs):

\begin{definition}[Joint Reasoning Shortcut]
    \label{def:jrs}
    Take a CBM $(\vf, \vomega) \in \calF \times \Omega$ that attains optimal log-likelihood (\cref{eq:max-loglikelihood}).  The pair $(\valpha, \vbeta)$ entailed by it (\cref{eq:def-alpha,eq:def-beta}) is a JRSs if it does not possess the intended semantics
    (\cref{def:intended-semantics}), \ie $(\valpha, \vbeta) \not \sim (\mathrm{id}, \vbeta^*)$.
\end{definition}

JRSs can take different forms.
First, even if the learned inference layer $\vbeta$ matches (modulo $\vP_\pi$ and $\vpsi$) the ground-truth one $\vbeta^*$, $\valpha$ might not match (also modulo $\vP_\pi$ and $\vpsi$) the identity.  This is analogous to regular RSs in NeSy CBMs (\cref{sec:preliminaries}), in that the learned concepts do not reflect the ground-truth ones: while this is sufficient for high in-distribution performance (the training likelihood is in fact optimal), it may yield poor OOD behavior.
Second, even if $\valpha$ matches the identity, the inference layer $\vbeta$ might not match the ground-truth one $\vbeta^*$.  In our experiments \cref{sec:experiments}, we observe that this often stems from \textit{simplicity bias} \citep{yang2024identifying}, \ie the CBM's inference layer tends to acquire specialized functions $\vbeta$ that are much simpler than $\vbeta^*$, leading to erroneous predictions OOD.
Finally, neither the inference layer $\vbeta$ nor the map $\valpha$ might match the ground-truth, opening the door to additional failure modes.  Consider the following example:

\begin{example}
    \label{ex:biased_MNISTSumXor}
    Consider a \MNISTSumXor problem where the training set consists of %
    all possible unique (even, even), (odd, odd), and (odd, even) pairs of MNIST digits,
    as in \cref{fig:sum-parity}. A CBM trained on this data would achieve perfect accuracy by learning a JRS that extracts the parity of each input digit, yielding two binary concepts in $\{0, 1\}$, and computes the difference between these two concepts. This JRS involves \textbf{much simpler concepts and knowledge} compared to the ground-truth ones.  It also mispredicts all OOD pairs where the even digits come before odd digits.
\end{example}

While the presence of JRSs undermines both interpretability and OOD generalization, their absence alone does not guarantee OOD robustness. However, if JRSs are present, the likelihood of OOD failure increases substantially. For example, consider~\cref{ex:biased_MNISTSumXor}, where a CBM achieves perfect training accuracy by learning a shortcut that captures only the parity of each digit rather than the full ground-truth concepts. This shortcut fails on OOD pairs; for example, given an (even, odd) pair not present in the training set, the model wrongly outputs $-1$ as the final label. %

\subsection{Theoretical Analysis}
\label{sec:theoretical-analysis}

We now count the \textit{deterministic} JRSs admitted by a task, \ie those that lie on the vertices of $\calA$ and $\calB$: $(\valpha, \vbeta) \in \Vset{\calA} \times \Vset{\calB}$. We then show \cref{thm:implications} that this number determines whether general (non-deterministic) JRSs exist.

\begin{restatable}[Informal]{theorem}{countjrss}
    \label{thm:count-jrss}
    Under \cref{assu:concepts,assu:labels}, the number of deterministic JRSs is:
    \begin{align}
        \textstyle
        &
        \sum_{(\valpha, \vbeta)} \Ind{
             \bigwedge_{\vg \in \supp(\vG)}
            (\vbeta \circ \valpha)(\vg)
                =
                \vbeta^* (\vg)
        } - C[\calG]
        \label{eq:jrs-count}
    \end{align}  
    where the sum runs over $\Vset{\calA} \times \Vset{\calB}$, 
    and $C[\calG]$ counts the pairs with intended semantics. 
\end{restatable}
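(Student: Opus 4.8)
The plan is to reduce the count of deterministic JRSs to a set difference: the deterministic pairs attaining optimal likelihood, minus those among them that possess intended semantics. By \cref{def:jrs}, a deterministic JRS is precisely a pair $(\valpha, \vbeta) \in \Vset{\calA} \times \Vset{\calB}$ that (i) attains the optimum of \cref{eq:max-loglikelihood} and (ii) satisfies $(\valpha, \vbeta) \not\sim (\id, \vbeta^*)$. So if I can show that the indicator in \cref{eq:jrs-count} exactly characterizes condition (i) over the vertices, and that the intended-semantics pairs form a subset of the optimal ones counted by $C[\calG]$, then the number of deterministic JRSs is the stated difference.

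First I would characterize the optimal deterministic pairs, isolating the claim that for $(\valpha, \vbeta) \in \Vset{\calA} \times \Vset{\calB}$, attaining the optimum of \cref{eq:max-loglikelihood} is equivalent to $(\vbeta \circ \valpha)(\vg) = \vbeta^*(\vg)$ for every $\vg \in \supp(\vG)$, reading $\vbeta \circ \valpha$ by identifying the vertex $\valpha(\vg)$ with the concept value on which it places unit mass. By \cref{lemma:abstraction-from-lh}, under \cref{assu:concepts,assu:labels} a CBM is optimal iff it reproduces $p^*(\vX, \vY)$, equivalently iff its induced label distribution matches $p^*(\vY \mid \vg) = \vbeta^*(\vg)$ on the support. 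For a deterministic pair the induced label distribution on ground-truth $\vg$ is the point mass $(\vbeta \circ \valpha)(\vg)$, and since \cref{assu:labels} makes $\vbeta^*(\vg)$ itself a point mass, matching forces pointwise equality; any mismatch assigns zero probability to a positive-probability label and sends the log-likelihood to $-\infty$, hence cannot be optimal. For the converse, any vertex map $\valpha$ is realizable by a concept extractor — take $\vf = \valpha \circ \vf^*$ with $\vf^*$ the ground-truth extractor from \cref{assu:concepts}, which recovers $\vg$ from every $\vx$ in the support of $p^*(\vX \mid \vg)$ — and any $\vbeta \in \Vset{\calB}$ by some $\vomega$, so in the non-parametric classes every pair meeting the support condition is induced by an optimal $(\vf, \vomega)$. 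This establishes that the summand in \cref{eq:jrs-count} counts exactly the optimal deterministic pairs.

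It then remains to subtract the intended-semantics pairs. By \cref{def:intended-semantics} these have $\valpha = \vpsi \circ \vP_\pi \circ \id$ and $\vbeta = \vbeta^* \circ \vP_\pi^{-1} \circ \vpsi^{-1}$, which are deterministic (a relabelled identity and a relabelled $\vbeta^* \in \Vset{\calB}$), so they lie in $\Vset{\calA} \times \Vset{\calB}$; and by \cref{lemma:intended-semantics-optima} they satisfy $\vbeta \circ \valpha = \vbeta^*$ on all of $\calG$, in particular on $\supp(\vG)$, so each is already counted by the indicator. Hence they form a genuine subset of the optimal deterministic pairs, of cardinality $C[\calG]$ by definition, and removing them leaves exactly the deterministic pairs that are optimal but not equivalent to $(\id, \vbeta^*)$ — i.e. the deterministic JRSs — which yields \cref{eq:jrs-count}.

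The main obstacle I expect is the optimality characterization in the second step: being careful that the optimum is attained (so \cref{def:jrs} is well-posed, with $(\id, \vbeta^*)$ giving the maximal log-likelihood $0$) and that the abuse of notation $\vbeta \circ \valpha$ on vertices is consistent with the genuine likelihood $\vomega \circ \vf$ averaged over $\vx \sim p^*(\vX \mid \vg)$; the reduction from distributions to values hinges on $\valpha(\vg)$ being a vertex and on \cref{assu:labels} making the target a point mass, so I would state this as the key lemma. A secondary point is confirming that distinct intended-semantics parametrizations are counted without double-counting, so that $C[\calG]$ really is the cardinality of the subset being removed.
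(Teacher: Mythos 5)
Your proposal is correct and follows essentially the same route as the paper's proof: characterize the likelihood-optimal deterministic pairs via \cref{lemma:abstraction-from-lh} (the paper packages this as \cref{lemma:deterministic-optima}), count them with the indicator sum, and subtract the intended-semantics pairs, which by \cref{def:intended-semantics} and \cref{lemma:intended-semantics-optima} are deterministic and optimal, hence a subset of what the indicator counts. The only substantive extra content in the paper's (formal) proof is the explicit combinatorial evaluation $C[\calG] = \prod_{\xi \in s\calG} m(\xi)! \times \prod_{i=1}^k |\calG_i|!$, which resolves the double-counting concern you flag but is not required by the informal statement you were asked to prove.
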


All proofs and the definition of $C[\calG]$ can be found in \cref{sec:supp-theory}.
Intuitively, the first count includes all deterministic pairs $(\valpha, \vbeta)$ that achieve maximum likelihood on the training set (\ie the predicted labels $(\vbeta \circ \valpha)(\vg)$ matches the ground-truth one $\vbeta^*(\vg)$ for all ground-truth concepts $\vg$ appearing in the data), while the second term subtracts those $(\valpha, \vbeta)$ that possess the intended semantics as per \cref{def:intended-semantics}.
A positive count implies that a CBM trained via maximum likelihood \textit{can} learn a deterministic JRS.
Notice that the count of JRSs also depends on the choice of the concept space $\calG$: a large number of fine-grained ground-truth concept may increase the number of JRSs, while coarser-grained concepts that are lower in number may reduce them.

As a sanity check, we show that if prior knowledge $\BK$ is provided -- fixing the inference layer to $\vbeta^*$ -- the number of deterministic JRSs in fact matches that of RSs, as expected:

\begin{restatable}{corollary}{countrss}
    \label{cor:count-rss}
    Consider a fixed $\vbeta^* \in \Vset{\calB}$.
    Under \cref{assu:concepts,assu:labels}, the number of deterministic JRSs $(\valpha, \vbeta^*) \in \Vset{\calA} \times \Vset{\calB}$ is:
    \[
        \textstyle
        \sum_{\valpha \in \Vset{\calA}} \Ind{
             \bigwedge_{\vg \in \supp(\vG)}
            (\vbeta^* \circ \valpha)(\vg)
                =
                \vbeta^* (\vg)
        }
         \textstyle
         - 1 
        \label{eq:rss-count}
    \]
    This matches the count for deterministic RSs in \citep{marconato2023not}.
\end{restatable}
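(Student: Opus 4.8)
The plan is to specialize \cref{thm:count-jrss} to the slice of the solution space in which the inference layer is pinned to the given $\vbeta^*$, and then read off the two resulting terms. First I would observe that fixing $\vbeta = \vbeta^*$ eliminates the outer sum over $\Vset{\calB}$: the double sum over $\Vset{\calA} \times \Vset{\calB}$ in \cref{eq:jrs-count} collapses to a single sum over $\valpha \in \Vset{\calA}$, and the optimality indicator $\bigwedge_{\vg \in \supp(\vG)} (\vbeta \circ \valpha)(\vg) = \vbeta^*(\vg)$ specializes to $\bigwedge_{\vg \in \supp(\vG)} (\vbeta^* \circ \valpha)(\vg) = \vbeta^*(\vg)$. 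By \cref{lemma:abstraction-from-lh} (which invokes \cref{assu:concepts,assu:labels}), these deterministic pairs are exactly the maximum-likelihood optima compatible with the fixed knowledge, so this sum is precisely the first term of \cref{eq:rss-count}. This step is pure bookkeeping and should go through directly.

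The substance lies in evaluating the correction term, \ie the number of pairs $(\valpha, \vbeta^*)$ possessing the intended semantics (\cref{def:intended-semantics}), which the corollary claims equals $1$. I would first note that the ground-truth pair $(\id, \vbeta^*)$ always qualifies, via the trivial choice $\pi = \id$ and $\vpsi = \id$, and is itself optimal and hence already counted by the first term. The claim is then that it is the unique intended-semantics pair lying in the slice. Unfolding \cref{def:intended-semantics} with $\vbeta$ forced to equal $\vbeta^*$, a pair $(\valpha, \vbeta^*)$ has intended semantics iff there exist $(\pi, \vpsi)$ with $\valpha = \vpsi \circ \vP_\pi$ and $\vbeta^* = \vbeta^* \circ \vP_\pi^{-1} \circ \vpsi^{-1}$; the second equation says the transformation $\vP_\pi^{-1} \circ \vpsi^{-1}$ must leave $\vbeta^*$ invariant. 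I would argue that the restriction of $C[\calG]$ to the slice $\{\vbeta = \vbeta^*\}$ contributes exactly $1$, namely the ground-truth representative, yielding the overall count in \cref{eq:rss-count}.

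The main obstacle is precisely this uniqueness bookkeeping. Nontrivial symmetries of $\vbeta^*$ -- transformations $\vP_\pi^{-1} \circ \vpsi^{-1}$ that fix it, such as parity-preserving relabelings in \MNISTSumXor -- also generate intended-semantics solutions distinct from $(\id, \vbeta^*)$, so one must be careful whether such symmetric maps are folded into the single subtracted ground-truth reference or are instead regarded as shortcuts. I expect the cleanest route is to show that, in the fixed-knowledge setting, the correction $C[\calG]$ collapses to the ground-truth representative alone, matching exactly the convention by which \citet{marconato2023not} subtract the single ground truth when counting deterministic RSs. Verifying that the resulting expression is then term-for-term the formula for deterministic RSs in that work closes the corollary.
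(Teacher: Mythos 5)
Your proposal is correct and follows essentially the same route as the paper's proof: replace $\Vset{\calB}$ by $\{\vbeta^*\}$ in \cref{thm:count-jrss}, so the double sum collapses to a single sum over $\valpha \in \Vset{\calA}$, and then subtract the lone intended-semantics pair $(\id, \vbeta^*)$. The uniqueness subtlety you flag (transformations $\vP_\pi^{-1} \circ \vpsi^{-1}$ that fix $\vbeta^*$) is not argued away in the paper either; its proof resolves it exactly by the convention you anticipate, simply declaring that in the regular-RS context the only pair possessing the intended semantics is $(\id, \vbeta^*)$, which is precisely what makes the resulting count match \citet{marconato2023not}.
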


\cref{cor:count-rss} implies that when $|\Vset{\calB}| > 1$, the number of deterministic JRSs in \cref{eq:jrs-count} can be much \textbf{\textit{larger}} than the number of deterministic RSs (\cref{eq:rss-count}).
For example, in \MNISTSumXor with digits restricted to the range $[0, 4]$ there exist $64$ RSs but about $100$ \textit{thousand} JRSs, and the gap increases as we extend the range $[0, N]$.
An in-depth analysis appears in \cref{sec:additional-results-counts}.

Next, we show that whenever the number of deterministic JRSs in \cref{eq:jrs-count} is zero, there exist \textit{\textbf{no JRSs at all}}, including non-deterministic ones.
This however only applies to CBMs that satisfy the following natural assumption:

\begin{assumption}[Extremality]
    \label{assu:monotonic}
    The inference layer $\vomega \in \Omega$ satisfies extremality if, for all $\lambda \in (0,1)$ and for all $\vc \neq \vc' \in \calC$ such that %
    $\argmax_{\vy \in \calY} \vomega (\Ind{\vC = \vc})_{\vy} \neq \argmax_{\vy \in \calY} \vomega(\Ind{\vC = \vc'})_{\vy}$, it holds:
    \[
        \max_{\vy \in \calY} \vomega (  \lambda \Ind{\vC = \vc} + (1- \lambda) \Ind{\vC = \vc'}  )_{\vy}
        <
        \max \left( 
        \max_{\vy \in \calY} \vomega (\Ind{\vC = \vc})_{\vy} , \max_{\vy \in \calY} \vomega(\Ind{\vC = \vc'})_{\vy}
        \right)
        \label{eq:max-condition}
        \nonumber
    \]
\end{assumption}

Intuitively, a CBM satisfies this assumption if its inference layer $\vomega$ is ``peaked'':  for any two concept vectors $\vc$ and $\vc'$ yielding distinct predictions, the label probability output by $\vomega$ for any mixture distribution thereof is no larger than the label probability that it associates to $\vc$ and $\vc'$. That is, distributing probability mass across concepts does not increase label likelihood. %
While this assumption does not hold for general CBMs, we show in \cref{sec:models-satisfying-assumption-3} that it holds for popular architectures, including most of those that we experiment with.
Under \cref{assu:monotonic}, we have: %

\begin{restatable}[Identifiability]{theorem}{thmimplications}
    \label{thm:implications}
    Under \cref{assu:concepts,assu:labels}, %
    \textbf{if}
    the number of deterministic JRSs (\cref{eq:jrs-count}) is zero
    \textbf{then}
    every CBM $(\vf, \vomega) \in \calF \times \Omega$ 
    satisfying \cref{assu:monotonic} 
    that attains maximum log-likelihood (\cref{eq:max-loglikelihood}) 
    possesses the intended semantics (\cref{def:intended-semantics}). 
    That is, the pair $(\valpha, \vbeta) \in \calA \times \calB$ entailed by each such CBM is equivalent to the ground-truth pair $(\mathrm{id}, \vbeta^*)$, \ie $(\valpha, \vbeta) \sim (\mathrm{id}, \vbeta^*)$.
\end{restatable}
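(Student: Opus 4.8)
The plan is to prove the implication directly, in three stages: reduce to the data support via maximum likelihood, use \cref{assu:monotonic} to constrain the \emph{shape} of the learned maps, and finally promote these constraints into genuine determinism and the intended form by invoking the hypothesis that the deterministic count in \cref{eq:jrs-count} is zero. Throughout I would exploit that $\calA$ and $\calB$ are simplices, so $\valpha$ and $\vbeta$ decompose into convex combinations of vertices in $\Vset{\calA}$ and $\Vset{\calB}$. First I would invoke \cref{lemma:abstraction-from-lh}: any $(\vf,\vomega)$ optimal for \cref{eq:max-loglikelihood} matches $p^*$ exactly, so for every $\vg\in\supp(\vG)$ and $p^*$-almost every $\vx\sim p^*(\vX\mid\vg)$ the output $\vomega(\vf(\vx))$ equals the point mass $\vbeta^*(\vg)$, whose $\argmax$ I call $y^*(\vg)$. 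Then \cref{assu:monotonic} does the structural work: since $\vomega(\vf(\vx))$ is a \emph{vertex} of $\Delta_\calY$ (maximal coordinate equal to $1$), extremality forbids $\vf(\vx)$ from placing mass on two concept values with different predicted labels, as any such mixture would have maximal label probability strictly below $1$. Hence $\supp(\vf(\vx))$, and therefore $\supp(\valpha(\vg))$, lies inside the set of concept values $\vc$ with $\argmax\vbeta(\vc)=y^*(\vg)$.

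The core step upgrades this ``same-label support'' condition into determinism. Given the possibly non-deterministic optimum $(\valpha,\vbeta)$, I would round it: set $\tilde\vbeta(\vc):=\Ind{\vY=\argmax\vbeta(\vc)}$ and let $\tilde\valpha$ select, for each $\vg\in\supp(\vG)$, any $\tilde\valpha(\vg)\in\supp(\valpha(\vg))$. By the previous paragraph $\tilde\vbeta(\tilde\valpha(\vg))=\Ind{\vY=y^*(\vg)}=\vbeta^*(\vg)$, so $(\tilde\valpha,\tilde\vbeta)\in\Vset{\calA}\times\Vset{\calB}$ is a deterministic pair matching on the support, i.e.\ a summand of the first term of \cref{eq:jrs-count}. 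Because the count is zero, every such pair is instead counted by $C[\calG]$, hence has the intended semantics of \cref{def:intended-semantics}; in particular its concept map is a bijection $\vpsi\circ\vP_\pi$. Now suppose for contradiction that some $\valpha(\vg_0)$ is not a point mass: then I can pick two selections $\tilde\valpha,\tilde\valpha'$ agreeing everywhere except at $\vg_0$. Both must be bijections, but two bijections of $\calG$ differing in a single coordinate cannot both be injective, since the changed value is already attained elsewhere. This contradiction forces $\valpha$ to be deterministic on $\supp(\vG)$; the identical one-coordinate swap at an OOD input $\vg_0\notin\supp(\vG)$ (where altering $\tilde\valpha(\vg_0)$ preserves support-matching) forces $\valpha$ to coincide with a single intended-semantics bijection on all of $\calG$.

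With $\valpha=\vpsi\circ\vP_\pi$ deterministic, each $\valpha(\vg)$ is a vertex of $\Delta_\calC$, so the average $\bbE_{\vx}[\vf(\vx)]$ being a vertex forces $\vf(\vx)=\Ind{\vC=\valpha(\vg)}$ for almost every $\vx$; hence $\vomega(\vf(\vx))=\vbeta(\valpha(\vg))$ and maximum likelihood gives $\vbeta(\valpha(\vg))=\vbeta^*(\vg)$ for all $\vg\in\supp(\vG)$. As $\valpha$ is a bijection, this pins $\vbeta=\vbeta^*\circ\vP_\pi^{-1}\circ\vpsi^{-1}$ on $\valpha(\supp(\vG))$; running the rounding-and-swap argument once more on $\vbeta$ extends the identity to the remaining concept values, exactly when the support is full and otherwise up to the prediction-level equivalence $\sim$. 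Together with the $\valpha$ identity this is precisely $(\valpha,\vbeta)\sim(\id,\vbeta^*)$.

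The hard part will be exactly this promotion from the \emph{vertex} count of \cref{thm:count-jrss} to a statement about the \emph{entire} simplex. On its own \cref{assu:monotonic} only penalizes mixing concepts with \emph{different} labels, so it never forbids a concept map that spreads mass over several equi-predicting concepts; determinism does not follow from extremality alone. The delicate point is therefore the interplay between extremality (which confines $\supp(\valpha(\vg))$ to a single label class) and the rigidity supplied by the zero count (which, through the one-coordinate swap, shows that any real spread would surface as a non-injective deterministic JRS). Getting the OOD bookkeeping right -- both for $\valpha$ on unseen $\vg$ and for $\vbeta$ on concept values outside $\valpha(\supp(\vG))$ -- is where the precise meaning of $C[\calG]$ and of the equivalence $\sim$ must be used most carefully.
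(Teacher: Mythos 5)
You take a genuinely different route from the paper, and your argument is sound in essentially the same regime where the paper's own proof is. The paper argues by a direct contradiction at the likelihood level: after using \cref{lemma:abstraction-from-lh} to force the learned $\vbeta$ into $\Vset{\calB}$, it fixes a deterministic \emph{optimal} pair $(\va',\vb')$, writes a candidate non-deterministic concept map as a two-vertex mixture $\lambda\va'+(1-\lambda)\va''$, uses the zero count to conclude that $(\va'',\vb')$ must mispredict some $\hat\vg$ in the support, and then kills both possible forms of $\vf$ (vertex-valued but mixed across inputs, via a positive-measure set of wrong predictions; genuinely soft, via \cref{assu:monotonic}). You wire the same three ingredients -- the lemma, extremality, and the zero count -- into a rounding-plus-rigidity argument instead: extremality confines $\supp(\valpha(\vg))$ to a single $\argmax$-class, every selection from these supports yields a deterministic matching pair, the zero count upgrades each such selection to an intended-semantics \emph{bijection}, and the pigeonhole fact that two bijections of $\calG$ onto $\calC$ cannot differ at exactly one point collapses all supports to singletons; re-running the swap on $\vbeta$ then pins the inference layer. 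Your route buys an explicit treatment of mixtures over equi-predicting concept values (where \cref{assu:monotonic} is silent) and avoids the paper's restriction to two-vertex mixtures anchored at an optimal $\va'$; the paper's route stays closer to the likelihood and needs no combinatorial step. One simplification you missed: your hedging about concept values outside $\valpha(\supp(\vG))$ is unnecessary, because if $\supp(\vG)\subsetneq\calG$, or if $\vbeta^*$ identified two supported $\vg$'s, one could build a deterministic matching pair without intended semantics (perturb $\vbeta^*$ off the support, or collapse the two equi-labelled $\vg$'s), contradicting the zero count; so the hypothesis already forces full support and injectivity of $\vbeta^*$, and your final step closes exactly rather than ``up to prediction-level equivalence''.

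One step deserves a concrete flag, although it mirrors a blind spot in the paper's own proof rather than introducing a new one. From ``$\vomega(\vf(\vx))$ is the point mass on $y^*(\vg)$'' and ``all elements of $\supp(\vf(\vx))$ share a common $\argmax$ label under $\vbeta$'' you infer that this common label \emph{equals} $y^*(\vg)$. That inference is not licensed by \cref{assu:monotonic}: extremality constrains $\vomega$ only on mixtures of \emph{differently}-predicting vertices, so a fully non-parametric $\vomega$ may, on a face spanned by equi-predicting vertices, output a label different from the vertices' common one -- in which case your rounded pair $(\tilde\valpha,\tilde\vbeta)$ does not match $\vbeta^*$ on the support and the count yields no contradiction. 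The paper avoids stating this claim only by never examining such configurations at all: it pairs mixtures exclusively with a $\vb'$ drawn from a deterministic optimal pair, so non-deterministic $\valpha$'s whose $\vbeta$ belongs to no optimal pair are left unaddressed there too. In effect, both proofs lean on an unstated consistency property of $\vomega$ on equi-predicting faces (satisfied, for instance, by the probabilistic-logic and \texttt{DSL} layers analyzed in \cref{sec:models-satisfying-assumption-3}, where $\vomega$ is affine in the concept distribution); within that regime your proof is complete, and outside it your argument fails exactly where the paper's does.
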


Clearly, a similar conclusion does not hold for models that do not satisfy \cref{assu:monotonic}: even when deterministic JRSs are absent, a CBM can still be affected by non-deterministic JRSs.

\section{Practical Solutions}
\label{sec:mitigations}

By \cref{thm:implications}, getting rid of \textit{deterministic} JRSs from a task prevents CBMs trained via maximum log-likelihood to learn JRS altogether.
The key question is how to make the JRSs count be zero. As previously explored in RSs, pairing the maximum likelihood objective with other well-known penalties in the literature can limit optimal $\valpha$'s for the joint training objective. 
Several mitigation strategies for RSs have shown to decrease the count of the $\valpha$'s, in some cases leading to zeroing deterministic RSs \citep{marconato2023not, yang2024analysis}. 
We report the count reduction of different mitigations in~\cref{sec:update-counts}.%

\textbf{Supervised strategies}.  The most direct strategies for controlling the semantics of learned CBMs rely on supervision.
\textbf{\textit{Concept supervision}} is the go-to solution for improving concept quality in CBMs \citep{koh2020concept, chen2020concept, zarlenga2022concept, marconato2022glancenets} and NeSy-CBMs \citep{bortolotti2024benchmark, yang2024analysis}.
Full concept supervision prevents learning $\valpha \neq \mathrm{id}$.
However, this does not translate into guarantees on the inference layer $\vbeta$, at least in tasks affected by confounding factors such as spurious correlations among concepts \citep{stammer2021right}.
E.g., if the $i$-th and $j$-th ground-truth concepts are strongly correlated, the inference layer $\vbeta$ can interchangeably use either, regardless of what $\vbeta^*$ does.

Another option is employing \textbf{\textit{knowledge distillation}} \citep{bucilua2006model}, that is, supplying supervision of the form $(\vg, \vy)$ to encourage the learned knowledge $\vbeta$ to be close to $\vbeta^*$ for the supervised $\vg$'s.
This supervision is available for free provided concept supervision is available, but can also be obtained separately, \eg by interactively collecting user interventions \citep{shin2023closer, zarlenga2024learning, steinmann2024learning}.
This strategy cannot avoid \textit{all} JRSs because even if $\vbeta = \vbeta^*$, the CBM may suffer from regular RSs (\ie $\valpha$ does not match the identity function $\mathrm{id}$).

Another option is to fit a CBM on \textit{\textbf{multiple tasks}} \citep{caruana1997multitask} sharing concepts.
It is known that a NeSy-CBM attaining optimal likelihood on multiple NeSy tasks with different prior knowledges is also optimal for the \textit{single} NeSy task obtained by conjoining those knowledges \citep{marconato2023not}:  this more constrained knowledge better steers the semantics of the learned concepts, ruling out potential JRSs.
Naturally, collecting a sufficiently large number of diverse tasks using the same concepts can be highly non-trivial, depending on the application.

\textbf{Unsupervised strategies}.  Many popular strategies for improving concept quality in (NeSy) CBMs are unsupervised.
A cheap but effective one when dealing with multiple inputs is to \textit{\textbf{process inputs individually}}, preventing mixing between their concepts.  E.g., In \MNISTSumXor one can apply the same digit extractor to each digit separately, while for images with multiple objects one can first segment them (\eg with YoloV11~\citep{JocherYOLO}) and then process the resulting bounding boxes individually.  This is extremely helpful for reducing, and possibly overcoming, RSs \citep{marconato2023not} and frequently used in practice \citep{manhaeve2018deepproblog, van2022anesi, daniele2023deep, tang2023perception}. 
We apply it in all our experiments.

Both supervised \citep{marconato2022glancenets} and unsupervised \citep{alvarez2018towards, li2018deep} CBMs may employ a \textbf{\textit{reconstruction}} penalty \citep{baldi2012autoencoders,kingma2013auto} to encourage learning informative concepts.
Reconstruction can help prevent collapsing distinct ground-truth concepts into single learned concepts, \eg in \MNISTSumXor the odd digits cannot be collapsed together without impairing reconstruction.\footnote{This is provably the case \textit{context-style separation} \ie assuming concepts are independent of stylistic features like calligraphy \citep[Proposition 6]{marconato2023not}.}
Alternatively, one can employ \textbf{\textit{entropy maximization}} \citep{manhaeve2021neural} to spread concept activations evenly across the bottleneck.
This can be viewed as a less targeted but more efficient alternative to reconstruction, which becomes impractical for higher dimensional inputs.
Another option is \textbf{\textit{contrastive learning}} \citep{chen2020simple}, in that augmentations render learned concepts more robust to style variations \citep{von2021self}, \eg for MNIST-based tasks it helps to cluster distinct digits \citep{sansone2023learning}.
Unsupervised strategies all implicitly counteract \textit{simplicity bias} whereby $\valpha$ ends up collapsing.

\section{Case Studies}
\label{sec:experiments}

\begin{table}[!t]
    \begin{minipage}{0.4\linewidth}
        \caption{Results for \MNISTAdd.}
        \centering
        \tiny
        \setlength{\tabcolsep}{5pt}
\scalebox{.85}{
\begin{tabular}{lccccc}
    \toprule
    \textsc{Method}
        & \textsc{\FY} ($\uparrow$)
        & \textsc{\FC} ($\uparrow$)
        & \textsc{\Collapse} ($\downarrow$)
        & \textsc{\FK} ($\uparrow$)
    \\
    \midrule
    {\DPL} & $0.98 \pm 0.01$ & $0.99 \pm 0.01$ & $0.01 \pm 0.01$ & $-$ %
    \\
    \midrule
    \rowcolor[HTML]{EFEFEF}
    {\CBM} & $0.98 \pm 0.01$ & $0.99 \pm 0.01$ & $0.01 \pm 0.01$ & $1.00 \pm 0.01$ %
    \\
    \SENN & $0.97 \pm 0.01$ & $0.80 \pm 0.07$ & $0.01 \pm 0.01$ & $0.75 \pm 0.08$ 
    \\
    \rowcolor[HTML]{EFEFEF}
    {\DSL} & $0.96 \pm 0.02$ & $0.98 \pm 0.01$ & $0.01 \pm 0.01$ & $1.00 \pm 0.01$ %
    \\
    \DSLDPL & $0.91 \pm 0.09$ & $0.92 \pm 0.08$ & $0.01 \pm 0.01$ & $0.90 \pm 0.08$ %
    \\
    \rowcolor[HTML]{EFEFEF}
    \DSLBEARS & $0.76 \pm 0.15$ & $0.87 \pm 0.13$ & $0.01 \pm 0.01$ & $0.67 \pm 0.14$ %
    \\
    \bottomrule
\end{tabular}
}

        \label{tab:add-reduced}
    \end{minipage}%
    \hfill
    \begin{minipage}{0.55\linewidth}
        \centering
        \caption{Results for \MNISTSumXor.}
        \tiny
        \setlength{\tabcolsep}{5pt}
\scalebox{.85}{
\begin{tabular}{lcccccc}
    \toprule
    \textsc{Method}
        & \textsc{\FY} ($\uparrow$)
        & \textsc{\FC} ($\uparrow$)
        & \textsc{\Collapse} ($\downarrow$)
        & \textsc{\FK} ($\uparrow$)
    \\
    \midrule
    \DPL & $0.99 \pm 0.01$ & $0.43 \pm 0.08$ & $0.36 \pm 0.15$ & $-$  %
    \\
    \midrule
    \rowcolor[HTML]{EFEFEF}
    \CBM & $0.90 \pm 0.18$ & $0.09 \pm 0.03$ & $0.66 \pm 0.09$ & $0.54 \pm 0.04$
    \\
    \SENN & $0.99 \pm 0.01$ & $0.49 \pm 0.05$ & $0.01 \pm 0.01$ & $0.53 \pm 0.06$ %
    \\
    \rowcolor[HTML]{EFEFEF}
    \DSL & $0.94 \pm 0.03$ & $0.07 \pm 0.01$ & $0.80 \pm 0.01$ & $0.52 \pm 0.01$
    \\
    \DSLDPL  & $0.99 \pm 0.01$ & $0.07 \pm 0.01$ & $0.80 \pm 0.01$ & $0.50 \pm 0.05$
    \\
    \rowcolor[HTML]{EFEFEF}
    \DSLBEARS & $0.99 \pm 0.01$ & $0.30 \pm 0.03$ & $0.22 \pm 0.04$ & $0.36 \pm 0.09$ %
    \\
    \bottomrule
\end{tabular}
}

        \label{tab:sumparity-reduced}
    \end{minipage}
    
    \begin{minipage}{0.4\linewidth}
        \caption{Results for \CHans.}
        \centering
        \tiny
        \setlength{\tabcolsep}{5pt}
\scalebox{.85}{
\begin{tabular}{lcccc}
    \toprule
    \textsc{Method}
        & \textsc{\FY} ($\uparrow$)
        & \textsc{\FC} ($\uparrow$)
        & \textsc{\Collapse} ($\downarrow$)
        & \textsc{\FK} ($\uparrow$)
    \\
    \midrule
    \DPL & $0.99 \pm 0.01$ & $0.25 \pm 0.05$ & $0.57 \pm 0.02$ & $-$
    \\
    \midrule
    \rowcolor[HTML]{EFEFEF}
    \CBM & $0.99 \pm 0.01$ & $0.19 \pm 0.06$ & $0.35 \pm 0.09$ & $0.01 \pm 0.02$
    \\
    \DSLDPL & $0.99 \pm 0.01$ & $0.22 \pm 0.04$ & $0.57 \pm 0.03$ & $0.01 \pm 0.01$
    \\
    \bottomrule
\end{tabular}
}

        \label{tab:clevr-reduced}
    \end{minipage}%
    \hfill
    \begin{minipage}{0.55\linewidth}
        \caption{\CBM on biased \MNISTSumXor.}
        \centering
        \tiny
        \setlength{\tabcolsep}{5pt}
\scalebox{.8}{
\begin{tabular}{llcccc}
    \toprule
    & & \multicolumn{2}{c}{\textsc{ID}} & \multicolumn{2}{c}{\textsc{OOD}}
    \\
    \cmidrule(l){3-4} \cmidrule(l){5-6}
    \textsc{C-Sup}
        & \textsc{K-Sup}
        & \textsc{\FY} $(\uparrow)$
        & \textsc{\FK} $(\uparrow)$
        & \textsc{\FY} $(\uparrow)$
        & \textsc{\FK} $(\uparrow)$
    \\
    \midrule
    $0\%$ & $0\%$ & $0.99 \pm 0.01$ & $0.55 \pm 0.05$ & $0.01 \pm 0.01$ & $0.47 \pm 0.07$
    \\
    \rowcolor[HTML]{EFEFEF}
    $0\%$ & $100\%$ & $0.99 \pm 0.01$ & $0.56 \pm 0.06$ & $0.01 \pm 0.01$ & $0.58 \pm 0.08$
    \\
    $100\%$ & $0\%$ & $0.97 \pm 0.01$ & $0.88 \pm 0.04$ & $0.40 \pm 0.14$ & $0.31 \pm 0.12$
    \\
    \rowcolor[HTML]{EFEFEF}
    $100\%$ & $100\%$ & $0.97 \pm 0.01$ & $0.95 \pm 0.01$ & $0.97 \pm 0.02$ & $0.69 \pm 0.27$
    \\
    \bottomrule
\end{tabular}
}

        \label{tab:ood}
    \end{minipage}

    \centering
    \begin{minipage}{0.55\linewidth}
        \caption{Results for \texttt{BDD-OIA}.}
        \centering
        \tiny
        \setlength{\tabcolsep}{5pt}
\scalebox{.95}{
\begin{tabular}{lcccc}
    \toprule
    \textsc{Method}
        & \textsc{\FY} ($\uparrow$)
        & \textsc{\FC} ($\uparrow$)
        & \textsc{\Collapse} ($\downarrow$)
        & \textsc{\FK} ($\uparrow$)
    \\
    \midrule
    \DPL & $0.69 \pm 0.03$ & $0.44 \pm 0.01$ & $0.88 \pm 0.01$ & $-$
    \\
    \rowcolor[HTML]{EFEFEF}
    \CBM & $0.62 \pm 0.03$ & $0.43 \pm 0.02$ & $0.06 \pm 0.02$ & $0.42 \pm 0.03$
    \\
    \bottomrule
\end{tabular}
}

        \label{tab:boia-paper}
    \end{minipage}
\end{table}

We tackle the following key research questions: \textbf{Q1}. Are CBMs affected by JRSs in practice?  \textbf{Q2}.  Do JRSs affect interpretability and OOD behavior? \textbf{Q3}. Can existing mitigation strategies prevent JRSs?
\cref{sec:implementation-details} reports additional details about the tasks, architectures, and model selection. 

\textbf{Models.}  We evaluate several (also NeSy) CBMs.
DeepProbLog (\underline{\DPL})~\citep{manhaeve2018deepproblog, manhaeve2021neural} is the \textit{only} method supplied with the ground-truth knowledge $\BK$, and uses probabilistic-logic reasoning to ensure predictions are consistent with it.
\underline{\CBM} is a Concept-Bottleneck Model \citep{koh2020concept} with no supervision on the concepts.
Deep Symbolic Learning (\underline{\DSL})~\citep{daniele2023deep} is a SOTA NeSy-CBM that learns concepts and symbolic knowledge jointly; it implements the latter as a truth table and reasoning using fuzzy logic.
We also consider two variants:  \underline{\DSLDPL} replaces fuzzy with probabilistic logic, while \underline{\DSLBEARS} wraps \DSLDPL within BEARS~\citep{marconato2024bears}, an ensemble approach for handling regular RSs.  In short, \DSLBEARS consists of a learned inference layer and multiple concept extractors. The former is learned only once, the latter are learned sequentially.
We also evaluate Self-explainable Neural Networks (\underline{\SENN}) \citep{alvarez2018towards}, unsupervised CBMs that include a reconstruction penalty.  Since they do not satisfy our assumptions, they are useful to empirically assess the generality of our remarks. %
In our experiments, we train all CBM variants in a joint manner \citep{koh2020concept} using only label supervision and do not rely on foundation models for gathering annotations \citep{oikarinen2022label, yang2023language, srivastava2024vlgcbm}. %
For additional details, see~\cref{sec:implementation-details}.

\textbf{Data sets.}  To assess both learned concepts and inference layer, we use three representative NeSy tasks with explicit concept annotations and prior knowledge.
\underline{\MNISTAdd}~\citep{manhaeve2018deepproblog} involves predicting the sum of two MNIST digits, and serves as a sanity check as it provably contains no RSs.
\texttt{\underline{MNIST-SumParit}y} is similar except we have to predict the \textit{parity} of the sum, as in \cref{fig:sum-parity}, and admits many JRSs.
We also include a variant of \underline{\CHans} \citep{johnson2017clevr} in which images belong to $3$ classes as determined by a logic formula and only contain $2$ to $4$ objects, for scalability.
Finally, we carry out basic checks also on \underline{\BOIA} \citep{xu2020boia}, a real-world autonomous driving task %
where images are annotated with actions (\texttt{move\_forward}, \texttt{stop}, \texttt{turn\_left}, \texttt{turn\_right}) and 21 binary concepts.%

\textbf{Metrics.}  For each CBM, we evaluate predicted labels and concepts with the $F_1$ score (resp. \FY and \FC) on the test split.
Computing \FC requires to first align the predicted concepts to ground-truth annotations. 
We do so by using the Hungarian algorithm \citep{kuhn1955hungarian} to find a permutation $\pi$ that maximizes the Pearson correlation across concepts.
In this way, we directly evaluate \textit{disentanglement} of the concepts as per \cref{eq:aligned-concepts-knowledge} (left).
Concept collapse \Collapse quantifies to what extent the concept extractor blends different \textit{ground-truth} concepts together:  high collapse indicates that it predicts fewer concepts than expected.
We measure the quality of the learned inference layer $\vbeta$ as follows: for each input $\vx$, (1) we reorder the corresponding concept annotations $\vg$ using $\pi$, (2) feed the result to $\vbeta$, and (3) compute the $F_1$-score ($\FK$) of its predictions. 
Step (1) permutes and applies element-wise invertible transformations on the ground-truth concepts using the inverse of the map returned by Hungarian matching.
This evaluates whether the learned inference layer can correctly handle (an invertible transformation of) the ground-truth concept annotations, measuring the \textit{generalization} as per \cref{eq:aligned-concepts-knowledge} (right).%
This procedure is precisely detailed in \cref{sec:metrics-details}.

\textbf{Mitigation strategies}.  We evaluate all strategies discussed in \cref{sec:mitigations} as well as a combination of all unsupervised mitigations, denoted {\tt H+R+C} (entropy, reconstruction, contrastive). %
Implementations are taken verbatim from \citep{bortolotti2024benchmark}. For contrastive learning, we apply an InfoNCE loss to the predicted concept distribution with the augmentations suggested by~\citet{chen2020simple}. %
For knowledge distillation, the inference layer is fed ground-truth concepts and trained to optimize the log-likelihood of the corresponding labels.

\textbf{Q1: All tested CBMs suffer from JRSs and simplicity bias.} \label{q1} We analyze three learning tasks of increasing complexity.
\MNISTAdd provably contains no JRSs and thus satisfies the preconditions of \cref{thm:implications}.  Compatibly, all CBMs attain high label performance ($\FY \ge 90\%$) and high-quality semantics, as shown by the values of $\FC$ ($\ge 90\%$) and $\FK$ ($\ge 75\%$) seen in \cref{tab:add-reduced}.  The only exception is \DSLBEARS, which by design trades off performance for calibration.
However, in \MNISTSumXor and \CHans,\footnote{For \CHans, we focus on representative CBMs with high fit on the training data.} which are more complex, \textit{\textbf{all models lack intended semantics}}.
This is clear from \cref{tab:clevr-reduced,tab:sumparity-reduced}:  despite attaining high $\FY$ ($\ge 90\%$), the learned concepts and inference layer are low quality:  $\FC$ is always low -- it never crosses the $50\%$ mark -- and $\FK$ is at best around chance level.
Our results on \BOIA in {\cref{tab:boia-paper} show the very same trend.}%

The behavior of concept collapse strongly hints at \textit{simplicity bias}.
While no collapse takes place in \MNISTAdd ($\Collapse \le 0.01$, \ie digits are not compacted together), in \MNISTSumXor and \CHans collapse is omnipresent ($\Collapse \ge 0.22$), suggesting that CBMs are prone to simplicity bias, as expected.
\SENN is an outlier, likely due to the higher capacity of its inference layer (rather than reconstruction, which is entirely ineffective in \textbf{Q3}).
Even having access to the ground-truth knowledge is ineffective, as shown by the high degree of collapse displayed by \DPL.

\textbf{Q2: JRSs compromise OOD behavior and supervision only partially helps}. \label{q2} We evaluate the impact of JRSs and supervision on OOD behavior by training a \CBM on the biased version of \MNISTSumXor in \cref{fig:sum-parity}.  The in-distribution (ID) data comprise all (odd, odd), (even, even), and (odd, even) pairs of MNIST digits, while the OOD split contains all (even, odd) pairs.
\cref{tab:ood} reports performance under varying levels of concept and knowledge supervision.
While all models produce high-quality predictions in-distribution ($\FY \ge 0.97$), only the \CBM receiving complete supervision attains acceptable OOD predictions: for the others, $\FY \le 0.40$.  Even in this case, though, the inference layer still does not have intended semantics, as shown by $\FK \le 70\%$.

\textbf{Q3: Popular CBM and RS mitigations fall short}. \label{q3}Finally, we test mitigation strategies on \CBM trained on \MNISTSumXor and \CHans, ablating the amount of supervision.
The results for \MNISTSumXor in \cref{fig:curves-sumxor-reduced} show while concept supervision ($x$-axis) helps all metrics, adding reconstruction, entropy maximization, and contrastive learning to the mix brings no benefits for concept and knowledge ($\FC$, $\FK$) quality, not even when combined (orange curve).
Knowledge supervision is also ineffective (blue lines), likely because \MNISTSumXor and \CHans suffer from RSs even when the model is supplied prior knowledge.
Contrastive learning improves concept collapse ($\Collapse$): the orange and purple curves in \cref{fig:curves-sumxor-reduced} (right) show it can prevent CBMs from mixing concepts together.
The results for \CHans in \cref{sec:additional-results} show similar trends.

\begin{figure*}[!t]
    \centering
    \includegraphics[width=0.92\linewidth]{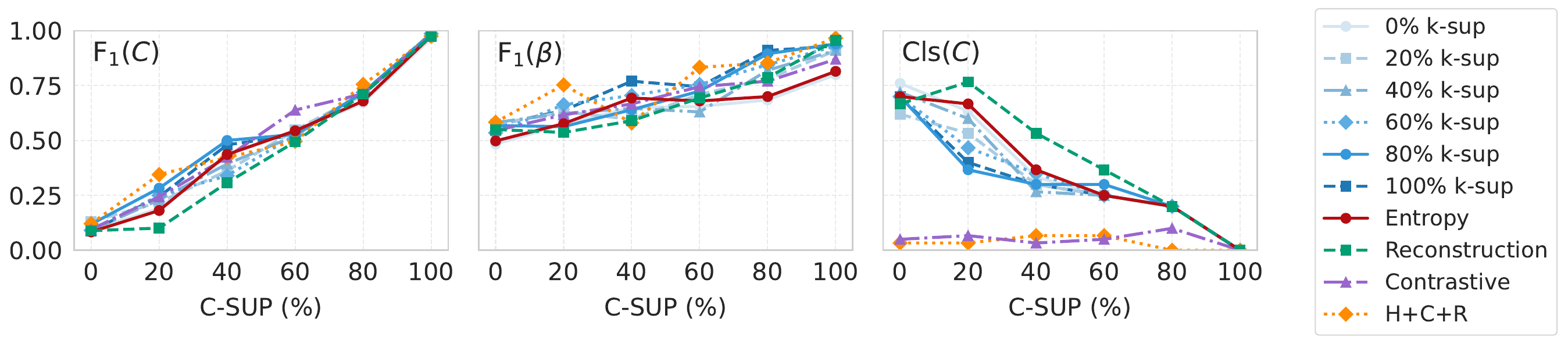}
    \caption{\textbf{Traditional mitigations have limited effect} on \CBM %
    for \MNISTSumXor.  The only outlier is contrastive learning (orange and purple), which consistently ameliorates concept collapse.}
    \label{fig:curves-sumxor-reduced}
\end{figure*}

\section{Related Work}
\label{sec:related-work}

\textbf{Shortcuts in machine learning}.  Shortcuts occur when machine learning models solve a prediction task by using features that correlate with but do not cause the desired output, leading to poor OOD behavior \citep{geirhos2020shortcut, teso2023leveraging, ye2024spurious, steinmann2024navigating}.  Existing work focuses on black-box models rather than CBMs.
Existing studies on the semantics of CBM concepts \citep{margeloiu2021concept, mahinpei2021promises, havasi2022addressing, raman2023concept} focus on individual failures but lack formal notion of intended semantics.  One exception is \citep{marconato2023interpretability} which, however, ignores the role of the inference layer.
We build on known results on reasoning shortcuts \citep{li2023learning, marconato2023not, wang2023learning, umili2024neural} which are restricted to NeSy-CBMs in which the prior knowledge is given and fixed.  Our analysis upgrades these results to general CBMs.  Furthermore, our simulations indicate that strategies that are effective for CBMs and RSs no longer work for JRSs.
At the same time, while NeSy approaches that learn prior knowledge and concepts jointly are increasingly popular \citep{wang2019satnet, yang2020neurasp, liu2023out, daniele2023deep, tang2023perception, wust2024pix2code}, existing work neglects the issue of shortcuts in this setting.  Our work partially fills this gap.
Shortcuts and JRSs are related but distinct. While shortcuts can induce JRSs, the converse is not true: a model can be affected by a JRS %
even if it does not rely on confounders, as shown in~\cref{fig:pair-of-functions}. This distinction also implies that mitigation strategies designed for vanilla shortcuts do not necessarily transfer to JRSs.%

\textbf{Relation to identifiability}.  Works in Independent Component Analysis and Causal Representation Learning focus on the identifiability of representations up to an equivalence relation, typically up to permutations and rescaling \citep{khemakhem2020variational, gresele2020incomplete, von2024nonparametric} or more general transformations \citep{roeder2021linear, buchholz2022function}.
The equivalence relation introduced along with intended semantics (\cref{def:intended-semantics}) establishes a specific connection between the maps $\valpha$ and $\vbeta$. This aligns with existing works that aim to identify representations and linear inference layers using supervised labels \citep{lachapelle2023synergies, fumero2023leveraging, bing2023invariance,  nielsen2024challenges,nielsen2025does}.
Moreover, while prior works primarily focus on identifying real-valued representations, we examine the identifiability of \textit{discrete} concepts \textit{and} of the inference layer.
\citep{rajendran2024causal} explores the identifiability of (potentially discrete) concepts linearly encoded in model representations but focuses on multi-environment settings.  Our results differ in that we show how concepts can be identified in CBMs by circumventing reasoning shortcuts.
A concurrent work by \citet{goyal2025causal} identifies latent concepts through label supervision and a sparsity regularization, showing that a variant of CBNMs can provably recover concepts with the intended semantics and thus avoiding JRSs altogether, see \citep[Theorem 2]{goyal2025causal}.

\section{Conclusion}
\label{sec:conclusion}

We study the issue of learning CBMs and NeSy-CBMs with high-quality concepts and inference layers.
We formalize intended semantics and joint reasoning shortcuts (JRSs), showing how, under suitable assumptions, zeroing the number of \textit{deterministic} JRSs provably prevents \textit{all} JRSs, yielding identifiability.
Numerical simulations indicate that JRSs are very frequent and impactful, and that the only (partially) beneficial mitigation stratgy is contrastive learning.
Our work paves the way to the design of effective solutions and therefore more robust and interpretable CBMs.
In future work, we plan to extend our theory to general neural nets and large language models, and we will take a closer look at more complex mitigations such as smartly constraining the inference layer \citep{liu2023out, wust2024pix2code}, debiasing \citep{bahadori2021debiasing}, and human interaction \citep{teso2023leveraging}.
Another asset is considering concept-level supervision gathered from foundation models like CLIP, however recent results \citep{debole2025concept} call for caution in using them to supervise the bottleneck. Wrong concept-level supervision may transfer biases of foundation models and induce JRSs, thus resulting ineffective as a mitigation strategy.

\textbf{Broader impact}.
With this work, we aim to highlight the issue of joint reasoning shortcuts in concept-based models and their subtle but impactful consequences on the trustworthiness and interpretability of these models, as well as on their reliability in out-of-distribution scenarios.  It also highlights the limited effect of unsupervised mitigation strategies, thus pointing out a significant gap in our toolbox for addressing joint reasoning shortcuts effectively and cheaply.
Our work also provides a solid theoretical foundation upon which further studies can build. %

\textbf{Limitations}. 
Our theoretical analysis of JRSs builds on two common assumptions for the data generation process, which limit its scope. Relaxing them, while not straightforward, will allow treating models and NeSy tasks with intrinsic uncertainty on label and concepts. Our experiments span both synthetically-generated and a real-world vision datasets; further analysis should highlight the impact of JRSs in other real-world settings and beyond the vision domain.

\subsection*{Acknowledgements}

The authors are grateful to Antonio Vergari, Emile van Krieken, Marco Pacini, and Luigi Gresele for useful discussions. Funded by the European Union. Views and opinions expressed are however those of the author(s) only and do not necessarily reflect those of the European Union or the European Health and Digital Executive Agency (HaDEA). Neither the European Union nor the granting authority can be held responsible for them. Grant Agreement no. 101120763 - TANGO. The work of AP was partially supported by the MUR PNRR project FAIR - Future AI Research (PE00000013) funded by the NextGenerationEU. PM was supported by the MSCA project “Probabilistic Formal Verification for Provably Trustworthy AI - PFV-4-PTAI” under GA no. 101110960.

\newpage
\bibliography{references, explanatory-supervision}

\clearpage
\newpage
\section{Implementation Details}
\label{sec:implementation-details}

Here, we provide additional details about metrics, datasets, and models, for reproducibility.
All the experiments were implemented using Python 3.9 and Pytorch 1.13 and run on one A100 GPU.
The implementations of \DPL, and the code for RSs mitigation strategies were taken verbatim from \citep{marconato2023not}, while the code for \DSL was taken from~\citep{daniele2023deep}.  \DSLDPL and \DSLBEARS were implement on top of \DSL codebase.

\subsection{Notation}
\label{subsec:notation}

We summarize the notation used in this paper.
\begin{table}[h]
    \centering
    \caption{Glossary of common symbols.}
    \scalebox{1}{
\begin{tabular}{ll}
    \toprule
    \textbf{Symbol} & \textbf{Meaning}
    \\
    \midrule
    $x$, $y$, $z$ & Scalar constants
    \\
    $X$, $Y$, $Z$ & Scalar variables
    \\
    $\vx$, $\vX$ & Vector constant and vector variable
    \\ \\
    $\calX$ & Space of input 
    \\
    $\calY$, $\calG$, $\calC$ & Sets of labels, ground-truth, and learned concepts
    \\
    $\Delta_\calY, \Delta_\calG, \Delta_\calC$ & 
    Simplices of probability distributions on sets elements \\ \\
    $\vf: \calX \to \Delta_\calC$ & Concept extractor 
    \\
     $\vomega: \Delta_\calC \to \Delta_\calY$ & Inference layer
    \\
    $\calF, \Omega$ & Space of concept extractors and inference layers \\
    \\
    $ \valpha: \calG \to \Delta_\calC $ &
    Map from ground-truth concepts to distribution of model concepts 
    \\
    $\vbeta : \calC \to \Delta_\calY$ & 
    Map of the inference layer on concept vectors 
    \\
    $\calA, \calB$ & Spaces of $\valpha$'s and $\vbeta$'s 
    \\
    \\
    $p^*$ & distribution over inputs, label, and ground-truth concepts \\
    $\BK$ & Prior knowledge
    \\
    $\models$ & Logical entailment
    \\
    \bottomrule
\end{tabular}
}

    \label{tab:notation}
\end{table}

\subsection{Datasets}

All data sets were generated using the {\tt rsbench} library~\citep{bortolotti2024benchmark}.

\subsubsection{\MNISTAdd}

\MNISTAdd~\citep{manhaeve2018deepproblog} consists of pairs of {\tt MNIST} digits~\citep{lecun1998mnist}, ranging from $0$ to $9$, and the goal is to predict their sum.  The prior knowledge used for generating the labels is simply the rule of addition: $\BK = (Y = C_1 + C_2)$.  This task does not admit RSs nor JRSs when digits are processed separately.  The training and test examples take the following form:
\[
    \{ ((\MZero, \MOne), 1), \ ((\MFive, \MEight), 13), \ ((\MSix, \MSix), 12), \ ((\MFour, \MOne), 5) \}
\]
All splits cover all possible pairs of concepts, from $(0, 0)$ to $(9, 9)$, \ie there is no sampling bias.  Overall, \MNISTAdd has $42,000$ training examples, $12,000$ validation examples, and $6,000$ test examples.

\subsection{\MNISTSumXor}

Similarly, in \MNISTSumXor the goal is to predict the \textit{parity} of the sum, \ie the prior knowledge is the rule: $\BK = (Y = (C_1 + C_2) \ \mathrm{mod} \ 2)$.  This task is more challenging for our purposes, as it is affected by both RSs and JRSs, cf. \cref{fig:sum-parity}. Below we report four examples:
\[
    \{ ((\MZero, \MOne), 1), \ ((\MFive, \MEight), 1), \ ((\MSix, \MSix), 0), \ ((\MFour, \MOne), 1) \}
\]
We consider two variants: like \MNISTAdd, in \MNISTSumXor proper the training and test splits contain all possible combinations of digits, while in \textit{\textbf{biased}} \MNISTSumXor the training set contains (even, even), (odd, odd) and (odd, even) pairs, while the test set contains (even, odd) pairs only.  This fools CBMs into learning an incorrect inference layer implementing subtraction instead of addition-and-parity, as illustrated in \cref{fig:sum-parity}.  Both variants have the same number of training, validation, and test examples as \MNISTAdd.

\subsection{\CHans}
\label{subsec:chans}

\CHans~\citep{johnson2017clevr} consists of of 3D scenes of simple objects rendered with Blender, see \cref{fig:chans-example}.  Images are $3 \times 128 \times 128$ and contain a variable number of objects.  We consider only images with $2$ to $4$ objects each, primarily due to scalability issues with \DPL.  The ground-truth labels were computed by applying the rules (prior knowledge) proposed by \citep{stammer2021right}, reported in \cref{tab:chans-classes}.  Objects are entirely determined by four concepts, namely shape, color, material, and size.  We list all possible values in \cref{tab:chans-properties}.  Overall, \CHans has $6000$ training examples, $1200$ validation examples, and $1800$ test examples.

\begin{figure}[!h]
    \centering
    \begin{minipage}{0.3\linewidth}
        \centering
        \setlength{\tabcolsep}{6pt}
\scalebox{.65}{
\begin{tabular}{ll}
    \toprule
    \textbf{Property} & \textbf{Value} \\
    \midrule
    \textbf{Shapes} & Cube, Sphere, Cylinder \\ 
    \midrule
    \textbf{Colors} & Gray \\
    & Red, Blue, Green, Brown, \\
    & Purple, Cyan, Yellow \\ 
    \midrule
    \textbf{Materials} & Rubber, Metal \\ 
    \midrule
    \textbf{Sizes} & Large, Small \\
    \bottomrule
\end{tabular}
}

        \captionof{table}{\CHans properties.}
        \label{tab:chans-properties}
    \end{minipage}%
    \begin{minipage}{0.3\linewidth}
        \setlength{\tabcolsep}{6pt}
\scalebox{.65}{
\begin{tabular}{ll}
    \toprule
    \textbf{Class} & \textbf{Condition} \\
    \midrule
    \textbf{Class 1} & $\exists x_1 \ (\text{Cube}(x_1) \land \text{Large}(x_1))$ \\
                     & $\land \exists x_2 \ (\text{Cylinder}(x_2) \land \text{Large}(x_2))$ \\
    \midrule
    \textbf{Class 2} & $\exists x_1 \ (\text{Cube}(x_1) \land \text{Small}(x_1) \land \text{Metal}(x_1))$ \\
                     & $\land \exists x_2 \ (\text{Sphere}(x_2) \land \text{Small}(x_2))$ \\
    \midrule
    \textbf{Class 3} & $\exists x_1 \ (\text{Sphere}(x_1) \land \text{Large}(x_1) \land \text{Blue}(x_1))$ \\
                     & $\land \exists x_2 \ (\text{Sphere}(x_2) \land \text{Small}(x_2) \land \text{Yellow}(x_2))$ \\
    \bottomrule
\end{tabular}
}

        \captionof{table}{\CHans classes.}
        \label{tab:chans-classes}
    \end{minipage}%
    \begin{minipage}{0.45\linewidth}
        \centering
        \includegraphics[width=0.5\linewidth]{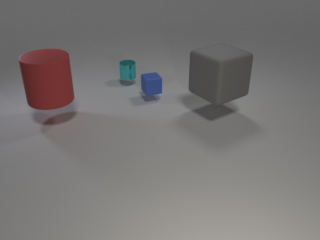}
        \captionof{figure}{Example of \CHans data.}
        \label{fig:chans-example}
    \end{minipage}
\end{figure}

\textbf{Preprocessing}.  All CBMs process objects separately, as follows.  Following \citep{shindo2023thinking}, we fine-tune a pretrained {\tt YoloV11} model~\citep{JocherYOLO} (for $10$ epochs, random seed $13$) on a subset of the training set's bounding boxes, and use it to predict bounding boxes for all images.  For each training and test image, we concatenate the regions in the bounding boxes, obtaining a list $2$ to $4$ images which plays the role of input for our CBMs.

Each object has $8 \times 3 \times 2 \times 2 = 96$ possible configurations of concept values, and we handle between $2$ and $4$ objects, hence the inference layer has to model $96^2 + 96^3 + 96^4$ distinct possibilities.  Due to the astronomical number of possibilities (which would entail a huge truth table/inference layer for \DSL and related CBMs), we split the inference layer in two: we first predict individual concepts, and then we aggregate them (into, \eg \texttt{large-cube} and \texttt{small-yellow-sphere}) and use these for prediction.  Despite this simplification, \CHans still induces JRSs in learned CBMs.

\subsection{\BOIA}
\label{subsec:boia} 

\BOIA is an autonomous driving dataset, composed of images extracted from driving scene videos~\citep{xu2020boia}. Each image is annotated with four binary labels: \texttt{move\_forward}, \texttt{stop}, \texttt{turn\_left}, and \texttt{turn\_right}. Each image comes with 21 binary concepts that constitute the explanations for the corresponding actions. 
The training set contains almost $16$k fully labeled frames, while the validation and test sets include almost $2$k and $4.5$k annotated samples, respectively. 
We adopt the same prior knowledge used in~\citep{bortolotti2024benchmark}, which we briefly summarize here for the sake of completeness.

For the \texttt{move\_forward} and \texttt{stop} actions, the rules are as follows:

\[
    \begin{cases}
        & \text{\texttt{red\_light}}  \Rightarrow \lnot\text{\texttt{green\_light}}\\
        & \text{\texttt{obstacle}} =  \text{\texttt{car}} \lor \text{\texttt{person}} \lor \text{\texttt{rider}}
        \lor \text{\texttt{other\_obstacle}}\\
        & \text{\texttt{road\_clear}} \Longleftrightarrow \lnot \text{\texttt{obstacle}}\\
        & \text{\texttt{green\_light}} \lor \text{\texttt{follow}} \lor \text{\texttt{clear}} \Rightarrow \text{\texttt{move\_forward}} \\
        & \text{\texttt{red\_light}} \lor \text{\texttt{stop\_sign}} \lor \text{\texttt{obstacle}} \Rightarrow \text{\texttt{stop}}\\
        & \text{\texttt{stop}} \Rightarrow \lnot \text{\texttt{move\_forward}}\\
    \end{cases}
\]

The rules for the \texttt{turn\_left} and the \texttt{turn\_right} action, instead, are:

\[
    \begin{cases}
        & \text{\texttt{can\_turn}} = \text{\texttt{left\_lane}} \lor \text{\texttt{left\_green\_lane}} \lor \text{\texttt{left\_follow}}\\
        & \text{\texttt{cannot\_turn}} = \text{\texttt{no\_left\_lane}} \lor \text{\texttt{left\_obstacle}} \lor \text{\texttt{left\_solid\_line}}\\
        & \text{\texttt{can\_turn}} \land \lnot \text{\texttt{cannot\_turn}} \Rightarrow \text{\texttt{turn\_left}} \\
    \end{cases}
\]

An overview of the classes of \BOIA can be found in \citep{xu2020boia}.

\textbf{Preprocessing}. \BOIA images are preprocessed following the approach described in~\citep{sawada2022concept}. We first used a Faster R-CNN~\citep{ren2015fasterrcnn} model pre-trained on MS-COCO and fine-tuned on BDD-100k, then we extract $2048$-dimensional features using a pre-trained convolutional layer from~\citep{sawada2022concept}. 

\subsection{Metrics}
\label{sec:metrics-details}

For all models, we report the metrics averaged over \textit{\textbf{five random seeds}}.

\textbf{Label quality}.  We measure the macro average $F_1$-score and Accuracy, to account for imbalance.

\textbf{Concept quality}.  We compute concept quality via Accuracy and $F_1$-score.  However, the order of ground-truth and predicted concepts might might differ, e.g., in \CHans $G_1$ might represent whether ``color = red'', while $C_1$ whether ``color = blue''.  Therefore, in order to compute these metrics we have to first align them using the Hungarian matching algorithm using concept-wise Pearson correlation on the test data~\citep{kuhn1955hungarian}.  The algorithm computes the correlation matrix $R$ between $\vg$ and $\vc$, \ie $R_{ij} = \mathrm{corr}(G_i,C_j)$ where $\mathrm{corr}$ is the correlation coefficient, and then infers a permutation $\pi$ between concept values (\eg mapping ground-truth color ``red'' to predicted value ``blue'') that maximizes the above objective.  The metrics are computed by comparing the reordered ground-truth annotations with the predicted concepts.

\paragraph{Concept Collapse.}
To compute the \textbf{\textit{Concept Collapse}} metric \Collapse~\citep{bortolotti2024benchmark}, we follow the procedure described in~\citet{bortolotti2024benchmark}, which we briefly describe here. Specifically, we extract a confusion matrix in the multilabel setting by encoding each binary concept vector (e.g., $(0,1,1) \mapsto 3$) into a categorical label. Let $\calC^* \subseteq \{0,1\}^k$ be the set of ground-truth concept vectors in the test set, and $\calC$ be the set of predicted concept vectors. After converting both sets to categorical values, denoted $\calF(\calC^*)$ and $\calF(\calC)$ respectively, we construct the confusion matrix $C \in [0,1]^{m \times m}$, where $m = |\calF(\calC) \cup \calF(\calC^*)|$.
From this matrix, we compute the number of predicted categories $p$ as:
\[
p = \sum_{j=1}^{m} \Ind{ \exists i \ . \ C_{ij} > 0 } = |\calF(\calC)|
\]
Then, the \Collapse{} metric is given by:
\[
\Collapse = 1 - \frac{p}{m}
\]

Where $m = 2^k$.

\textbf{Inference layer quality}.  The same permutation is also used to assess knowledge quality. Specifically, we apply it to reorder the ground-truth concepts and then feed these to the learned inference layer, evaluating the accuracy and $F_1$ score of the resulting predictions.

For \MNISTAdd and \MNISTSumXor, we conducted an exhaustive evaluation since the number of possible combinations to supervise is $100$. The results in~\cref{tab:add-reduced} and~\cref{tab:sumparity-reduced} evaluate the knowledge exhaustively, whereas in~\cref{tab:ood}, we separately assess the knowledge of in-distribution and out-of-distribution combinations.

In \CHans, as discussed in \cref{subsec:chans}, there are too many combinations of concept values to possibly evaluate them all. Therefore, both supervision and evaluation are performed by randomly sampling $100$ combinations of ground-truth concepts. We follow the same procedure for \BOIA, except that we sample more combinations ($2000$) due to the larger space of possible concept configurations.

\subsection{Architectures}

For \MNISTSumXor and \MNISTAdd, We employed the architectures from~\citep{bortolotti2024benchmark}, while for \DSL, \DSLDPL, and \DSLBEARS, we followed the architecture described in~\citep{daniele2023deep}.  For \CHans instead, we adopted the decoder presented in~\cref{tab:encoder-clevr} and the encoder presented in~\cref{tab:decoder-clevr}.  All models process different objects (\eg digits) separately using the same backbone. For \BOIA, we adopt the architectures defined in~\citep{bortolotti2024benchmark}.

\begin{table}[!h]
    \centering
    \caption{Encoder architecture for \CHans.}
    \scalebox{0.7}{
    \begin{tabular}{cccc}
        \toprule
        \textsc{Input}
            & \textsc{Layer Type}
            & \textsc{Parameter}
            & \textsc{Output} 
        \\
        \midrule
        $(128, 128, 3)$
            & Convolution
            & depth=32, kernel=3, stride=1, pad=1
            & ReLU
        \\
        $(128, 128, 32)$
            & MaxPool2d
            & kernel=2, stride=2
        \\
        $(64, 64, 32)$
            & Convolution
            & depth=64, kernel=3, stride=1, pad=1
            & ReLU
        \\
        $(64, 64, 64)$
            & MaxPool2d
            & kernel=2, stride=2
        \\
        $(64, 32, 32)$
            & Convolution
            & depth=$128$, kernel=3, stride=1, pad=1
            & ReLU
        \\
        $(128, 32, 32)$
            & AdaptiveAvgPool2d
            & out=$(1, 1)$
        \\
        $(128, 1, 1)$
            & Flatten
            &
        \\
        $(128)$
            & Linear
            & dim=40, bias = True
            & $\vc$
        \\
        $(128)$
            & Linear
            & dim=640, bias = True
            & $\mu$
        \\
        $(128)$
            & Linear
            & dim=640, bias = True
            & $\sigma$
        \\
        \bottomrule
    \end{tabular}
    }
    \label{tab:encoder-clevr}
\end{table}

\begin{table}[!h]
    \centering
    \caption{Decoder architecture for \CHans}
    \scalebox{0.7}{
    \begin{tabular}{ccccc}
        \toprule
        \textsc{Input}
            & \textsc{Layer Type}
            & \textsc{Parameter}
            & \textsc{Activation}
        \\
        \midrule
        $(255)$
            & Linear
            & dim=32768, bias = True
        \\
        $(8, 8, 512)$
            & ConvTranspose2d
            & depth=256, kernel=4, stride=2, pad=1
        \\
        $(16, 16, 256)$
            & BatchNorm2d
            &
            & ReLU
        \\
        $(16, 16, 256)$
            & ConvTranspose2d
            & depth=128, kernel=4, stride=2, pad=1
        \\
        $(32, 32, 128)$
            & BatchNorm2d
            &
            & ReLU
        \\
        $(32, 32, 128)$
            & ConvTranspose2d
            & depth=64, kernel=4, stride=2, pad=1
        \\
        $(64, 64, 64)$
            & BatchNorm2d
            &
            & ReLU
        \\
        $(64, 64, 64)$
            & ConvTranspose2d
            & depth=3, kernel=4, stride=2, pad=1
            & Tanh
        \\
        \bottomrule
    \end{tabular}
    }
    \label{tab:decoder-clevr}
\end{table}

\textbf{Details of \CBM}.  For \CHans, we used a standard linear layer.  Since this is not expressive enough for \MNISTAdd and \MNISTSumXor, we replaced it with an interpretable second-degree polynomial layer, \ie it associates weights to \textit{combinations of two predicted digits} rather than individual digits.  We also include a small entropy term, which empirically facilitates data fit when little or no concept supervision is available.

\textbf{Details of \SENN}.  \SENN~\citep{alvarez2018towards} are unsupervised concept-based models consisting of a neural network that produces a distribution over concepts, a neural network that assigns a score to those concepts, and a decoder. The final prediction is made by computing the dot product between the predicted concepts and their predicted scores. \SENN by default includes a reconstruction penalty, a robustness loss, and a sparsity term.

In our implementation, we focus only on the reconstruction penalty. The robustness loss, which aims to make the relevance scores Lipschitz continuous, requires computing Jacobians, making the training of those models infeasible in terms of computational time and resources. Additionally, we did not implement the sparsity term, as it conflicts with the quality of concepts we aim to study.

\textbf{Details of \DSLBEARS}.  \DSLBEARS combines \DSL \citep{daniele2023deep} and BEARS \citep{marconato2024bears}.
BEARS is a NeSy architectures based on deep ensembles~\citep{lakshminarayanan2017simple}, designed to improve the calibration of NeSy models in the presence of RSs and provide uncertainty estimation for concepts that may be affected by such RSs and it assumes to be given prior knowledge.

Since in our setup the inference layer is learned, we built an ensemble with one learnable inference layer as in \DSL and multiple concept extractors.  The first element of the ensemble learns both the concepts and the inference layer.  Then, we freeze it and train the other concept extractors using the frozen inference layer for prediction.  In this sense, \DSLBEARS provides awareness of the reasoning shortcuts present in the knowledge learned by the first model.

\subsection{Mitigation Strategies}

\textbf{Concept supervision}. When evaluating mitigation strategies, concept supervision for \MNISTSumXor was incrementally supplied for two more concepts at each step, following this order: 3, 4, 1, 0, 2, 8, 9, 5, 6, 7. For \CHans, supervision was specifically provided for sizes, shapes, materials, and colors, in this order.

\textbf{Knowledge distillation}.  This was implemented using a cross-entropy on the inference layer.  For \DSL and \DPL, ground-truth concepts were fed into the learned truth table, expecting the correct label.  Similarly, for \CBM{s}, which use a linear layer, the same approach was applied. In evaluating the mitigation strategies, we supervised all possible combinations for \MNISTAdd, whereas for \CHans, we supervised a maximum of $500$ randomly sampled combinations.

\textbf{Entropy maximization}.  The entropy regularization term follows the implementation in~\citep{marconato2023not}. The loss is defined as $1 - \frac{1}{k} \sum_{i=1}^{k} H_{m_i} [p_{\theta} (c_i)]$
where $p_{\theta} (C)$ represents the marginal distribution over the concepts, and $H_{m_i}$ is the normalized Shannon entropy over $m_i$ possible values of the distribution.

\textbf{Reconstruction}.  The reconstruction penalty is the same as in~\citep{marconato2023not}.  In this case, the concept extractor outputs both concepts $\vc$ and style variables $\vz$, \ie it implements a distribution $p_{\theta}(c, z \mid x) = p_{\theta}(c \mid x) \cdot p_{\theta}(z \mid x)$. The reconstruction penalty is defined as:  $ -\mathbb{E}_{(c,z) \sim p_{\theta} (c, z \mid x)} \left[ \log p_{\psi} (x \mid c, z) \right]$ where \( p_{\psi} (x \mid c, z) \) represents the decoder network.

\textbf{Contrastive loss}.  We implemented contrastive learning using the InfoNCE loss, comparing the predicted \textit{concept distribution} of the current batch with that of its respective augmentations. Following the recommendations of \citet{chen2020simple}, we applied the following augmentations to each batch: random cropping (20 pixels for {\tt MNIST} and 125 pixels for \CHans), followed by resizing, color jittering, and Gaussian blur.

\subsection{Model Selection}

\textbf{Optimization}. For \DSLDPL, \DSLBEARS, \CBM and \SENN we used the Adam optimizer~\citep{kingma2014adam}, while for \DSL and \DPL we achieved better results using the Madgrad optimizer~\citep{defazio2021madgrad}. 

\textbf{Hyperparameter search}. We performed an extensive grid search on the validation set over the following hyperparameters:
(\textit{i}) Learning rate ($\gamma$) in \{1e-4, 1e-3, 1e-2\};
(\textit{ii}) Weight decay ($\omega$) in \{0, 1e-4, 1e-3, 1e-2, 1e-1\};
(\textit{iii}) Reconstruction, contrastive loss, entropy loss, concept supervision loss and knowledge supervision loss weights ($w_r$, $w_c$, $w_h$, $w_{csup}$ and $w_k$, respectively) in \{$0.1$, $1$, $2$, $5$, $8$, $10$\};
(\textit{iv}) Batch size ($\nu$) in \{$32$, $64$, $128$, $256$, $512$\}.
\DSL and \DSLDPL have additional hyperparameters for the truth table: $\varepsilon_{sym}$ and $\varepsilon_{rul}$ for \DSL, and $\lambda_r$, the truth table learning rate, for \DSLDPL.  

All experiments were run for approximately $50$ epochs for \MNIST variants, $30$ for \BOIA and $100$ epochs for \CHans using early stopping based on validation set $\FY$ performance. The exponential decay rate $\beta$ was set to $0.99$ for all experiments, as we empirically found it to provide the best performance across tasks.
In all experiments, we selected $\gamma = 1e-3$.  

To answer the first research question in~\cref{q1} on \MNISTAdd, for \DPL we used: $\nu = 32$ and $\omega = 0.0001$; for \DSL: $\nu = 128$, $\omega = 0.001$, $\varepsilon_{sym} = 0.2807344052335263$, and $\varepsilon_{rul} = 0.107711951632426$; for \CBM: $\nu = 256$ and $\omega = 0.0001$; for \DSLDPL: $\nu = 32$, $\omega = 0.01$, and $\lambda_r = 0.0001$; for \SENN: $\nu = 64$, $\omega = 0.001$, and $w_r = 0.1$; while for \DSLBEARS we set the diversification loss term to $0$ and the entropy term to $0.1$.
On \MNISTSumXor and its biased version, for \DPL we used: $\nu = 512$ and $\omega = 0.0001$; for \DSL: $\nu = 128$, $\omega = 0.0001$, $\varepsilon_{sym} = 0.2807344052335263$, and $\varepsilon_{rul} = 0.107711951632426$; for \CBM: $\nu = 256$ and $\omega = 0.0001$; for \DSLDPL: $\nu = 32$, $\omega = 0.01$, and $\lambda_r = 0.01$; for \SENN: $\nu = 64$, $\omega = 0.001$, and $w_r = 0.1$; while for \DSLBEARS we set the diversification loss term to $0.1$ and the entropy term to $0.2$.
On \CHans, for \DPL, \CBM, and \DSLDPL, we used $\nu = 32$ and $\omega = 0.001$; for \DSLDPL, $\lambda_r = 0.001$.
On \BOIA, for \DPL we used $\nu = 128$ and $\omega = 10^{-4}$, while for \CBM we used $\nu = 64$ and $\omega = 0$.

For the second research question, we performed a grid search for each mitigation strategy individually. Additionally, since our analysis focuses on optimal models, we trained multiple models and retained only those achieving optimal $\FY$ performance on the validation set.  
For \MNISTSumXor, we set $\nu = 128$, $\omega = 0.001$, and $w_{csup} = 1$ across all experiments. For specific mitigation strategies, we used $w_h = 1$ for entropy regularization, $w_r = 0.01$ for reconstruction, $w_c = 0.1$ for contrastive learning, and $w_k = 1$ for knowledge supervision.  
For \CHans, we applied the same settings: $\nu = 128$, $\omega = 0.001$, and $w_{csup} = 1$ for all experiments. The specific mitigation strategy weights were set as follows: $w_h = 1$ for entropy regularization, $w_r = 1$ for reconstruction, $w_c = 0.1$ for contrastive learning, and $w_k = 1$ for knowledge supervision. When concept supervision is applied, we follow a sequential training approach as in~\citep{koh2020concept}. During the initial epochs, the model learns only the concepts, and later, it jointly optimizes both the concepts and the label, as they are challenging to learn simultaneously.

\newpage
\section{Additional results}
\label{sec:additional-results}

\subsection{Additional Results for Q1}
\label{sec:additional-results-q1}

\cref{tab:complete-results-mnistadd,tab:complete-results-mnistsumxor,tab:complete-results-mnistsumxor-biased} report additional results for \textbf{Q1} including two additional models and metrics.
The additional models are: ``\CBM noent'', which is identical to \CBM except it does not include any entropy term; and \CBM$_{20}$ which is a regular \CBM for which $20\%$ of the concept combinations (\eg pairs of digits in MNIST-based tasks) receive full supervision.
The metrics include label accuracy $\YAcc$, concept accuracy $\CAcc$, inference layer accuracy $\KAcc$, and negative log-likelihood, all evaluated on the test set.

\begin{table*}[!h]
    \caption{\textbf{Complete results for \MNISTAdd}.}
    \label{tab:complete-results-mnistadd}
    \centering
    \scriptsize
    \setlength{\tabcolsep}{5pt}
\scalebox{.95}{
\begin{tabular}{lccccccccc}
    \toprule
    \textsc{Method}
        & \textsc{\YAcc} %
        & \textsc{\FY} %
        & \textsc{\CAcc} %
        & \textsc{\FC} %
        & \textsc{\Collapse} ($\downarrow$)
        & \textsc{\FK}
        & \textsc{\KAcc}
        & \textsc{\NLL}
    \\
    \midrule
    {\DPL} & $0.98 \pm 0.01$ & $0.98 \pm 0.01$ & $0.99 \pm 0.01$ & $0.99 \pm 0.01$ & $0.01 \pm 0.01$ & $-$ & $-$ & $0.17 \pm 0.03$ %
    \\
    \midrule
    \rowcolor[HTML]{EFEFEF}
    {\CBM} & $0.98 \pm 0.01$ & $0.98 \pm 0.01$ & $0.99 \pm 0.01$ & $0.99 \pm 0.01$ & $0.01 \pm 0.01$ & $1.00 \pm 0.01$ & $1.00 \pm 0.01$ & $0.11 \pm 0.01$ %
    \\
    {\CBM noent} &$0.98 \pm 0.01$ & $0.98 \pm 0.01$ & $0.80 \pm 0.10$ & $0.74 \pm 0.13$ & $0.08 \pm 0.04$ & $0.37 \pm 0.09$ & $0.42 \pm 0.06$ & $0.26 \pm 0.01$
    \\
    \rowcolor[HTML]{EFEFEF}
    {\CBM$_{20}$} & $0.98 \pm 0.01$ & $0.98 \pm 0.01$ & $0.99 \pm 0.01$ & $0.99 \pm 0.01$ & $0.01 \pm 0.01$ & $0.67 \pm 0.02$ & $0.59 \pm 0.02$ & $0.70 \pm 0.02$ %
    \\
    {\CBM$_{20}$ noent} & $0.92 \pm 0.02$ & $0.94 \pm 0.01$ & $0.60 \pm 0.09$ & $0.48 \pm 0.10$ & $0.22 \pm 0.12$ & $0.02 \pm 0.01$ & $0.12 \pm 0.02$ &   $1.09 \pm 0.01$ %
    \\
    \rowcolor[HTML]{EFEFEF}
    {\DSL} & $0.96 \pm 0.02$ & $0.96 \pm 0.02$ & $0.98 \pm 0.01$ & $0.98 \pm 0.01$ & $0.01 \pm 0.01$ & $1.00 \pm 0.01$ & $1.00 \pm 0.01$ & $0.38 \pm 0.14$ %
    \\
    \DSLDPL & $0.90 \pm 0.09$ & $0.91 \pm 0.09$ & $0.93 \pm 0.07$ & $0.92 \pm 0.08$ & $0.01 \pm 0.01$ & $0.90 \pm 0.08$ & $0.90 \pm 0.09$ & $0.02 \pm 0.01$ %
    \\
    \rowcolor[HTML]{EFEFEF}
    \DSLBEARS & $0.81 \pm 0.12$ & $0.76 \pm 0.15$ & $0.87 \pm 0.12$ & $0.87 \pm 0.13$ & $0.01 \pm 0.01$ & $0.67 \pm 0.14$ & $0.65 \pm 0.10$ & $0.34 \pm 0.22$ %
    \\
    \SENN & $0.97 \pm 0.01$ & $0.97 \pm 0.01$ & $0.84 \pm 0.05$ & $0.80 \pm 0.07$ & $0.01 \pm 0.01$ & $0.75 \pm 0.08$ & $0.75 \pm 0.08$ & $0.01 \pm 0.01$ 
    \\
    \bottomrule
\end{tabular}
}

\end{table*}

\begin{table*}[!h]
    \caption{\textbf{Complete results for \MNISTSumXor}.}
    \label{tab:complete-results-mnistsumxor}
    \centering
    \scriptsize
    \setlength{\tabcolsep}{5pt}
\scalebox{.95}{
\begin{tabular}{lcccccccc}
    \toprule
    \textsc{Method}
        & \textsc{\YAcc} %
        & \textsc{\FY} %
        & \textsc{\CAcc} %
        & \textsc{\FC} %
        & \textsc{\Collapse} ($\downarrow$)
        & \textsc{\FK}
        & \textsc{\KAcc}
        & \textsc{\NLL}
    \\
    \midrule
    \DPL & $0.99 \pm 0.01$ & $0.99 \pm 0.01$ & $0.47 \pm 0.04$ & $0.43 \pm 0.08$ & $0.36 \pm 0.15$ & $-$ & $-$ & $0.33 \pm 0.01$ %
    \\
    \midrule
    \rowcolor[HTML]{EFEFEF}
    \CBM & $0.90 \pm 0.18$ & $0.90 \pm 0.18$ & $0.21 \pm 0.06$ & $0.09 \pm 0.03$ &  $0.66 \pm 0.09$ & $0.54 \pm 0.04$ & $0.21 \pm 0.06$ &  $0.52 \pm 1.04$
    \\
    \CBM noent & $0.98 \pm 0.01$ & $0.98 \pm 0.01$ & $0.22 \pm 0.01$ & $0.08 \pm 0.02$ & $0.72 \pm 0.13$ & $0.53 \pm 0.09$ & $0.53 \pm 0.09$ & $0.07 \pm 0.02$
    \\
    \rowcolor[HTML]{EFEFEF}
    \DSL & $0.94 \pm 0.03$ & $0.94 \pm 0.03$ & $0.20 \pm 0.01$ & $0.07 \pm 0.01$ & $0.80 \pm 0.01$ & $0.52 \pm 0.01$ & $0.51 \pm 0.02$ & $0.29 \pm 0.16$ %
    \\
    \DSLDPL & $0.99 \pm 0.01$ & $0.99 \pm 0.01$ & $0.22 \pm 0.01$ & $0.07 \pm 0.01$ & $0.80 \pm 0.01$ & $0.50 \pm 0.05$ & $0.50 \pm 0.06$ & $0.06 \pm 0.01$ 
    \\
    \rowcolor[HTML]{EFEFEF}
    \DSLBEARS & $0.99 \pm 0.01$ & $0.99 \pm 0.01$ & $0.28 \pm 0.05$ & $0.30 \pm 0.03$ & $0.22 \pm 0.04$ & $0.36 \pm 0.09$ & $0.37 \pm 0.09$ & $0.04 \pm 0.01$ %
    \\
    \SENN & $0.99 \pm 0.01$ & $0.99 \pm 0.01$ & $0.53 \pm 0.05$ & $0.49 \pm 0.05$ & $0.01 \pm 0.01$ & $0.53 \pm 0.06$ & $0.53 \pm 0.06$ & $0.01 \pm 0.01$ %
    \\
    \bottomrule
\end{tabular}
}

\end{table*}

\begin{table*}[!h]
    \caption{\textbf{Complete Results for \MNISTSumXor biased}.}
    \label{tab:complete-results-mnistsumxor-biased}
    \centering
    \scriptsize
    \setlength{\tabcolsep}{5pt}
\scalebox{.95}{
\begin{tabular}{lcccccccc}
    \toprule
    \textsc{Method}
        & \textsc{\YAcc} %
        & \textsc{\FY} %
        & \textsc{\CAcc} %
        & \textsc{\FC} %
        & \textsc{\Collapse} ($\downarrow$)
        & \textsc{\FK}
        & \textsc{\KAcc}
        & \textsc{\NLL}
    \\
    \midrule
    \DPL & $0.99 \pm 0.01$ & $0.99 \pm 0.01$ & $0.65 \pm 0.05$ & $0.59 \pm 0.06$ & $0.08 \pm 0.07$ & $-$ & $-$ &$0.06 \pm 0.01$
    \\
    \midrule
    \rowcolor[HTML]{EFEFEF}
    \CBM noent & $0.99 \pm 0.01$ & $0.99 \pm 0.01$ & $0.22 \pm 0.01$ & $0.08 \pm 0.02$ & $0.76 \pm 0.06$ & $0.52 \pm 0.04$ & $0.52 \pm 0.04$ & $0.05 \pm 0.01$
    \\
    \CBM$_{20}$ & $0.99 \pm 0.01$ & $0.99 \pm 0.01$ & $0.21 \pm 0.01$ & $0.07 \pm 0.01$ & $0.80 \pm 0.01$ & $0.51 \pm 0.01$ & $0.52 \pm 0.01$ & $0.07 \pm 0.04$
    \\
    \rowcolor[HTML]{EFEFEF}
    \CBM$_{20}$ noent & $0.99 \pm 0.01$ & $0.99 \pm 0.01$ & $0.22 \pm 0.01$ & $0.07 \pm 0.01$ & $0.80 \pm 0.01$ & $0.49 \pm 0.04$ & $0.48 \pm 0.04$ & $0.04 \pm 0.01$
    \\
    \DSL & $0.94 \pm 0.03$ & $0.94 \pm 0.03$ & $0.20 \pm 0.01$ & $0.07 \pm 0.01$ & $0.80 \pm 0.01$ & $0.02 \pm 0.01$ & $0.09 \pm 0.02$ & $0.29 \pm 0.16$
    \\
    \rowcolor[HTML]{EFEFEF}
    \DSLBEARS & $0.99 \pm 0.01$  & $0.99 \pm 0.01$  & $0.20 \pm 0.01$  & $0.07 \pm 0.01$ & $0.78 \pm 0.04$ & $0.48 \pm 0.04$ & $0.49 \pm 0.04$ & $0.03 \pm 0.01$
    \\
    \bottomrule
\end{tabular}
}

    \label{tab:mnist-sp-r}
\end{table*}

\subsection{Additional Results for Q1: The \BOIA Task}
\label{sec:additional-results-boia}

\textbf{Experimental setup}.  \BOIA is a challenging real-world learning task with $21$ binary concepts and, unlike in \CHans, these cannot be decomposed into unary predicates.  Moreover, each concept cannot be separately assigned to individual actions as done for \DPL in~\citep{bortolotti2024benchmark}, as this would introduce a negative bias in the learned solutions, effectively forcing the model to use a fixed concept budget per action without interference.
\textit{Due to scalability constraints, we compared only \DPL and \CBM, as implementing \DSL or \DSLDPL would require allocating at least four tensors of size $2^{21}$, which exceeds our available computational resources}.

\textbf{Results}.  The results for \DPL and \CBM are reported in~\cref{tab:boia}.
Both competitors perform sub-optimally on \BOIA, as expected given the challenging nature of this task: \DPL scores around $70\%$ \FY while \CBM only $62\%$.
Perhaps surprising, \CBM does not collapse concepts together and tends to use most of the concept bottleneck across the test set ($\Collapse \approx 0.06$).
Despite the lack of collapse, both architectures are far from attaining the intended semantics, with $\FC$ scores of $44\%$ for \DPL and $43\%$ for \CBM.  Additionally, \CBM deviates from the ground-truth knowledge, achieving a low $\FK$ of around $42\%$.
The low collapse for \CBM may be due to the trained model being far from optimal.  We hypothesize that adding a sparsity penalty to the linear layer of \CBM would substantially increase concept collapse, but we leave a detailed examination to future work.

\begin{table*}[!h]
    \caption{\textbf{Results for \texttt{BDD-OIA}}.}
    \centering
    \scriptsize
    
    \label{tab:boia}
\end{table*}

\subsection{Additional Results for Q3}
\label{sec:additional-results-q3}

Here we show the effect of traditional mitigation strategies on \MNISTSumXor and \CHans as concept and knowledge supervision increase. As discussed in~\cref{q3}, traditional strategies have limited impact, with the exception of concept supervision.

\begin{figure}[!h]
    \centering
    \includegraphics[width=\linewidth]{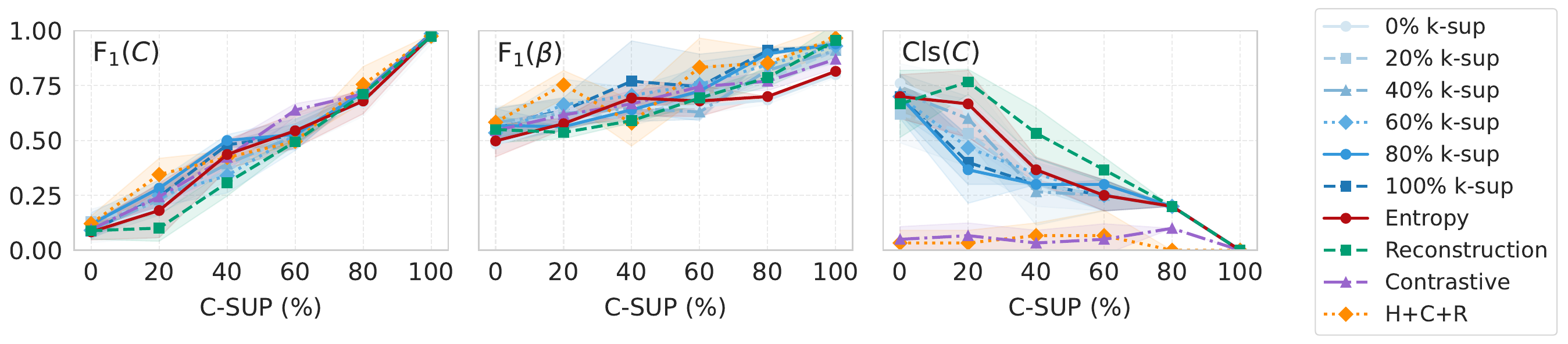}
    \caption{~\cref{fig:curves-sumxor-reduced} with standard deviation over $5$ seeds}
\end{figure}

\begin{figure}[!h]
    \centering
    \includegraphics[width=\linewidth]{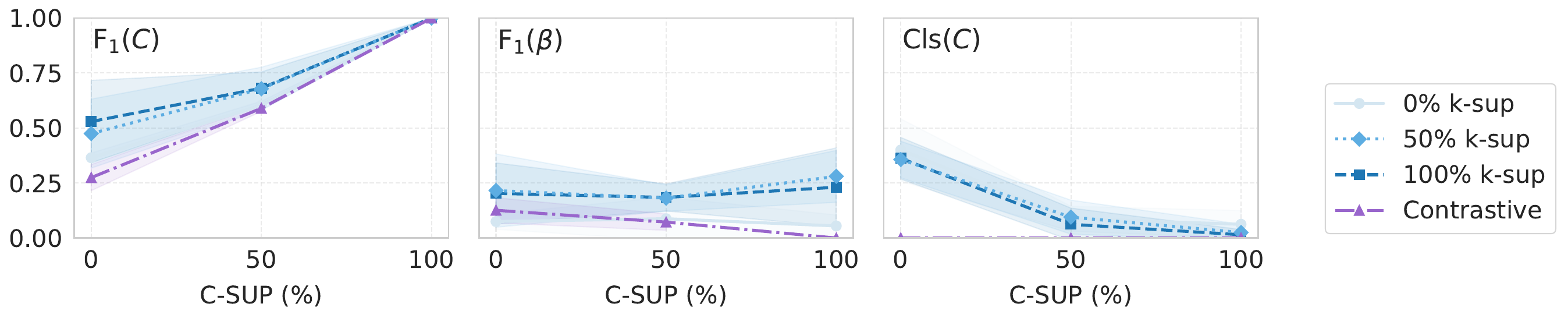}
    \caption{\textbf{Effect of standard mitigation strategies} evaluated for \CBM on \CHans, with standard deviation over $5$ seeds. We evaluated only the contrastive strategy as it performed best. Since supervising all knowledge is infeasible, we observe that for sampled configurations (leading to out-of-distribution settings), the learned knowledge does not generalize. }
\end{figure}

\subsection{Concept Confusion Matrices and Learned Inference Layers}
\label{sec:confusion-matrices}

Here, we present the concept and knowledge confusion matrices for different datasets and models to show examples of joint reasoning shortcut solutions that they learn.

\begin{figure}[!h]
    \centering
    \begin{tabular}{cccc}
        \includegraphics[width=0.225\linewidth]{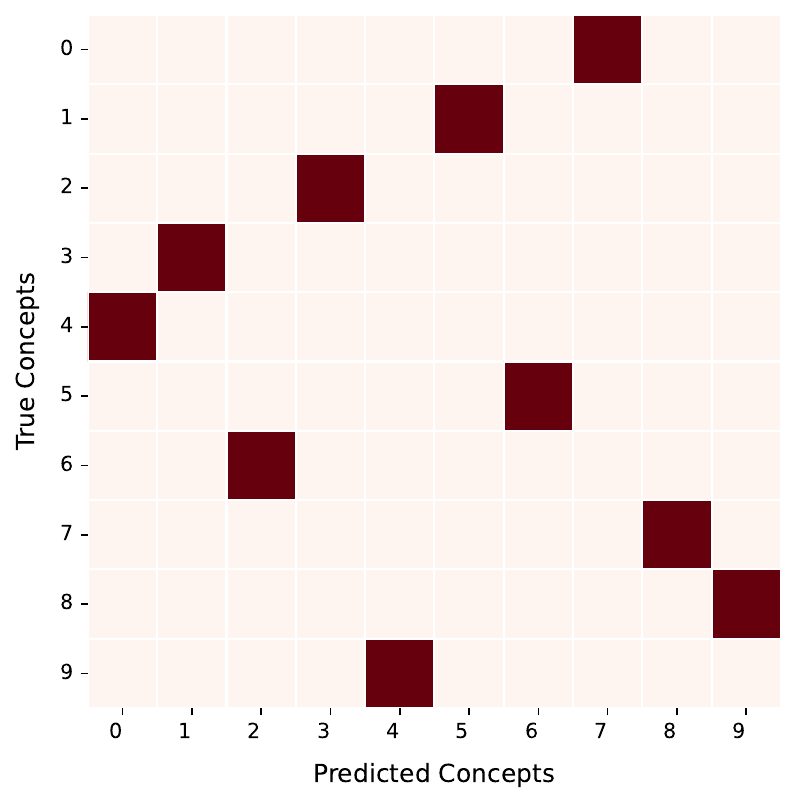}
            & \includegraphics[width=0.225\linewidth]{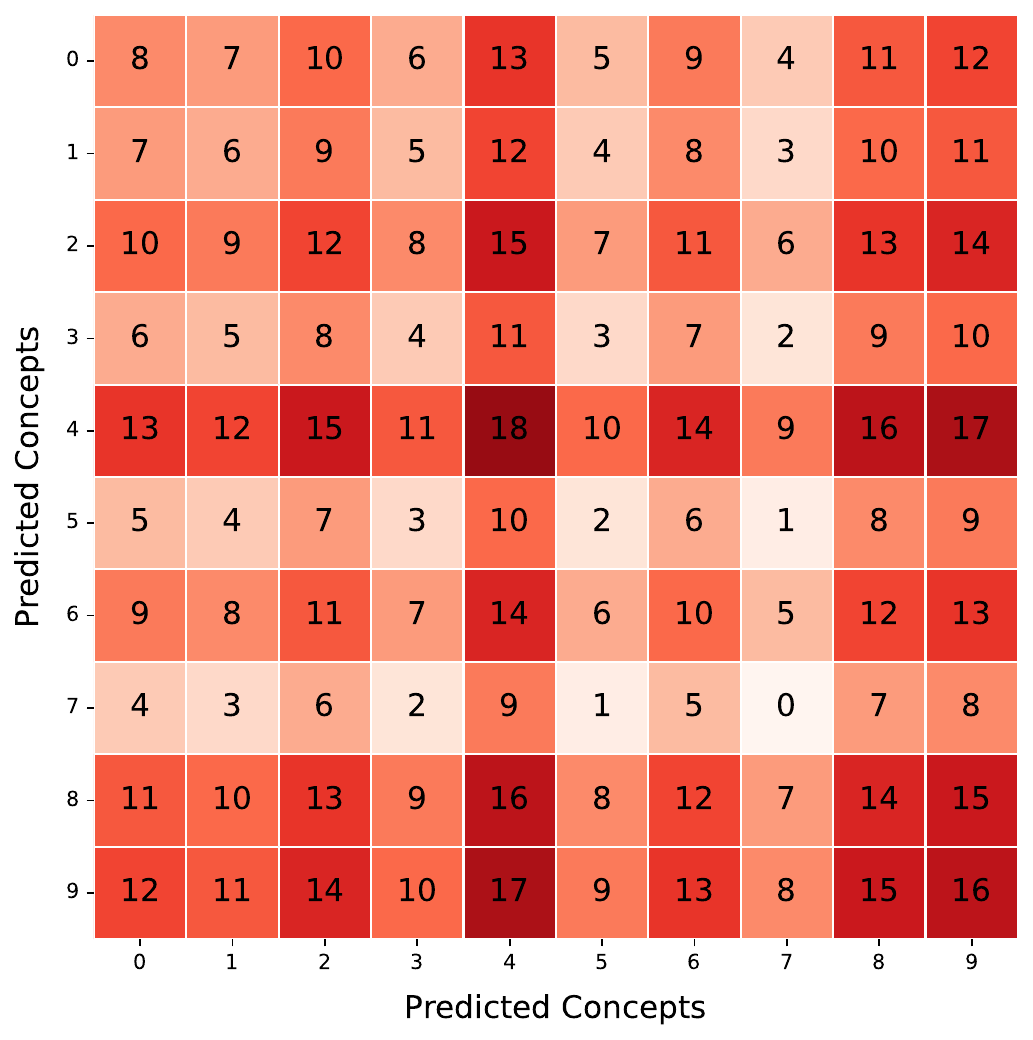}
            & \includegraphics[width=0.225\linewidth]{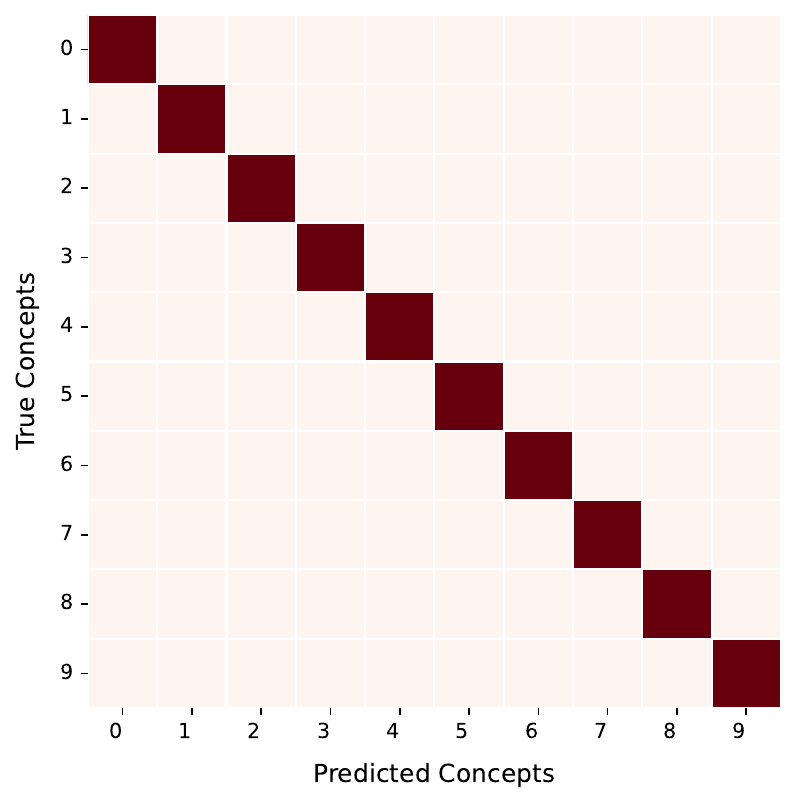}
            & \includegraphics[width=0.225\linewidth]{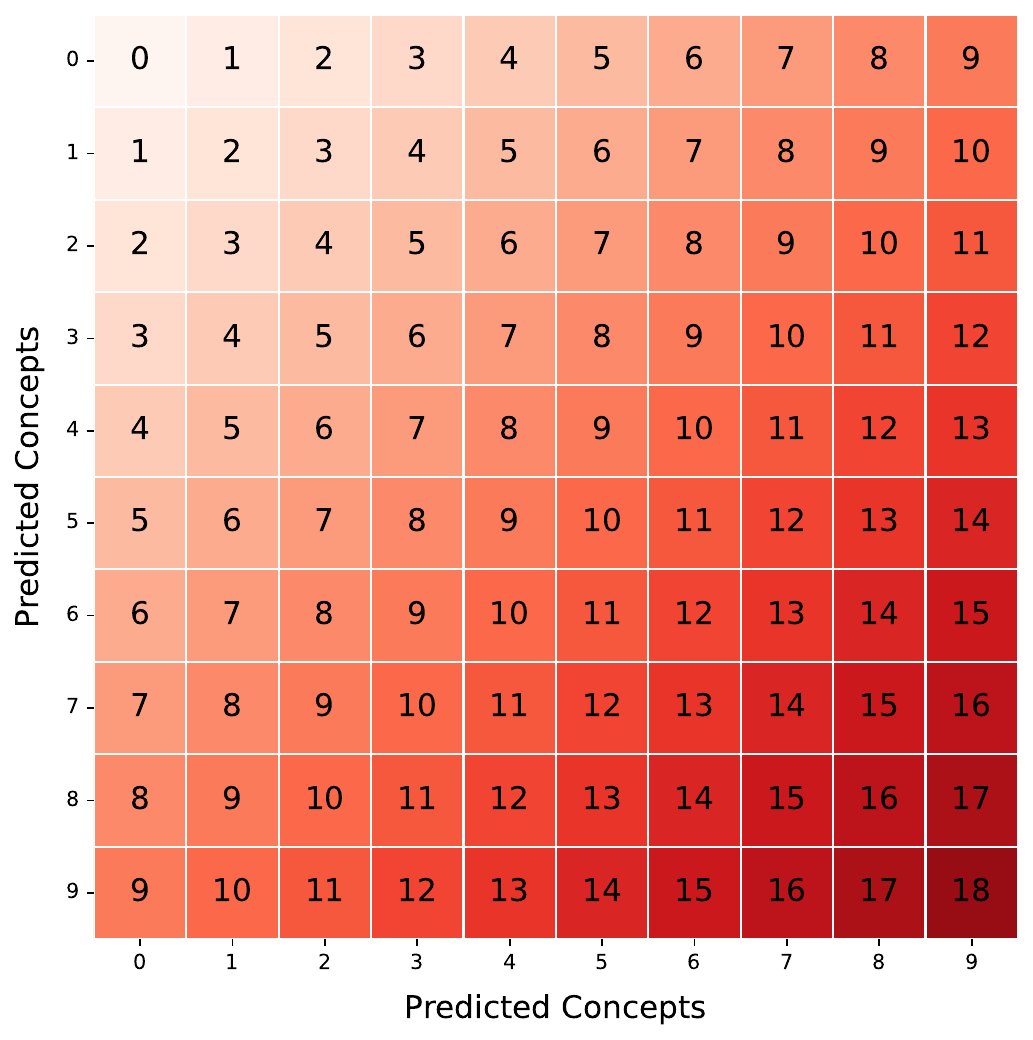}
    \end{tabular}
    \caption{\textbf{Concept confusion matrices for \CBM on \MNISTAdd} and the corresponding learned inference layer. Left two matrices: concepts and knowledge before alignment. Right two matrices: same but after alignment. The learned inference layer is visualized as a colored matrix where the numbers in the cells indicate the model's predictions.}
\end{figure}

\begin{figure}[!h]
    \centering
    \begin{tabular}{cc}
        \includegraphics[width=0.225\linewidth]{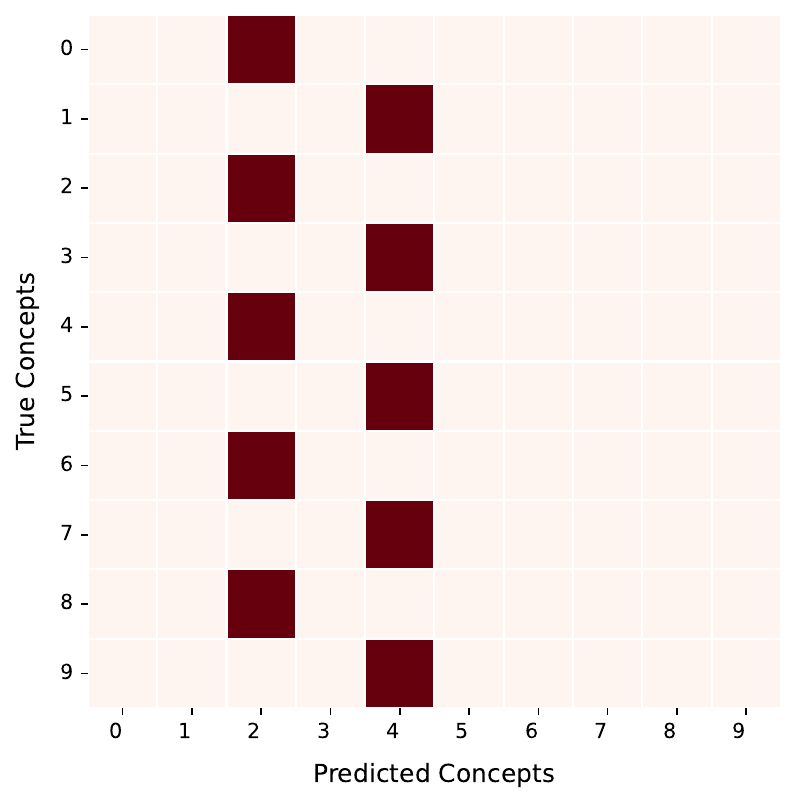}
            & \includegraphics[width=0.225\linewidth]{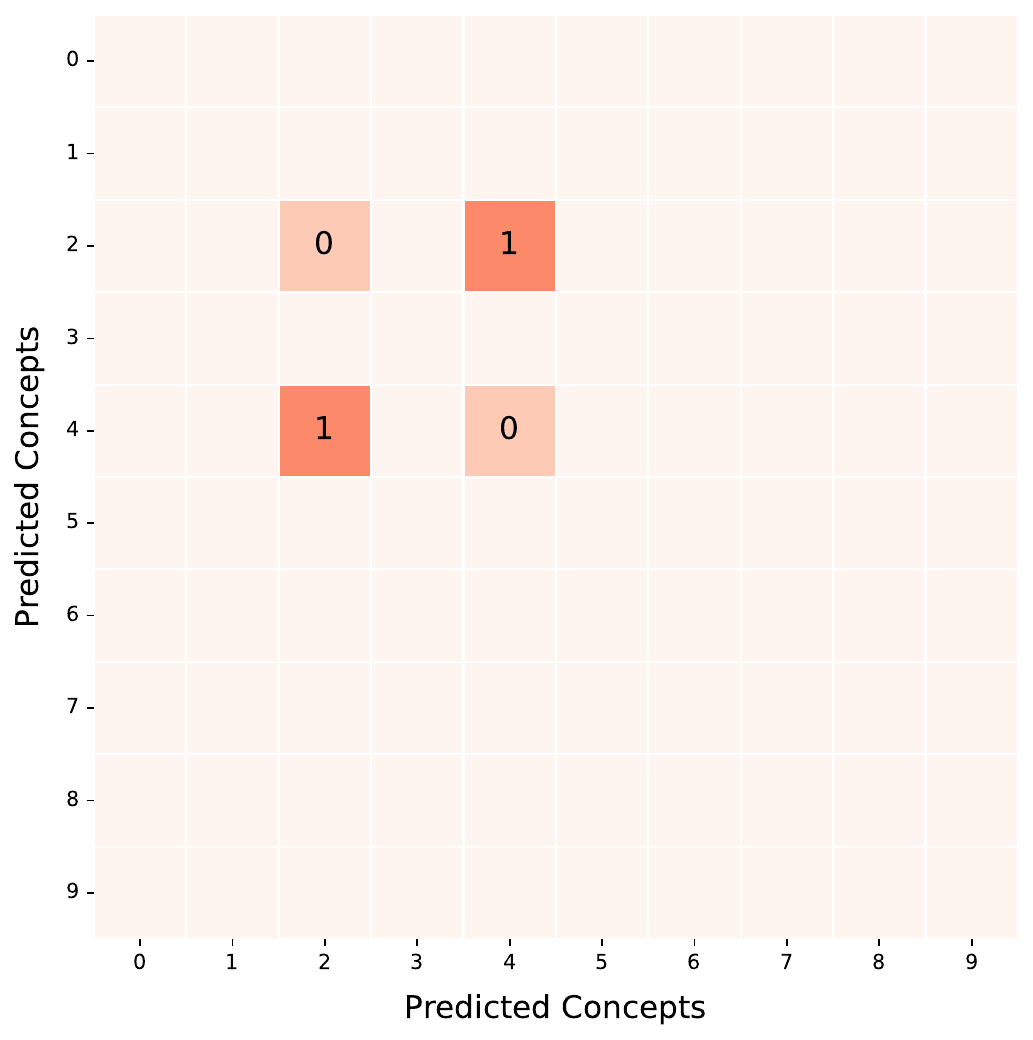}
    \end{tabular}
    \caption{Concepts and inference layer learned by \CBM on \MNISTSumXor before (two left) and after (two right) the post-hoc alignment step.  For concepts, we report the confusion matrix.  For the inference layer, we visualize the learned operation as a truth table:  numbers within cells indicate the label predicted for each combination of predicted digits.}
\end{figure}

\begin{figure}[!h]
    \centering
    \begin{tabular}{ccccc}
            & \CBM
            & \DPL
            & \DSL
            & \DSLDPL
        \\
        \rotatebox{90}{\hspace{2em}\MNISTAdd}
            & \includegraphics[width=0.225\linewidth]{figures/confusion_matrices/cbm_addition_optimal}
            & \includegraphics[width=0.225\linewidth]{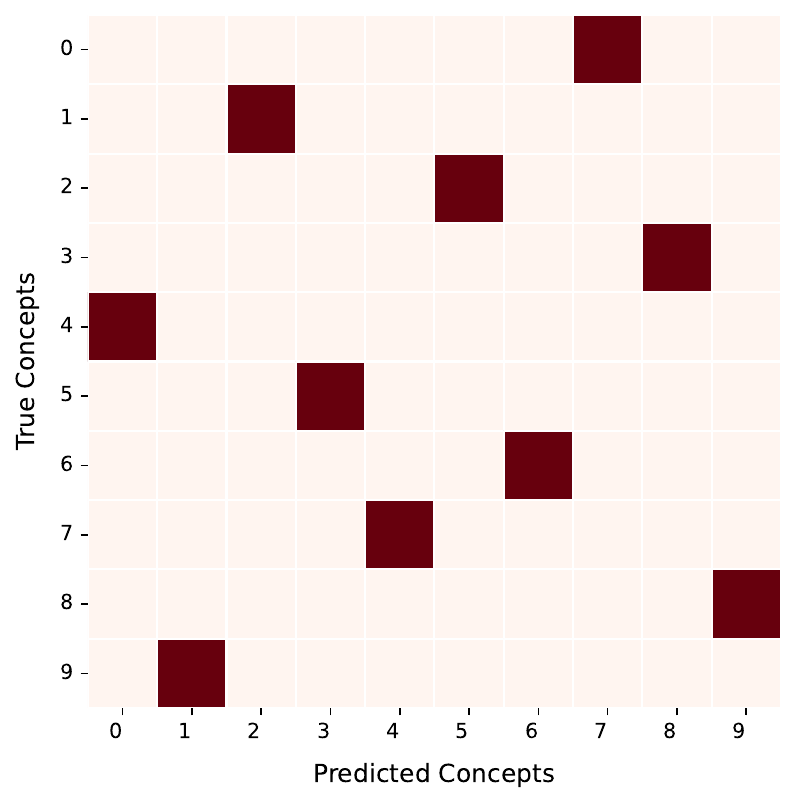}
            & \includegraphics[width=0.225\linewidth]{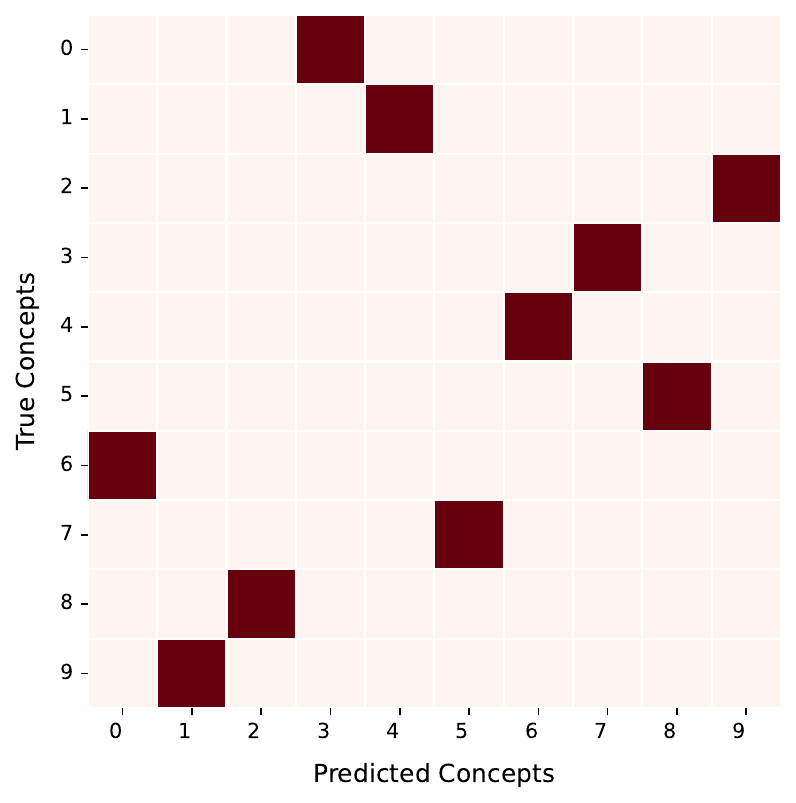}
            & \includegraphics[width=0.225\linewidth]{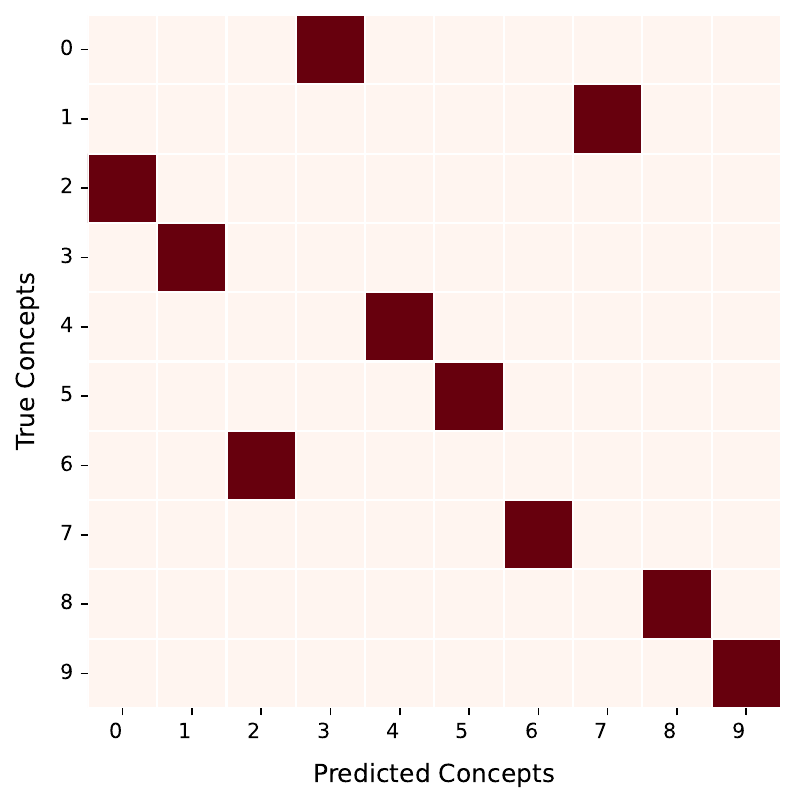}
        \\
        \rotatebox{90}{\hspace{1em}\MNISTSumXor}
            & \includegraphics[width=0.225\linewidth]{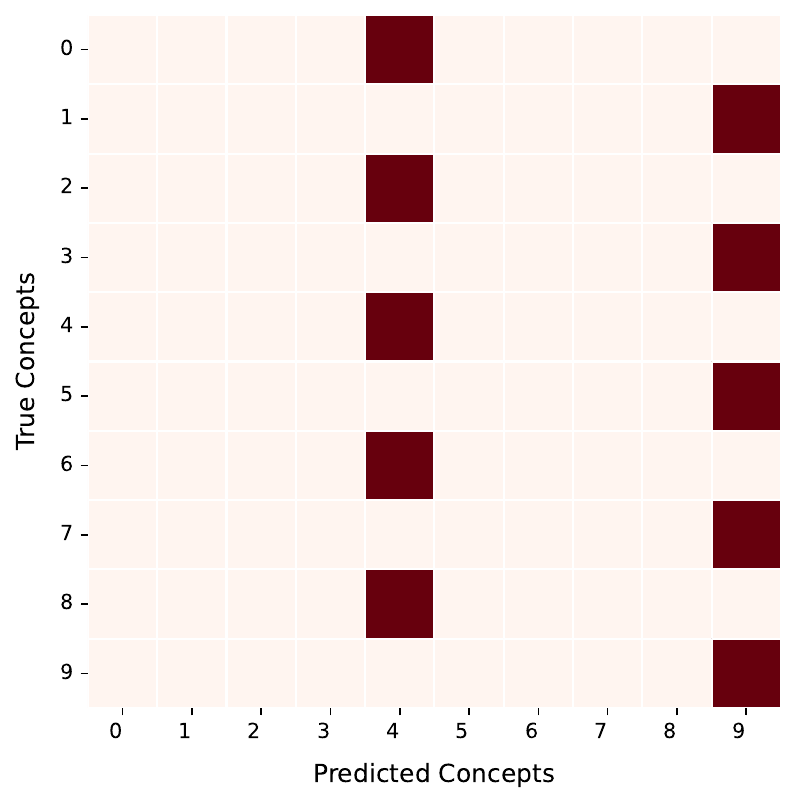}   
            & \includegraphics[width=0.225\linewidth]{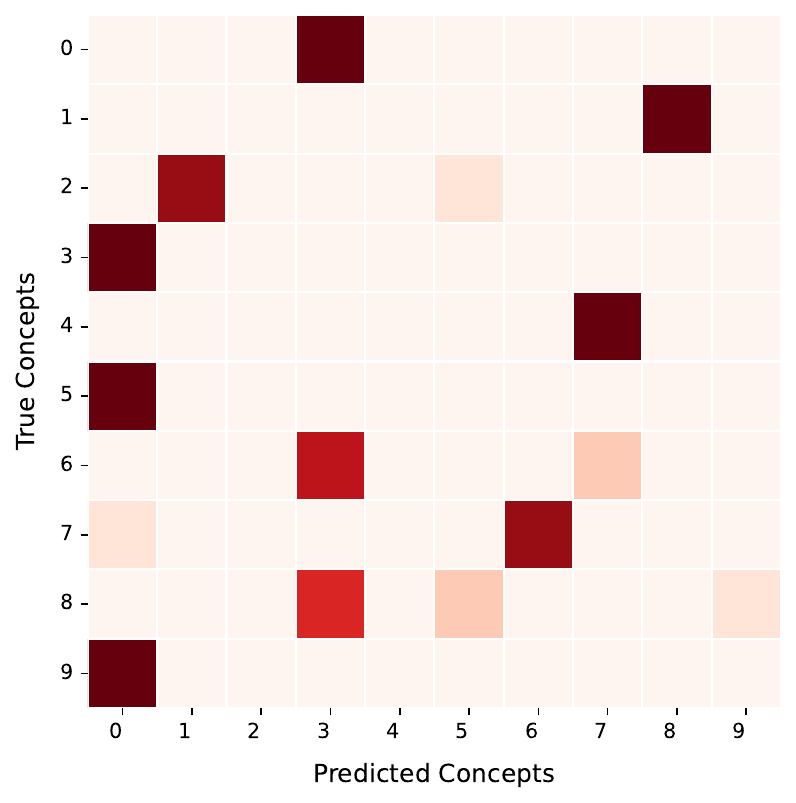}
            & \includegraphics[width=0.225\linewidth]{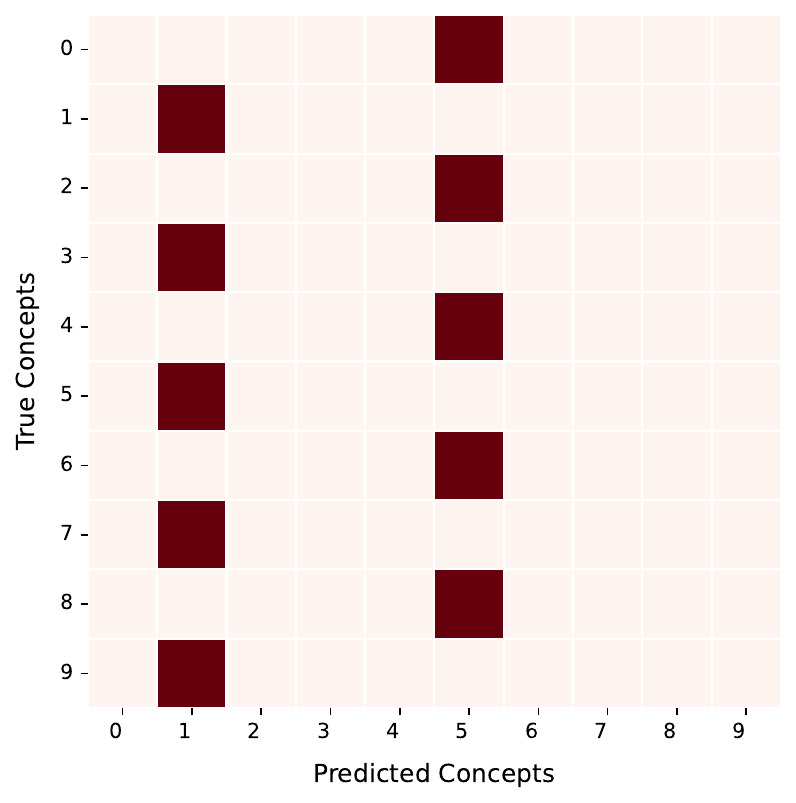}
            & \includegraphics[width=0.225\linewidth]{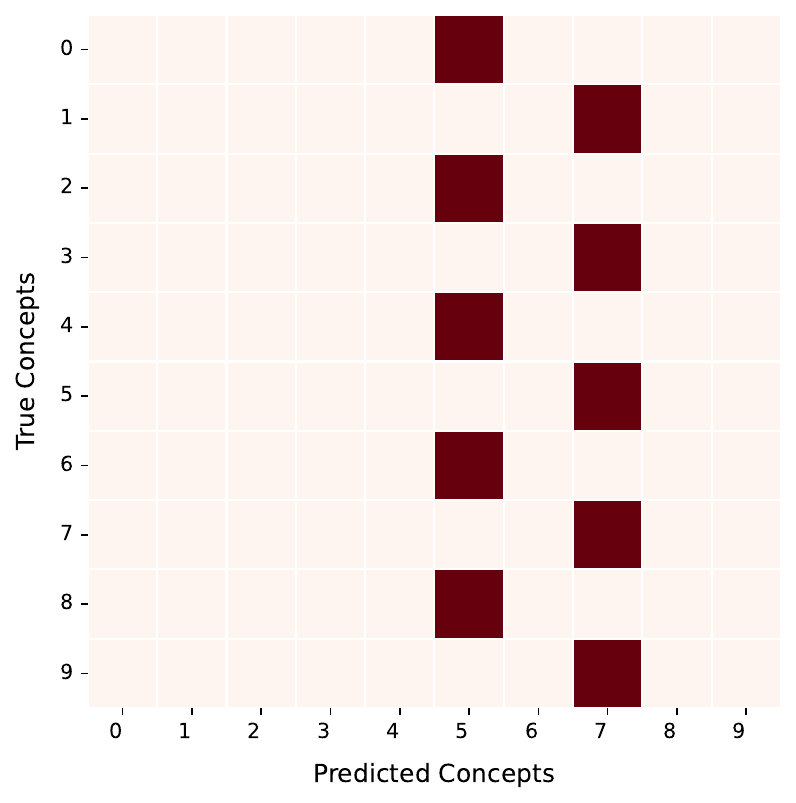}
        \\
    \end{tabular}
    \caption{\textbf{Concept confusion matrices for \MNISTSumXor}. All models tend to favor $\valpha$'s that collapse concepts together, hinting at \textit{simplicity bias}.}
\end{figure}

\begin{figure}[!h]
    \centering
    \begin{tabular}{ccc}
            & \MNISTAdd & \MNISTSumXor
        \\
        \rotatebox{90}{\hspace{8em}\SENN}
            & \includegraphics[width=0.45\linewidth]{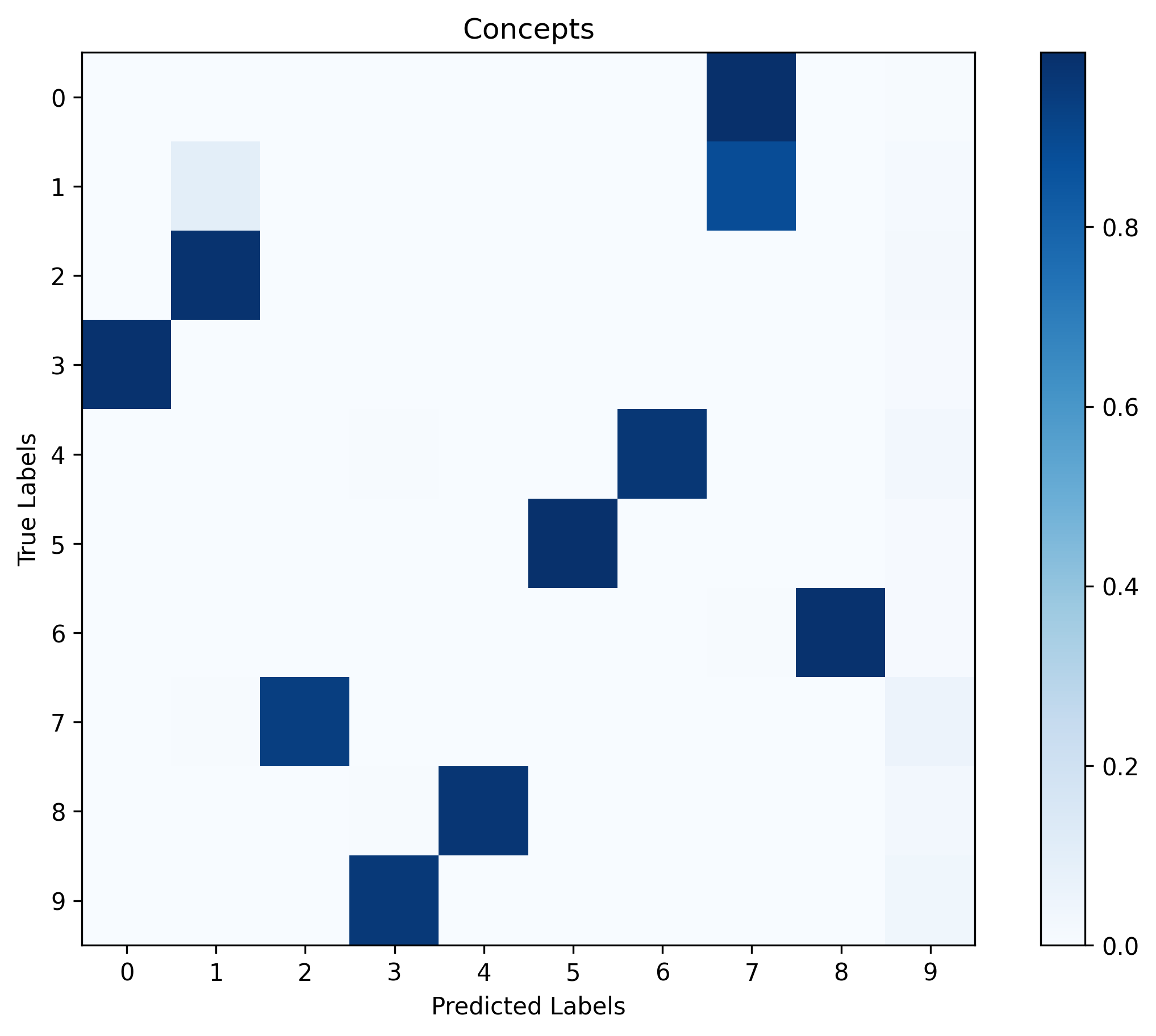}
            & \includegraphics[width=0.45\linewidth]{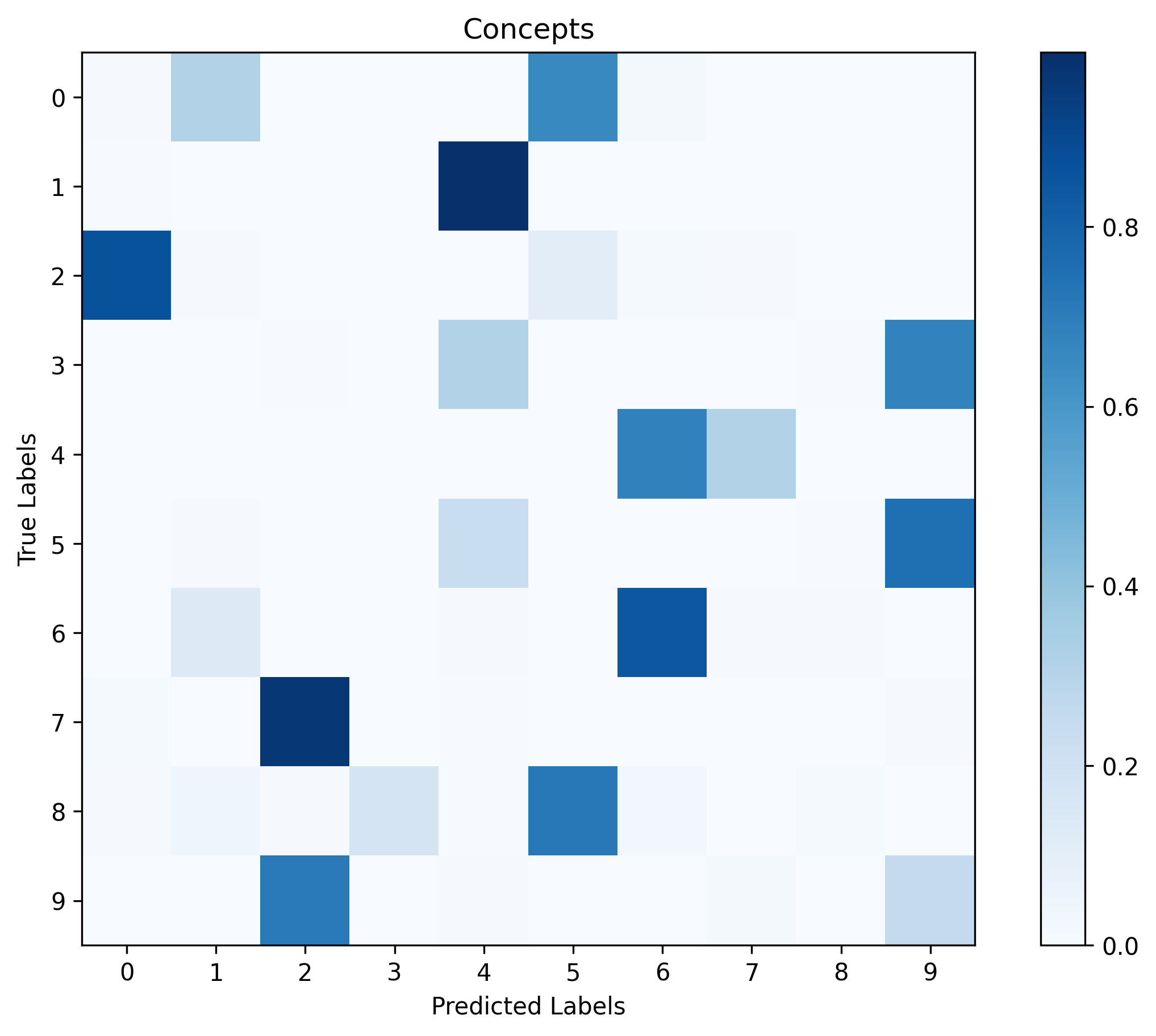} 
        \\
    \end{tabular}
    \caption{\textbf{Concept confusion matrices for \SENN on \MNISTAdd and \MNISTSumXor}. \SENN provide \emph{local} explanations for individual predictions, as their inference layer depends on the input and does \emph{not} yield a global, interpretable rule set. No simplicity bias is evident in these two confusion matrices; we hypothesize that this stems from the models’ flexibility in modeling local rules and from the reconstruction penalty applied to the concepts during training.}
    \label{fig:sen-alphas}
\end{figure}

\newpage
\subsection{Number of Reasoning Shortcuts and Joint Reasoning Shortcuts}
\label{sec:additional-results-counts}

Here, we report the approximate number of deterministic reasoning shortcuts and joint reasoning shortcuts for \MNISTAdd and \MNISTSumXor.
We obtained these numbers by extending the {\tt count-rss} software included in {\tt rsbench}~\citep{bortolotti2024benchmark} to the enumeration of JRSs, compatibly with \cref{eq:rss-count} and \cref{eq:jrs-count}.

We considered downsized versions of both \MNISTAdd and \MNISTSumXor, with input digits restricted to the range $\{0, \ldots, N\}$, and resorted to the approximate model counter {\tt approxMC}~\citep{chakraborty2016approxmc} for obtaining approximate counts in the hardest cases.
All counts assume object independence is in place (\ie that the two input digits are processed separately), for simplicity. The situation is similar when this is not the case, except that the all counts grow proportionally.

The number of RSs and JRSs for increasing $N$ are reported in \cref{tab:counts-sumparity}.
Specifically, the \#RSs column indicates the number of regular RSs obtained by fixing the prior knowledge (that is, assuming $\vbeta = \vbeta^*$) and matches the count in \citep{marconato2023not} for the same tasks.
The \#JRSs columns refer to the CBM case, where the inference layer is learned from data (\ie $\vbeta$ might not match $\vbeta^*$).  
We report two distinct counts for JRSs:  one assumes that $\vbeta$'s that match on all concepts actively output by $\valpha$ should be considered identical despite possible differences on ``inactive'' concepts that $\valpha$ predicts as constant (\#JRSs non-redundant) or not (\# JRSs).

As can be seen from the results, JRSs quickly outnumber regular RSs as $N$ grows, as anticipated in \cref{sec:joint-reasoning-shortcuts}.  This suggests that, as expected, learning CBMs with intended semantics by optimizing label likelihood is more challenging when prior knowledge is not given \textit{a priori}.

\begin{table*}[!h]
    \centering
    \caption{Number of RSs vs. JRSs in \MNISTSumXor.}
    \label{tab:counts-sumparity}
    \scriptsize
    \begin{tabular}{cccc}
    \toprule
    $N$ & \#RSs & \#JRSs & \#JRSs (non-redundant) \\
    \midrule

    3 & $(1.10 \pm 0.00) \times 10^1$        & $(3.84 \pm 0.60) \times 10^2\textcolor{white}{0}$    & $(1.20 \pm 0.00) \times 10^1$ \\
    4 & $(6.30 \pm 0.00) \times 10^1$        & $ (1.11 \pm 0.05) \times 10^5\textcolor{white}{0}$      & $(1.27 \pm 0.06) \times 10^2$\\
    5 & $(3.70 \pm 0.15) \times 10^2$    & $ (1.16 \pm 0.05) \times 10^8\textcolor{white}{0}$      & $(1.40 \pm 0.70) \times 10^3$ \\
    6 & $(2.98 \pm 0.10) \times 10^3 $    & $ (4.45 \pm 0.19) \times 10^{11}$   & $(1.74 \pm 0.13) \times 10^4 $\\
    7 & $(2.56 \pm 0.26) \times 10^4$    & $(8.08 \pm 0.33) \times 10^{15}$    & $(2.61 \pm 0.14) \times 10^5$ \\
    8 & $(2.62 \pm 0.15) \times 10^5$    & $(5.39 \pm 0.25) \times 10^{20}$    & $(4.21 \pm 0.10) \times 10^6$ \\
    \bottomrule
    \end{tabular}
\end{table*}

\newpage

\section{Theoretical Material}
\label{sec:supp-theory}

In this section, we start by summarizing the prerequisite material.  We begin by recalling a central Lemma proven in \citep{marconato2023not} that allows us to write the condition for likelihood-optimal models in terms of $\valpha$. 
For ease of comparison, 
let $p (\vC \mid \vX) := \vf(\vX)$ and $p(\vY \mid \vX) := (\vomega \circ \vf) (\vX)$ denote the conditional distributions defined by the CBM $(\vf, \vomega) \in \calF \times \Omega$, and:
\[
    p(\vY \mid \vG) := \bbE_{\vx \sim p^*(\vX \mid \vG)} [ p (\vY \mid \vx) ]
\]
Also, we consider the following joint distributions:
\[
    p^*(\vX, \vY) = p^*(\vY \mid \vX) p^*(\vX), \quad p(\vX, \vY) = p(\vY \mid \vX) p^*(\vX)
\]

\begin{lemma}[\citep{marconato2023not}]
    \label{lemma:abstraction-from-lh}
    It holds that:
    (\textit{i})  The true risk of $p$ can be upper bounded as follows:
    \begin{align}
    \textstyle
        \bbE_{(\vx, \vy) \sim p^*(\vX , \vY)} [
            \log p(\vy \mid \vx)
        ] 
        &= - \KL[ p^*(\vX; \vY) \ \| \ p(\vX, \vY)  ] - H[ p^*(\vX, \vY) ]
        ] 
        \\
        &\leq
        \bbE_{\vg \sim p(\vG)} \big(
            - \KL [ p^*(\vY \mid \vg; \BK) \ \| \ p(\vY \mid \vg) ] - H [ p^*(\vY \mid \vg; \BK) ]
        \big)
    \label{eq:dpl-upper-bound}
    \end{align}
    where 
    \KL is the Kullback-Leibler divergence and H is the Shannon entropy.  Moreover, under \cref{assu:concepts,assu:labels}, $p(\vY \mid \vX; \BK)$ is an optimum of the LHS of \cref{eq:dpl-upper-bound} if and only if $p(\vY \mid \vG)$ is an optimum of the RHS.
    (\textit{ii}) There exists a bijection between the deterministic concept distributions $p(\vC \mid \vX)$ that, for each $\vg \in \mathrm{supp}(\vG)$, are constant over the support of $p^*(\vX \mid \vg)$ apart for zero-measure subspaces $\calX^0 \subset \mathsf{supp}{( p^*(\vX \mid \vg))}$
    and the deterministic distributions of the form $p(\vC \mid \vG)$.
\end{lemma}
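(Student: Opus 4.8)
The plan is to handle the two parts by separate mechanisms: part (i) follows from a cross-entropy/KL decomposition sharpened by a single application of Jensen's inequality, while part (ii) is a measure-theoretic descent through the deterministic map $\vf^*$ granted by \cref{assu:concepts}. For the equality in (i), I would start from $\bbE_{(\vx,\vy)\sim p^*}[\log p(\vy\mid\vx)]$ and note that, since $p(\vX,\vY)$ and $p^*(\vX,\vY)$ are built with the same marginal $p^*(\vX)$, the likelihood ratio collapses to $p^*(\vy\mid\vx)/p(\vy\mid\vx)$. Adding and subtracting $\log p^*(\vy\mid\vx)$ inside the expectation then regroups the terms into the averaged conditional divergence $\bbE_{\vx}\KL[p^*(\vY\mid\vx)\,\|\,p(\vY\mid\vx)]$ and the label entropy, recovering the stated $-\KL-H$ decomposition modulo the model-independent $\vX$-marginal entropy $H[p^*(\vX)]$, which is irrelevant to the optimisation.

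For the inequality, I would invoke the data-generation factorization: under \cref{assu:concepts} the ground-truth concepts satisfy $\vg=\vf^*(\vx)$, so $\vX$ and $\vY$ are conditionally independent given $\vG$ and $p^*(\vY\mid\vX)=p^*(\vY\mid\vf^*(\vX);\BK)$. Conditioning the expectation on $\vg$, interchanging the $\vx$- and $\vy$-expectations, and applying Jensen's inequality to the concave logarithm lets me pull the inner average over $\vx\sim p^*(\vX\mid\vg)$ inside the log; the resulting argument $\bbE_{\vx\sim p^*(\vX\mid\vg)}[p(\vY\mid\vx)]$ is exactly the abstracted conditional $p(\vY\mid\vg)$ defined before the lemma. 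Re-expressing the bound with the same cross-entropy decomposition, now applied to the $\vg$-conditionals, produces the stated right-hand side, with $p(\vG)=p^*(\vG)$ because $\vf^*$ is deterministic.

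For the ``moreover'' claim I would observe that \cref{assu:labels} makes $p^*(\vY\mid\vg;\BK)=\vbeta^*(\vg)$ a point mass, so every entropy term $H[p^*(\vY\mid\vg;\BK)]$ vanishes and the right-hand side is maximised at $0$, attained exactly when each KL term is zero, i.e.\ $p(\vY\mid\vg)=\vbeta^*(\vg)$. The left-hand side is likewise nonpositive and equals $0$ iff the model assigns the correct label probability one $p^*$-almost surely, which is precisely the configuration that makes the Jensen step tight. Since both problems attain their optimum at value $0$ on exactly this set of models, an optimiser of one is an optimiser of the other, which yields the claimed equivalence.

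Finally, for part (ii) I would use \cref{assu:concepts} to partition $\bbR^n$ into the fibers $\calX_\vg:=\{\vx:\vf^*(\vx)=\vg\}$ indexed by $\vg\in\supp(\vG)$. A deterministic $p(\vC\mid\vX)$ that is constant over $\supp(p^*(\vX\mid\vg))$ outside a null set $\calX^0$ descends to a well-defined $\vg\mapsto p(\vC\mid\vg)$ by evaluation at any non-exceptional fiber point, while any deterministic $p(\vC\mid\vG)$ pulls back to $p(\vC\mid\vx):=p(\vC\mid\vf^*(\vx))$, which is automatically constant on fibers; these assignments are mutually inverse once one quotients by modifications on measure-zero sets. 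The main obstacle I anticipate is not the probabilistic content but the bookkeeping of the exceptional null sets, ensuring that descent and pullback are genuine inverses at the level of equivalence classes and that the point-mass (vertex) property of the outputs is preserved in both directions.
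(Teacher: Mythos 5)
The paper does not actually prove this lemma --- it is recalled verbatim from \citet{marconato2023not} as prerequisite material --- but your blind reconstruction is correct and follows essentially the same route as that source: the KL/entropy decomposition of the expected log-likelihood (where you rightly flag that the stated equality silently drops the model-independent term $H[p^*(\vX)]$), conditional independence of $\vX$ and $\vY$ given $\vG$ from the generative process plus Jensen's inequality to pass from $p(\vY \mid \vx)$ to $p(\vY \mid \vg) = \bbE_{\vx \sim p^*(\vX \mid \vg)}[p(\vY \mid \vx)]$, collapse of both optima to zero under \cref{assu:concepts,assu:labels}, and fiber-wise descent/pullback along $\vf^*$ modulo null sets for part (ii). The only point worth tightening is the ``moreover'' step: equality in Jensen characterizes almost-sure \emph{constancy}, not correctness, so the cleaner argument is that $p(\vy \mid \vg)$ is an average of $[0,1]$-valued quantities and hence equals $1$ iff $p(\vy \mid \vx) = 1$ for $p^*(\vX \mid \vg)$-almost every $\vx$, which is exactly what identifies the LHS- and RHS-optimizer sets.
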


Point (i) connects the optima of the likelihood $p (\vY \mid \vX)$ and the optima of $p (\vY \mid \vG)$. This implies that, under \cref{assu:concepts,assu:labels}, knowing the optima of $p (\vY \mid \vG)$ informs us about optimal CBMs $p (\vY \mid \vX)$.
Notice that a single $\valpha(\vG) = p (\vC \mid \vG)$ might correspond to multiple choices of $\vf \in \calF$. However, by (ii), if we restrict ourselves to deterministic distributions $p(\vC \mid \vG)$, the correspondence with distributions $p(\vC \mid \vX)$ that are almost constant in the support becomes one-to-one.

Next, we describe which distributions $p (\vY \mid \vG)$ are both deterministic -- and thus comply with (ii) -- and optimal.

\begin{lemma}[Deterministic optima of the likelihood]
    \label{lemma:deterministic-optima}
    Under \cref{assu:concepts,assu:labels}, for all CBMs $(\vf, \vomega) \in \calF \times \Omega$ where, for all $\vg \in \calG$, $\vf:\vx \mapsto \vc$ is constant over the support of $p^*(\vX \mid \vg)$ apart for a zero-measure subspace $\calX^0 \subset \mathsf{supp}{( p^*(\vX \mid \vg))}$, it holds that the optima of the likelihood for $p(\vY \mid \vG)$ are obtained when:
    \[
        \forall \vg \in \mathsf{supp}(p^*(\vG)), \; (\vbeta \circ \valpha)(\vg) = \vbeta^* (\vg)
    \]
    Here, $\valpha (\vg) :=  \bbE_{\vx \sim p^*(\vX \mid \vg)} [ \vf(\vx) ]  \in \Vset{\calA} $ and $\vbeta (\vc) := \vomega{( \Ind{\vC = \vc}) } \in \Vset{\calB}$. 
\end{lemma}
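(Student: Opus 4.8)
The plan is to reduce likelihood-optimality to a pointwise matching of conditional label distributions via \cref{lemma:abstraction-from-lh}, and then to rewrite that matching in terms of $\valpha$ and $\vbeta$ using the assumed constancy of $\vf$ on each fibre $\supp(p^*(\vX \mid \vg))$. First I would invoke part (i) of \cref{lemma:abstraction-from-lh}: under \cref{assu:concepts,assu:labels}, a CBM maximises the true log-likelihood (the LHS of \cref{eq:dpl-upper-bound}) if and only if it maximises the RHS,
\[
    \bbE_{\vg \sim p^*(\vG)}\big(
        -\KL[p^*(\vY \mid \vg; \BK) \,\|\, p(\vY \mid \vg)] - H[p^*(\vY \mid \vg; \BK)]
    \big).
\]
Since the entropy term $H[p^*(\vY \mid \vg; \BK)]$ depends only on the fixed ground-truth distribution and not on the model, maximising this is equivalent to minimising the weighted divergence $\bbE_{\vg \sim p^*(\vG)}\KL[p^*(\vY \mid \vg; \BK) \,\|\, p(\vY \mid \vg)]$. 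Each summand is a KL divergence, hence non-negative, and its $p^*(\vG)$-weighted average vanishes exactly when every term of positive weight is zero; by Gibbs' inequality this occurs iff $p(\vY \mid \vg) = p^*(\vY \mid \vg; \BK)$ for all $\vg \in \supp(p^*(\vG))$.

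Next I would translate both sides into the $(\valpha, \vbeta)$ language. By \cref{assu:labels} the right-hand side is $\vbeta^*(\vg)$. For the left-hand side, the hypothesis is that $\vf$ is constant over $\supp(p^*(\vX \mid \vg))$ up to a zero-measure set, say $\vf(\vx) = \Ind{\vC = \vc_{\vg}}$ for almost every such $\vx$. As the excluded subset $\calX^0$ carries no mass, it contributes nothing to the relevant expectations, so $\valpha(\vg) = \bbE_{\vx \sim p^*(\vX \mid \vg)}[\vf(\vx)] = \Ind{\vC = \vc_{\vg}} \in \Vset{\calA}$ is a vertex, and
\[
    p(\vY \mid \vg)
        = \bbE_{\vx \sim p^*(\vX \mid \vg)}[(\vomega \circ \vf)(\vx)]
        = \vomega\big(\Ind{\vC = \vc_{\vg}}\big)
        = \vbeta(\vc_{\vg})
        = (\vbeta \circ \valpha)(\vg),
\]
the last equality being the definition of $\vbeta$ (\cref{eq:def-beta}) evaluated at the support point of the vertex $\valpha(\vg)$. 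Substituting both rewrites into the matching condition gives $(\vbeta \circ \valpha)(\vg) = \vbeta^*(\vg)$ for all $\vg \in \supp(p^*(\vG))$, which is the claim; the characterisation is an ``iff'' because the KL step and the rewriting are both equivalences.

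The main obstacle I anticipate is the careful bookkeeping at the interface between distributions and values: one must check that the a.e.-constancy of $\vf$ legitimately lets one (a) pull the fixed output $\Ind{\vC = \vc_{\vg}}$ out of the expectation over $\vx$ — using that $\calX^0$ does not alter the integral — and (b) identify $\vomega$ on the resulting point mass with $\vbeta(\vc_{\vg})$, so that $\valpha(\vg)$ and $\vbeta$ indeed land on the vertices $\Vset{\calA}$ and $\Vset{\calB}$ as asserted. A secondary point worth stating explicitly is that the optimality condition is imposed only on $\supp(p^*(\vG))$, since concepts of zero weight impose no constraint in the expected KL — this is precisely the slack that makes non-trivial JRSs possible and that the counting in \cref{thm:count-jrss} later exploits.
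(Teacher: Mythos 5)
Your proposal is correct and follows essentially the same route as the paper's proof: both reduce optimality to the pointwise condition $p(\vY \mid \vg) = \vbeta^*(\vg)$ on $\supp(p^*(\vG))$ via \cref{lemma:abstraction-from-lh}, and both use the a.e.-constancy of $\vf$ to rewrite $p(\vY \mid \vg) = \bbE_{\vx \sim p^*(\vX \mid \vg)}[(\vomega \circ \vf)(\vx)]$ as $(\vbeta \circ \valpha)(\vg)$. The only difference is that you spell out the KL/Gibbs argument behind the ``necessary and sufficient condition for maximum log-likelihood'' step, which the paper states tersely by appeal to the same lemma.
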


\begin{proof}
    Notice that by \cref{lemma:abstraction-from-lh}, it holds that $\valpha \in \Vset{\calA}$ is a deterministic conditional distribution. Since $\vf: \vx \to \vc$ is constant over all $\vx \sim p^*(\vX \mid \vg)$ for a fixed $\vg \in \calG$, it holds that $\vf(\vx) = \valpha (\vg)$ apart for some $\vx$ in a measure-zero subspace $\calX^0 \subset \mathsf{supp}{( p^*(\vX \mid \vg))}$. Therefore we get:
    \[
        \bbE_{\vx \sim p^*(\vX \mid \vg) } [ \vomega (\vf (\vx))] = \vomega( \valpha(\vg)) = (\vbeta \circ \valpha) (\vg)
        \label{eq:relation-f-alpha}
    \]
    In the last line, we substitute $\vomega$ with $\vbeta$ since $\valpha$ is deterministic. For $p (\vY \mid \vG)$, the necessary and sufficient condition for maximum log-likelihood is:
    \[
        \forall \vg \in \mathsf{supp}(p^*(\vG)), \; p(\vY \mid \vg) = \vbeta^* (\vg)
    \]
    and substituting \cref{eq:relation-f-alpha} we get:
    \[
        \forall \vg \in \mathsf{supp}(p^*(\vG)), \; (\vbeta \circ \valpha) (\vg) = \vbeta^* (\vg)
    \]
    This proves the claim.
\end{proof}

\subsection{Equivalence relation given by Definition \ref{def:intended-semantics}}
\label{sec:proof-equivalence-relation}

We start by proving that \cref{def:intended-semantics} defines an equivalence relation.

\begin{proposition}
    \cref{def:intended-semantics} defines an equivalence relation $\sim$ between pairs $(\valpha, \vbeta)$, $(\valpha', \vbeta') \in \calA \times \calB$, as follows: $(\valpha, \vbeta) \sim (\valpha', \vbeta')$ iff there exist a permutation $\pi:[k] \to [k]$ and $k$ element-wise invertible functions $\psi_1, \ldots, \psi_k$ such that:
    \begin{align}
        &\forall \vg \in \calG, \; \valpha (\vg) = (\vP_\pi \circ \vpsi \circ \valpha') (\vg) \\
        &\forall \vc \in \calC, \; \vbeta (\vc) = ( \vbeta' \circ \vpsi^{-1} \circ \vP^{-1}_\pi  ) (\vc)
    \end{align}
    where $\vP_\pi: \calC \to \calC$ is the permutation matrix induced by $\pi$ and $\vpsi(\vc) := (\psi_1(c_1), \ldots, \psi_k(c_k))$.
\end{proposition}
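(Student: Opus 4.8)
The plan is to reframe the relation as the orbit relation of a group acting on $\calA \times \calB$, since orbits of any group action automatically partition the underlying set and hence define an equivalence relation. Concretely, for every permutation $\pi \in S_k$ and every tuple $\vpsi = (\psi_1, \ldots, \psi_k)$ of element-wise invertible maps, set $T_{\pi,\vpsi} := \vP_\pi \circ \vpsi : \calC \to \calC$. Each such bijection of $\calC$ induces, by permuting the vertices of the simplex, a linear automorphism of $\Delta_\calC$, which I also denote $T_{\pi,\vpsi}$; this assignment is a homomorphism, so the algebra below transports from $\calC$ to $\Delta_\calC$. With this notation the two conditions of the Proposition read $\valpha = T_{\pi,\vpsi} \circ \valpha'$ and $\vbeta = \vbeta' \circ T_{\pi,\vpsi}^{-1}$, i.e. $(\valpha,\vbeta) = T_{\pi,\vpsi} \cdot (\valpha',\vbeta')$ under the prescription $T \cdot (\valpha,\vbeta) := (T \circ \valpha,\ \vbeta \circ T^{-1})$.

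First I would show that the family $G := \{\, T_{\pi,\vpsi} \,\}$ is a group under composition. The only nontrivial ingredient is a commutation identity: for any $\pi$ and $\vpsi$ one has $\vpsi \circ \vP_\pi = \vP_\pi \circ \vpsi_\pi$, where $\vpsi_\pi$ is the element-wise map obtained by permuting the component functions $\psi_i$ according to $\pi$ (each component remains invertible, so $\vpsi_\pi$ is again element-wise invertible). Granting this, closure under composition follows from
\[
    T_{\pi_1,\vpsi_1} \circ T_{\pi_2,\vpsi_2}
    = \vP_{\pi_1} \circ (\vpsi_1 \circ \vP_{\pi_2}) \circ \vpsi_2
    = \vP_{\pi_1 \pi_2} \circ \big( (\vpsi_1)_{\pi_2} \circ \vpsi_2 \big)
    = T_{\pi_1 \pi_2,\, \vpsi'},
\]
with $\vpsi' := (\vpsi_1)_{\pi_2} \circ \vpsi_2$ element-wise invertible; closure under inversion from $T_{\pi,\vpsi}^{-1} = \vpsi^{-1} \circ \vP_\pi^{-1} = \vP_{\pi^{-1}} \circ (\vpsi^{-1})_{\pi^{-1}} \in G$; and the neutral element is $T_{\pi,\vpsi}$ with $\pi$ the identity permutation and each $\psi_i = \mathrm{id}$, which is the identity map. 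Thus $G$ is a group (the wreath-product group of coordinate permutations and per-coordinate bijections).

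Next I would check that $T \cdot (\valpha,\vbeta) := (T \circ \valpha,\ \vbeta \circ T^{-1})$ is a genuine left action of $G$ on $\calA \times \calB$. The identity axiom is immediate, and the composition axiom is the short computation
\[
    (T_1 \circ T_2) \cdot (\valpha,\vbeta)
    = \big(\, T_1 \circ T_2 \circ \valpha,\ \vbeta \circ T_2^{-1} \circ T_1^{-1} \,\big)
    = T_1 \cdot \big( T_2 \cdot (\valpha,\vbeta) \big),
\]
which uses only associativity of $\circ$ and $(T_1 \circ T_2)^{-1} = T_2^{-1} \circ T_1^{-1}$. Since $(\valpha,\vbeta) \sim (\valpha',\vbeta')$ holds exactly when the two pairs lie in the same $G$-orbit, $\sim$ is reflexive, symmetric, and transitive. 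Equivalently, one checks the three axioms by hand: reflexivity uses the neutral element, symmetry amounts to acting by $T^{-1} \in G$ (whose effect on $\valpha$ and $\vbeta$ exactly exchanges the two conditions), and transitivity amounts to acting by the composite $T_1 \circ T_2 \in G$ — both memberships being furnished by the closure properties above.

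The main obstacle is the commutation identity $\vpsi \circ \vP_\pi = \vP_\pi \circ \vpsi_\pi$, together with the attendant index bookkeeping: one must verify that conjugating an element-wise map by a coordinate permutation returns an element-wise map, i.e. that the permutations \emph{normalize} the subgroup of per-coordinate bijections. When all concepts share the same value set this is immediate; if value sets differ across coordinates, one additionally needs $\pi$ to permute only coordinates with matching value sets (or to read $\psi_i$ as bijections between the relevant value sets), which is already implicit in the requirement that $\vP_\pi$ be a well-defined map $\calC \to \calC$. Once this identity is established, every remaining step is routine group theory.
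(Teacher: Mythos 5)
Your proof is correct, and its computational core coincides with the paper's: the commutation identity between $\vP_\pi$ and $\vpsi$ (permutations normalize the subgroup of element-wise maps) is exactly the paper's opening step, your closure-under-composition computation is exactly the paper's transitivity step (the paper defines $\vP_{\pi^\dagger} := \vP_\pi \circ \vP_{\pi'}$ and $\vpsi^\dagger := \vpsi_{\pi'^{-1}} \circ \vpsi'$, which is your $T_{\pi_1,\vpsi_1} \circ T_{\pi_2,\vpsi_2} = T_{\pi_1\pi_2,\vpsi'}$), and your formula for $T_{\pi,\vpsi}^{-1}$ is the paper's symmetry step. The difference is organizational: the paper verifies reflexivity, symmetry, and transitivity by hand on the pairs $(\valpha,\vbeta)$, whereas you bundle the same computations into a group structure (the wreath product of coordinate permutations with per-coordinate bijections), define the action $T \cdot (\valpha,\vbeta) := (T \circ \valpha,\ \vbeta \circ T^{-1})$, and invoke the fact that orbits partition $\calA \times \calB$. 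What your packaging buys: the three axioms become automatic once closure is established, and it forces explicitness about a point the paper glosses over, namely that a bijection of $\calC$ must be lifted to a linear automorphism of $\Delta_\calC$ (push-forward on vertices) before it can be post-composed with $\valpha' : \calG \to \Delta_\calC$, and that this lift respects composition. What the paper's direct route buys: it is elementary and self-contained, with no group-theoretic vocabulary. One small caution on your side: your version of the commutation identity, $\vpsi \circ \vP_\pi = \vP_\pi \circ \vpsi_\pi$, makes your $\vpsi_\pi$ the conjugate by $\vP_\pi^{-1}$, i.e.\ its components are the $\psi_i$ permuted by $\pi^{-1}$ rather than $\pi$ (the paper conjugates in the opposite direction, $\vpsi_\pi := \vP_\pi \circ \vpsi \circ \vP_\pi^{-1}$); this is exactly the index bookkeeping you flagged and does not affect the argument, since either version establishes that permutations normalize the element-wise maps.
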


\begin{proof}
    It is useful to analyze how $\vP_\pi$ and $\vpsi$ are related. Let for any $\vc \in \calC$:
    \begin{align}
        (\vP_\pi \circ \vpsi)(\vc) 
        &= \vP_\pi ( \psi_1(c_1), \ldots, \psi_k(c_k) ) \\
        &= ( \psi_{\pi(1)}(c_{\pi(1)}), \ldots, \psi_{\pi(k)}(c_{\pi(k)}) )
    \end{align}
    Using the shorthand $\vpsi_\pi(\vc) := ( \psi_{\pi(1)}(c_1 ), \ldots, \psi_{\pi(k)}(c_k) )$, we have that:
    \[
        (\vP_\pi \circ \vpsi) (\vc) = (\vpsi_\pi \circ \vP_\pi) (\vc) \label{eq:psi-pi-transform}
    \]
    From this expression, notice that for all $\vc \in \calC$ it holds that:
    \begin{align}
        (\vP_\pi \circ \vpsi) (\vc) &= (\vP_\pi \circ \vpsi \circ \vP^{-1}_{\pi} \circ \vP_\pi )(\vc) \\
        (\vpsi_\pi \circ \vP_\pi) (\vc) &= \big( (\vP_\pi \circ \vpsi \circ \vP^{-1}_{\pi}) \circ \vP_\pi \big) (\vc)
    \end{align}
    so that we can equivalently write $\vpsi_\pi:= \vP_\pi \circ \vpsi \circ \vP^{-1}_{\pi}$.

    \textbf{Reflexivity}. This follows by choosing $\vP_\pi = \mathrm{id}$ and similarly $\psi_i = \mathrm{id}$ for all $i \in [k]$.

    \textbf{Symmetry}. We have to prove that $(\valpha, \vbeta) \sim (\valpha', \vbeta') \implies (\valpha', \vbeta') \sim (\valpha, \vbeta)$. Since $(\alpha, \beta) \sim (\alpha', \beta')$, we can write $\alpha$ in terms of $\alpha'$ as follows:
    \begin{align}
        \valpha(\vg) &= (\vP_\pi \circ \vpsi \circ \valpha' )(\vg) \\
                     &= (\vpsi_\pi \circ \vP_\pi \circ  \valpha' )(\vg)
    \end{align}
    where in the last step we used \cref{eq:psi-pi-transform}. By first inverting $\vpsi_\pi$ and then $\vP_\pi$, we obtain that:
    \[
        \valpha'(\vg) = (\vP_\pi^{-1} \circ \vpsi_\pi^{-1} \circ \valpha)(\vg)
    \]
    The inverses exist by construction/definition. Showing the symmetry of $\valpha$. With similar steps, we can show that a similar relation also holds for $\vbeta'$, that is for all $\vc \in \calC$:
    \begin{align}
        \vbeta' (\vc)  = (\vbeta \circ \vpsi_\pi \circ \vP_\pi) (\vc)
    \end{align}

    \textbf{Transitivity}. We have to prove that if $(\valpha, \vbeta) \sim (\valpha', \vbeta')$ and $(\valpha', \vbeta') \sim (\valpha^\dagger, \vbeta^\dagger) $ then also $(\valpha, \vbeta) \sim (\valpha^\dagger, \vbeta^\dagger)$. We start from the expression of $\valpha$ and $\valpha'$, where we have that $\forall \vg \in \calG$:
    \begin{align}
        \valpha(\vg) &= (\vP_\pi \circ \vpsi \circ \valpha' )(\vg) \\
        \valpha'(\vg) &= (\vP_{\pi'} \circ \vpsi' \circ \valpha^\dagger)(\vg)
    \end{align}
    We proceed by substituting the expression of $\valpha'$ in $\valpha$ to obtain:
    \begin{align}
        \valpha(\vg) &= (\vP_\pi \circ \vpsi \circ \vP_{\pi'} \circ \vpsi' \circ \valpha^\dagger)(\vg) \\
        &= (\vP_\pi \circ \vP_{\pi'} \circ \vP_{\pi'}^{-1} \circ \vpsi \circ \vP_{\pi'} \circ \vpsi' \circ \valpha^\dagger)(\vg) 
        \tag{Composing $\vP_\pi$ with the identity $\vP_{\pi'} \circ \vP_{\pi'}^{-1}$}
        \\
        &= (\vP_\pi \circ \vP_{\pi'} \circ \vpsi_{{\pi'}^{-1}} \circ \vpsi' \circ \valpha^\dagger)(\vg) 
        \tag{Using that $\vpsi_{{\pi'}^{-1}} := \vP_{\pi'}^{-1} \circ \vpsi \circ \vP_{\pi'}$.} \\
        &= (\vP_{\pi^\dagger} \circ \vpsi^\dagger \circ \valpha^\dagger) (\vg)
    \end{align}
    where we defined $\vP_{\pi^\dagger} := \vP_\pi \circ \vP_{\pi'}$ and $\vpsi^\dagger := \vpsi_{{\pi'}^{-1}} \circ \vpsi'$. 
    With similar steps, we obtain that $\vbeta$ can be related to $\vbeta^\dagger$ as:
    \[
        \vbeta (\vc) =  (\vbeta^\dagger \circ {\vpsi^{\dagger}}^{-1} \circ \vP_{\pi^\dagger}^{-1}) (\vc)
    \]
    for all $\vc \in \calC$. This shows the equivalence relation of \cref{def:intended-semantics}.
\end{proof}

We now prove how solutions with intended semantics (\cref{def:intended-semantics}) relate to the optima of the log-likelihood:

\begin{lemma}[Intended semantics entails optimal models]
    \label{lemma:intended-semantics-optima}
    If a pair $(\valpha, \vbeta) \in \calA \times \calB$ possesses the intended semantics (\cref{def:intended-semantics}), it holds that:
    \[
        \forall \vg \in \calG, \; (\vbeta \circ \valpha) (\vg) = \vbeta^* (\vg)
    \]
\end{lemma}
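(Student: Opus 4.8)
The plan is to substitute the two explicit forms from \cref{def:intended-semantics} directly into the composition $\vbeta \circ \valpha$ and verify that the permutation $\vP_\pi$ and the element-wise maps $\vpsi$ cancel against their inverses, leaving exactly $\vbeta^*$. Before doing so, I would record the one structural fact that makes the composition well-typed. When $(\valpha, \vbeta)$ has the intended semantics, the left identity in \cref{eq:aligned-concepts-knowledge} forces $\valpha(\vg) = (\vpsi \circ \vP_\pi \circ \id)(\vg)$ to be a \emph{point-mass} distribution, i.e. $\valpha(\vg) = \Ind{\vC = (\vpsi \circ \vP_\pi)(\vg)} \in \Vset{\calA}$. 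This matters because $\vbeta$ is defined on concept \emph{values}, whereas $\valpha(\vg)$ lives in $\Delta_\calC$; the defining relation \cref{eq:def-beta}, namely $\vbeta(\vc) = \vomega(\Ind{\vC = \vc})$, together with determinism of $\valpha(\vg)$, lets me evaluate the inference layer at the single support value $(\vpsi \circ \vP_\pi)(\vg)$ and identify $(\vbeta \circ \valpha)(\vg)$ with $\vbeta\big((\vpsi \circ \vP_\pi)(\vg)\big)$.

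With this identification in place, the remaining step is the cancellation. Substituting the right-hand form of $\vbeta$ from \cref{def:intended-semantics}, I would write
\begin{align}
    (\vbeta \circ \valpha)(\vg)
    &= \vbeta\big((\vpsi \circ \vP_\pi)(\vg)\big) \\
    &= (\vbeta^* \circ \vP_\pi^{-1} \circ \vpsi^{-1})\big((\vpsi \circ \vP_\pi)(\vg)\big) \\
    &= \vbeta^*(\vg),
\end{align}
where the last equality uses $\vpsi^{-1} \circ \vpsi = \id$ (valid since $\vpsi$ is built from element-wise invertible functions, hence invertible coordinate-wise) followed by $\vP_\pi^{-1} \circ \vP_\pi = \id$ (valid since $\vP_\pi$ is a permutation matrix). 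As $\vg \in \calG$ was arbitrary, this establishes $(\vbeta \circ \valpha)(\vg) = \vbeta^*(\vg)$ for all $\vg$, which is the claim.

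The main obstacle here is not the algebra, which is a routine cancellation of a map against its inverse, but rather making the type-correctness of the composition precise: $\valpha$ returns a distribution in $\Delta_\calC$ while $\vbeta$ acts on values in $\calC$. The crux is therefore to justify, via the determinism guaranteed by the left identity in \cref{eq:aligned-concepts-knowledge} and the definition \cref{eq:def-beta}, that feeding the point-mass $\valpha(\vg)$ into the inference layer $\vomega$ coincides with evaluating $\vbeta$ at its support point $(\vpsi \circ \vP_\pi)(\vg)$. Once that identification is secured, the conclusion is immediate, and no appeal to the extremality assumption or to optimality of the likelihood is needed — intended semantics alone suffices.
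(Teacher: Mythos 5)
Your proof is correct and follows essentially the same route as the paper's: substitute both identities from \cref{def:intended-semantics} into $\vbeta \circ \valpha$ and cancel $\vpsi^{-1} \circ \vpsi$ and $\vP_\pi^{-1} \circ \vP_\pi$. The only difference is that you make explicit the point-mass identification (that $\valpha(\vg) = \Ind{\vC = (\vpsi \circ \vP_\pi)(\vg)}$, so that feeding it to $\vomega$ equals evaluating $\vbeta$ at that support value), a typing detail the paper's proof leaves implicit — a welcome clarification, not a different argument.
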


\begin{proof}
    By \cref{def:intended-semantics}, we can write:
    \begin{align}
        \valpha(\vg) &=
            (\vP_\pi \circ \vpsi  \circ \mathrm{id})(\vg)%
        \label{eq:aligned-concepts-supp}
        \\
        \vbeta(\vc) &= 
            (\vbeta^* \circ 
            \vpsi^{-1} \circ \vP_\pi^{-1} ) (\vc)
        \label{eq:aligned-knowledge-supp}
    \end{align}
    Combining \cref{eq:aligned-concepts-supp,eq:aligned-knowledge-supp}, we get, for all $\vg \in \calG$, that:
    \begin{align}
        (\vbeta \circ \valpha)(\vg) 
            &= (\vbeta^* \circ
            \vP_\pi^{-1} \circ \vpsi^{-1} \circ 
            \vpsi \circ \vP_\pi)(\vg) \\
            &= (\vbeta^* \circ
            \vP_\pi^{-1} \circ \vP_\pi)(\vg) \\
            &= \vbeta^* (\vg)
    \end{align}
    yielding the claim.
\end{proof}

\subsection{Proof of Theorem \ref{thm:count-jrss} and Corollary \ref{cor:count-rss}}

\begin{theorem}
    Let $s\calG := \bigcup_{i=1}^k \{|\calG_i|\}$ be the set of cardinalities of each concept $G_i \subseteq \vG$ and $ms\calG$ denote the multi-set 
    $ms\calG := \{ (|\calG_i|, m(|\calG_i|)), \; i \in [k] \} $, where $m(\bullet)$ denotes the multiplicity of each element of $s\calG$. 
    Under \cref{assu:concepts,assu:labels}, the number of deterministic JRSs $(\valpha, \vbeta) \in \Vset{\calA} \times \Vset{\calB}$ amounts to:
    \begin{align}
        \textstyle
        &
        \sum_{(\valpha, \vbeta) \in \Vset{\calA} \times \Vset{\calB}} 
        \Ind{
             \bigwedge_{\vg \in \supp(\vG)}
            (\vbeta \circ \valpha)(\vg)
                =
                \vbeta^* (\vg)
        } - C[\calG]
        \label{eq:jrs-count-app}
    \end{align}  
    where $C[\calG]$ is the total number of pairs with the intended semantics, given by
    $
        C[\calG] := \prod_{\xi \in s\calG} m(\xi)! \times \prod_{i=1}^k |\calG_i|!
    $.
    
\end{theorem}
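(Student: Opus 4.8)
The plan is to split the count exactly as prescribed by \cref{def:jrs}: a deterministic JRS is a vertex pair $(\valpha, \vbeta) \in \Vset{\calA} \times \Vset{\calB}$ that attains optimal likelihood yet fails to have the intended semantics. I would therefore write
\[
    \#\{\text{deterministic JRSs}\} \;=\; N_{\mathrm{opt}} \;-\; N_{\mathrm{IS}},
\]
where $N_{\mathrm{opt}}$ counts the deterministic likelihood optima and $N_{\mathrm{IS}}$ counts those among them possessing the intended semantics, and then identify $N_{\mathrm{opt}}$ with the first sum in \cref{eq:jrs-count-app} and $N_{\mathrm{IS}}$ with $C[\calG]$.

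For $N_{\mathrm{opt}}$ I would invoke \cref{lemma:deterministic-optima}, which characterizes the deterministic optima precisely as the vertex pairs satisfying $(\vbeta \circ \valpha)(\vg) = \vbeta^*(\vg)$ for every $\vg \in \supp(\vG)$. Summing the corresponding indicator over $\Vset{\calA} \times \Vset{\calB}$ reproduces the first summand verbatim. To justify that subtracting $C[\calG]$ is legitimate, I must check that every intended-semantics pair is actually optimal, so that $N_{\mathrm{IS}}$ really is a subset of the pairs counted by $N_{\mathrm{opt}}$; this is exactly \cref{lemma:intended-semantics-optima}, which shows $(\vbeta\circ\valpha)(\vg)=\vbeta^*(\vg)$ whenever $(\valpha,\vbeta)$ has the intended semantics.

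The heart of the argument is showing $N_{\mathrm{IS}} = C[\calG]$. First I would note that an intended-semantics pair is determined by $\valpha$ alone: writing $\valpha = \vpsi \circ \vP_\pi$, a deterministic bijection of $\calG$, \cref{def:intended-semantics} forces $\vbeta = \vbeta^* \circ \vP_\pi^{-1} \circ \vpsi^{-1} = \vbeta^* \circ \valpha^{-1}$, so $\vbeta$ is read off from $\valpha$. Hence counting the pairs reduces to counting the distinct structured bijections $\valpha = \vpsi \circ \vP_\pi$, in which $\vP_\pi$ reorders coordinates and $\vpsi$ permutes values within each coordinate. For such an $\valpha$ to be a self-map of $\calG = \calG_1 \times \cdots \times \calG_k$, each coordinate may only be sent to one of equal cardinality, since $\psi_j : \calG_{\pi(j)} \to \calG_j$ must be a bijection. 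The admissible $\pi$ are thus the permutations preserving the cardinality grouping, contributing $\prod_{\xi \in s\calG} m(\xi)!$ choices, while the within-coordinate bijections contribute $\prod_{i=1}^k |\calG_i|!$; their product is $C[\calG]$.

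The main obstacle is the injectivity of the map $(\pi, \vpsi) \mapsto \valpha$, needed so that this product counts each $\valpha$ exactly once. I would establish it by recovering $(\pi, \vpsi)$ from $\valpha$: since $\valpha(\vg)_j = \psi_j(g_{\pi(j)})$ with each $\psi_j$ a bijection, output coordinate $j$ depends non-trivially on precisely one input coordinate $\pi(j)$, so the dependency pattern pins down $\pi$ and then each $\psi_j$ is obtained directly. This is clean provided every concept has at least two values ($|\calG_i| \ge 2$); constant concepts would make the dependency undetectable and must be excluded or handled separately, and flagging this is the one delicate point. Combining the two counts yields \cref{eq:jrs-count-app}, and \cref{cor:count-rss} then follows by specializing the same bookkeeping to a fixed inference layer $\vbeta = \vbeta^*$, where only the $\valpha$ degrees of freedom survive.
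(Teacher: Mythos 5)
Your proposal is correct and follows essentially the same route as the paper's proof: decompose the count into deterministic likelihood optima (via \cref{lemma:deterministic-optima}) minus the intended-semantics pairs (justified as a subset of the optima via \cref{lemma:intended-semantics-optima}), then count the latter as cardinality-preserving coordinate permutations times element-wise value permutations, giving $C[\calG] = \prod_{\xi \in s\calG} m(\xi)! \times \prod_{i=1}^k |\calG_i|!$. Your explicit injectivity check for the map $(\pi, \vpsi) \mapsto \valpha$, and the flag that constant concepts ($|\calG_i| = 1$) would break it, is a point the paper's proof glosses over, so your write-up is if anything slightly more careful.
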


\begin{proof}
    Since we are considering pairs $(\valpha, \vbeta) \in \Vset{\calA} \times \Vset{\calB}$, $\valpha$ defines a (deterministic) function $\valpha:\calG \to \calC$ and that, similarly, $\vbeta$ defines a (deterministic) function $\vbeta: \calC \to \calY$. In this case, $\vbeta \circ \valpha: \calG \to \calY$ is a map from ground-truth concepts to labels.
    We start by \cref{def:intended-semantics} and consider the pairs that attain maximum likelihood by \cref{lemma:deterministic-optima}: 
    \[
        \forall \vg \in \mathsf{supp}(p^*(\vG)), \; (\vbeta \circ \valpha) (\vg) = \vbeta^* ( \vg)
        \label{eq:maxlikelihood-supp}
    \]
    Since $\valpha \in \Vset{\calA}$ is deterministic, we can replace $\vomega$ with $\vbeta$. Since both $\Vset{\calA}$ and $\Vset{\calB}$ are countable, we can count the number of pairs that attain maximum likelihood as follows:
    \[
        \sum_{(\valpha, \vbeta) \in \Vset{\calA} \times \Vset{\calB}} 
        \Ind{
            \bigwedge_{\vg \in \mathsf{supp} (p^*(\vG)) } (\vbeta \circ \valpha) ( \vg) = \vbeta^* (\vg)
        } \label{eq:count-jrss-app}
    \]
    where, for each pair $(\valpha, \vbeta) \in \Vset{\calA} \times \Vset{\calB}$ the sum increases by one if the condition of \cref{eq:maxlikelihood-supp} is satisfied.
    
    Among these optimal pairs, some possess the intended semantics and therefore do not count as JRSs. By \cref{lemma:intended-semantics-optima}, these correspond to all possible permutations of the concepts combined with all possible element-wise invertible transformations of the concept values.
    
    We begin by evaluating the number of possible element-wise invertible transformations.
    The $i$-th concept can attain $|\calG_i|$ values. The overall number of possible invertible transformations is then given by the number of possible permutations, resulting in a total of $|\calG_i|!$ maps. 
    Hence, the number of element-wise invertible transformations is:
    \[
        \prod_{i=1}^k |\calG_i|!
    \]
    Next, we consider what permutation of the concepts are possible. To this end, consider the set and the multi-set comprising all different $\calG_i$ cardinalities given by:
    \[
        s\calG := \bigcup_{i=1}^k \{ |\calG_i| \} , 
        \quad
        ms\calG := \{( |\calG_i|, m(|\calG_i|): \; i \in [k]  \}
    \]
    where $m(\bullet)$ counts how many repetitions are present.  When different $\calG_i$ and $\calG_j$ have the same cardinality, we can permute $G_i$ and $G_j$ without compromising optimality, as there always exists a $\vbeta$ that inverts the permutation and thus provides the same output. Thankfully, this shows in the multiplicity $m(\cdot)$ and we can account for this. Therefore, for each element $\xi \in s\calG$ we have that the total number of permutations of concepts amount to:
    \[
        \mathrm{perm}(\calG) :=
        \prod_{ \xi \in s\calG} m(\xi)!
    \]
    Putting everything together, we obtain that the number of optimal pairs possessing the intended semantics is $ \mathrm{perm}(\calG) \times \prod_{i=1}^k |\calG_i|!$, meaning that the total number of deterministic JRSs is:
    \[
        \sum_{(\valpha, \vbeta) \in \Vset{\calA} \times \Vset{\calB}} 
        \Ind{
            \bigwedge_{\vg \in \mathsf{supp} (p^*(\vG)) } (\vbeta \circ \valpha) ( \vg) = \vbeta^* (\vg)
        } - \mathrm{perm}(\calG) \times \prod_{i=1}^k |\calG_i|!
    \]
    yielding the claim.
\end{proof}

\countrss*

\begin{proof}
    The proof follows immediately by replacing $\Vset{\calB}$ with $\{\vbeta^*\}$, allowing only for the ground-truth inference function. In this context, by following similar steps of \cref{thm:count-jrss}, we arrive at a similar count to \cref{eq:count-jrss-app}, that is:
    \[
        \sum_{(\valpha, \vbeta) \in \Vset{\calA} \times \{ \vbeta^*\}
        } 
        \Ind{
            \bigwedge_{\vg \in \mathsf{supp} (p^*(\vG)) } (\vbeta \circ \valpha) ( \vg) = \vbeta^* (\vg)
        } \label{eq:count-rss-app}
    \]
    In the context of regular RSs, the only pair possessing the intended semantics is $(\id, \vbeta^*)$, which has to subtracted from \cref{eq:count-rss-app}. This proves the claim.
\end{proof}

\subsection{Proof of Theorem \ref{thm:implications}}

\thmimplications*

\begin{proof}
    Consider a pair $(\valpha, \vbeta) \in \calA \times \calB$, where both $\valpha$ and $\vbeta$ can be non-deterministic conditional probability distributions. 

    \textbf{Step} (i).
    We begin by first recalling \cref{lemma:abstraction-from-lh}. Under \cref{assu:concepts,assu:labels},
    a CBM $(\vf, \vomega) \in \calF \times \Omega$ is optimal, \ie it attains maximum likelihood, if it holds that: 
    \[
        \forall \vg \in \mathsf{supp}(p^*(\vG)), \;  \bbE_{\vx \sim p^*(\vX \mid \vg)} [(\vomega \circ \vf) (\vx)] = \vbeta^* (\vg)
    \]
    Notice that, according to \cref{assu:concepts,assu:labels}, it also holds that the labels must be predicted with probability one to attain maximum likelihood, that is:
    \[
        \forall \vg \in \calG, \; \max \vbeta^* (\vg) = 1
    \]
    Now consider a pair $(\valpha, \vbeta) \in \calA \times \calB$.
    By \cref{lemma:abstraction-from-lh} (i), it holds that to obtain maximum likelihood, the learned knowledge $\vbeta: \calC \to \Delta_\calY$ must be deterministic:
    \[
        \max \vbeta (\vc) =1
    \]
    which implies that, necessarily, for $\vbeta$ to be optimal the space $\calB$ restricts to the set $\Vset{\calB}$, so any learned knowledge $\vbeta$ must belong to $\Vset{\calB}$. 
    This shows that optimal pairs $(\valpha, \vbeta)$ belong to $\calA \times \Vset{\calB}$.

    \textbf{Step} (ii). Next, we make use of \cref{assu:monotonic} and consider only inference functions $\vomega$ that satisfy it.
    We start by considering a pair $(\va' , \vb') \in \Vset{\calA} \times \Vset{\calB}$ that attain maximum likelihood, that is, according to \cref{thm:count-jrss}:
    \[
        \forall \vg \in \mathsf{supp} (p^*(\vG)), \; (\vb' \circ \va') (\vg) = \vbeta^* (\vg)
    \]
    Now,  given $\vb' \in \Vset{\calB}$, we have to prove that no non-deterministic $\valpha \in \calA \setminus \Vset{\calA}$ can attain maximum likelihood for any of the $\vomega' \in \Omega$ that correspond to $\vb'$, that is, such that $\vomega'( \Ind{\vC = \vc} ) = \vb'(\vc)$ for all $\vc \in \calC$. 
    Since $\calA$ is a simplex, we can always construct a non-deterministic conditional probability distribution $\valpha \in \calA$ from a convex combination of the vertices $\va_i \in \Vset{\calA}$, \ie for all $\vg \in \calG$ it holds:
    \[
        \valpha( \vg ) = \sum_{\valpha_i \in \Vset{\calA}} \lambda_i \va_i (\vg)
    \]
    Consider another $\va'' \neq \va' \in \Vset{\calA}$ and consider an arbitrary convex combination that defines a non-deterministic $\valpha \in \calA$:
    \[
    \label{eq:alpha-convex}
        \forall \vg \in \calG, \; \valpha (\vg) := \lambda \va'(\vg) + (1- \lambda) \va'' (\vg)
    \]
    where $\lambda \in (0,1)$. 
    When the deterministic JRSs count (\cref{eq:jrs-count}) equals to zero, we know that by \cref{thm:count-jrss} only $(\va', \vb')$ attains maximum likelihood, whereas $(\va'', \vb')$ is not optimal. 
    Therefore, there exists at least one $\hat \vg \in \mathsf{supp} (p^*(\vG)) $ such that
    \[
        (\vbeta' \circ \va')(\hat \vg) \neq (\vbeta' \circ \va'') (\hat \vg)
    \]
    We now have to look at the form of $\vf$ that can induce such a non-deterministic $\valpha = \lambda \va' + (1-\lambda) \va''$. 
    Recalling, the definition of $\valpha$  (\cref{eq:def-alpha}) we have that:
    \begin{align}
        \valpha (\vg) 
        &= \bbE_{\vx \sim p^*(\vX \mid \vg)} [ \vf(\vx) ]  \\
        &=  \lambda \va'(\vg) + (1-\lambda) \va'' (\vg)
    \end{align}
    For this $\valpha$, there are two possible kinds of $\vf \in \calF$ that can express it.

    (1) In one case, we can have $\vf: \bbR^n \to \Vset{\Delta_\calC}$ -- mapping inputs to ``hard'' distributions over concepts.  Since $\valpha$ is not a ``hard'' distribution (provided $\lambda \in (0, 1)$), $\vf(\vx)$ must be equal to $ \va'(\vg)$ for a fraction $\lambda$ of the examples $\vx \in \mathsf{supp} ( p^*(\vX \mid \vg ) )$ and to $\va''(\vg)$  for a fraction of $1-\lambda$. %
    In that case, it holds that there exist a subspace of non-vanishing measure $\calX'' \subset \mathsf{supp} ( p^*(\vX \mid \hat \vg) )$ such that $\vf(\vx) = \va''(\hat \vg)$. Therefore,
    for all $\vx \in \calX''$ we have that
    $ (\vb' \circ \va'')(\vx) =
    (\vomega \circ \va'')(\hat \vg) \neq \vbeta^*(\hat \vg)$, and such an $\vf$ is suboptimal.

    (2) The remaining option is that $\vf: \bbR^n \to \Delta_\calC \setminus \Vset{\Delta_{\calC}}$. In light of this, we rewrite $\vf$ as follows:
    \[
        \forall \vx \in \mathsf{supp} ( p^*(\vX \mid \hat \vg) ), \; \vf(\vx) = \sum_{\vc \in \calC} p(\vc \mid \vx) \Ind{\vC = \vc}
    \]
    where $p(\vc \mid \vx) := \vf(\vx)_{\vc}$. Let $\hat \vc' := \va'(\hat \vg)$ and $\hat \vc'' := \va''(\hat \vg)$.  But $\valpha$ is a convex combination of $\va'$ and $\va''$, hence the function $\vf$ must attribute non-zero probability mass only to the concept vectors $\hat \vc'$ and $ \hat \vc''$. Hence, we rewrite it as:
    \[
        \forall \vx \in \calX \subseteq \mathsf{supp} ( p^*(\vX \mid \hat \vg) ), \; \vf(\vx) = p(\hat \vc' \mid \vx) \Ind{\vC = \hat \vc'} + p(\hat \vc'' \mid \vx) \Ind{\vC = \hat \vc''}
    \]
    where $\calX$ has measure one. 
    Notice that in this case it holds $\argmax_{\vy \in \calY} \vb' (\hat \vc')_{\vy} \neq \argmax_{\vy \in \calY} \vb' (\hat \vc'')_{\vy}$.
    By \cref{assu:monotonic} we have that, for any choice of $\lambda \in (0,1)$ it holds that for all $\vx \in \calX $:
    \begin{align}
        \max \vomega ( 
            p(\vc' \mid \vx) \Ind{\vC = \vc'} + p(\vc'' \mid \vx) \Ind{\vC = \vc''} 
            ) 
        &< \max ( \argmax_{\vy \in \calY} \vb' (\hat \vc')_{\vy},\argmax_{\vy \in \calY} \vb' (\hat \vc'')_{\vy}  ) 
        \tag{Using the condition from \cref{assu:monotonic}}
        \\
        &= 1 \label{eq:last-step}
    \end{align}
    where the last step in \cref{eq:last-step} follows from the fact that $\vb' \in \Vset{\calB}$, so giving point-mass conditional probability distributions. 

    (1) and (2) together show that any $\valpha \in \calA \setminus \Vset{\calA}$ constructed as a convex combination of $\va'$ and an $\va'' \neq \va'$ cannot attain maximum likelihood. Hence, 
    the optimal pairs restrict to $(\valpha, \vbeta) \in \Vset{\calA} \times \Vset{\calB}$ and since the count of JRSs (\cref{eq:jrs-count}) is zero, all $(\valpha, \vbeta) \in \calA \times \calB$ that are optimal also possess the intended semantics. This concludes the proof.    
\end{proof}

\section{Impact of mitigation strategies on the deterministic JRSs count}
\label{sec:update-counts}

We focus on how the count can be updated for some mitigation strategies that we accounted for in \cref{sec:mitigations}. We consider those strategies that gives a constraint for the \cref{eq:jrs-count}, namely: \textbf{\textit{multi-task learning}}, \textbf{\textit{concept supervision}}, \textbf{\textit{knowledge distillation}}, and \textbf{\textit{reconstruction}}.

\subsection{Multi-task learning}

In the following, we use $\tau$ to indicate an additional learning task with corresponding.
Suppose that for a subset $\calG^\tau \subseteq \mathrm{supp}(p^*(\vG))$, input examples are complemented with additional labels $\vy^\tau$, obtained by applying an inference layer $\vbeta_{\BK^\tau} (\vg)$. 
Here, $\BK^\tau$ is the conjunction of the prior knowledges for the original and additional task, and it yields a set $\calY^\tau$ of additional labels when applied to the examples in $\calG^\tau$.
To understand how multi-task learning impacts the count of JRSs, we have to consider those $\vbeta: \calC \to \calY \times \calY^\tau $, where a component $\vbeta_\calY$ maps to the original task labels $\vY$ and the other component $\vbeta_{\calY^\tau}$ maps to the augmented labels $\vY^\tau$. Let $\calB^* := {\calB} \times {\calB^\tau} $ the space where such functions are defined.
As obtained in \citep{marconato2023not}, the constraint for a pair $(\valpha, \vbeta) \in \Vset{\calA} \times \Vset{\calB^*}$ can be written as:
\[  \textstyle
     \Ind{
         \bigwedge_{\vg \in \calG^\tau}
        (\vbeta_{\calY^\tau} \circ \valpha)(\vg)
            =
            \vbeta_{\BK^\tau} (\vg)
    }       
\]
This term can be readily combined with the one appearing in \cref{thm:count-jrss}. Doing so, reduces the number of possible maps $\valpha \in \Vset{\calA}$ that successfully allow a function $\vbeta \in \calB^*$ to predict both $(\vy, \vy^\tau)$ consistently. Notice that, in the limit where the augmented knowledge comprehends enough additional labels and the support covers the whole , the only admitted solutions consist of:
\begin{align}
    (\vbeta_{\calY^\tau} \circ \id ) (\vg) = ( \underbrace{\vbeta_{\calY^\tau} \circ \vphi^{-1}}_{=: \vbeta_{\calY^\tau}} \circ \underbrace{\vphi}_{=: \valpha} ) (\vg) 
\end{align}
for all $\vg \in \calG$, where $\vphi : \calG \to \calC$ is an invertible function. This shows that $\valpha$ can be only a bijection from $\calG$ to $\calC$. Provided all the concept vectors $\vg \in \calG$, the number of such possible $\valpha$'s amounts to the number of possible permutation of the $|\calG|$ elements, that is $|\calG|!$.

\subsection{Concept supervision}

The intuition is the same as in \citep{marconato2023not}.
Assume that we supply concept supervision for a subset of concepts $\vG_\calI \subseteq \vG$, with $\calI \subseteq [k]$ and pair the regular log-likelihood objective over the labels with a log-likelihood over the concepts, \ie
$
    - \sum_{i \in \calI} \log p_\theta(C_i = g_i \mid \vx)
$.
Let $\calG^C \subseteq \mathsf{supp}(p^*(\vG))$ be the subset of values that receive supervision.  The only deterministic $\valpha$'s that attain maximum likelihood on the concepts are those that match said supervision, \ie that satisfy the constraint:
\[
    \label{eq:mc-concept-supervision}
    \textstyle
    \Ind{ \bigwedge_{\vg \in \calG^C} \bigwedge_{i \in \calI}  \alpha_i(\vg) = g_i }
\]
This can be immediately used to obtain an updated (and smaller) JRS count.
Note that if $\calI = [k]$ and $\calG^C = \calG$, then the only suitable map is $\valpha \equiv \id$.  Naturally, this requires dense annotations for all possible inputs, which may be prohibitive.

\subsection{Knowledge distillation}

We consider the case where samples $(\vg, \vbeta^*(\vg))$ are used to distill the ground-truth knowledge. Since by \cref{assu:labels} the ground-truth inference layer is deterministic, we consider the following objective for distillation. Letting $\calG^K \subseteq \mathrm{supp}(p^*(\vG))$ be the subset of supervised values, we augment the original log-likelihood objective with the following log-likelihood term:
\[  \textstyle
    \sum_{\vg \in \calG^{K}}
    \log \vomega( \Ind{\vC = \vg} )_{\vbeta^*(\vg) }
\]
which penalizes the target CBM, for each $\vg \in \calG^K$, based on the $\vbeta^*(\vg)$ component of $\vomega$, \ie the predicted probability of the ground-truth label.
This can be rewritten as a constraint on $\vbeta$ as follows:
\[
    \textstyle
    \Ind{
        \bigwedge_{\vg \in \calG^K}
        \vbeta(\vg) = \vbeta^*(\vg)
    }
\]
When $\calG^K = \calG$ it necessarily holds that $\vbeta \equiv \vbeta^*$.

\subsection{Reconstruction penalty}
\label{sec:rec-mitigation}

When focusing on reconstructing the input from the bottleneck, the probability distributions $p(\vC \mid \vX)$ may not carry enough information to completely determine the input $\vX$. In fact, if $\vX$ depends also  on stylistic variables $\vS \in \bbR^q$, it becomes impossible to determine the input solely from $\vG$. 
This means that training a decoder only passing $\vc \sim p(\vC \mid \vX)$ would not convey enough information and would reduce the benefits of the reconstruction term. 
\citet{marconato2023not} have studied this setting and proposed to also include additional variables  $\vZ \in \bbR^q$ in the bottleneck to overcome this problem. 
The encoder/decoder architecture works as follows: first the input $\vx$ is encoded into $p(\vC, \vZ \mid \vx)$ by the model and after sampling $(\vc, \vz) \sim p(\vC, \vZ \mid \vx)$ the decoder $\vd$ reconstruct the an input image $\hat \vx = \vd(\vc, \vz)$.
Following, under the assumption of \textit{content-style separation}, \ie both the encoder and the decoder process independently the concept and the stylistic variables, it holds that the constraint fr maps $\valpha \in \Vset{\calA}$ given by the reconstruction penalty result in:
\[
    \textstyle
    \Ind{
        \bigwedge_{\vg, \vg' \in \mathsf{supp}(\vG) : \vg \ne \vg'}
            \valpha(\vg) \neq \valpha(\vg')
    }
\]
The proof for this can be found in \citep[Proposition 6]{marconato2023not}. Notice that, with full support over $\calG$, the only such maps $\valpha$ must not confuse one concept vector for another, \ie at most there are at most $|\calG|!$ different valid $\valpha$'s that are essentially a bijection from $\calG$ to $\calC$.

\section{Models that satisfy Assumption~\ref{assu:monotonic}}
\label{sec:models-satisfying-assumption-3}

\subsection{Theoretical analysis}
\label{sec:analysis-assumption-app}

\textbf{Probabilistic Logic Models}. For models like DeepProbLog \citep{manhaeve2018deepproblog}, the Semantic Loss \citep{xu2018semantic}, Semantic-Probabilistic Layers \citep{ahmed2022semantic},  $\vomega$ is defined as:
\[
    \vomega (  p(\vC)    ) := \sum_{\vc \in \calC} \vbeta( \vc ) p(\vC = \vc)
    \label{eq:this-one}
\]
Now, take $\vc, \vc' \in \calC$ such that:
$
    \vbeta (\vc) \neq \vbeta (\vc')
$.
Then, for any $\lambda \in (0,1)$ it holds that:
\begin{align}    
    \vomega (  \lambda \Ind{\vC = \vc} + (1- \lambda) \Ind{\vC = \vc'} ) &= \lambda \vbeta (\vc) + (1- \lambda) \vbeta (\vc') \label{eq:guess-what} \\
    \implies \; \max_{\vy \in \calY} \vomega (  \lambda \Ind{\vC = \vc} + (1- \lambda) \Ind{\vC = \vc'} )_{\vy} &= \max_{\vy \in \calY} \big(\lambda \vbeta (\vc)_{\vy} + (1- \lambda) \vbeta (\vc')_{\vy} \big)
    \label{eq:max-pl}
\end{align}

Notice that for all $\vy \in \calY$, by the convexity of the $\max$ operator it holds that:
\begin{align}
    &\max_{\vy \in \calY} \big( \lambda \vbeta (\vc)_{\vy} + (1- \lambda) \vbeta ( \vc')_{\vy} \big) \leq \\&\lambda \max \big( \max_{\vy \in \calY} \vbeta (\vc)_{\vy}, \max_{\vy \in \calY} \vbeta (\vc')_{\vy} \big)   + (1 - \lambda)  \max \big( \max_{\vy \in \calY} \vbeta (\vc)_{\vy}, \max_{\vy \in \calY} \vbeta (\vc')_{\vy} \big) \\
    &=  \max \big( \max_{\vy \in \calY} (\vbeta (\vc)_{\vy}), \max_{\vy \in \calY} \vbeta (\vc')_{\vy} \big)
\end{align}
Equality holds if and only if $\argmax_{\vy \in \calY} \vbeta (\vc)_{\vy} = \argmax_{\vy \in \calY} \vbeta (\vc')_{\vy}  $  and  $\max_{\vy \in \calY} \vbeta (\vc)_{\vy} = \max_{\vy \in \calY} \vbeta (\vc')_{\vy} $.
This proves that probabilistic logic methods satisfy \cref{assu:monotonic}.

\textbf{Deep Symbolic Learning} \citep{daniele2023deep}. At inference time, DSL predicts a concept vector $\vc \in \calC$ from a conditional probability concept distribution, that is, it implements a function $\vf_{DSL}: \bbR^n \to \calC$ where: 
\[
    \vf_{DSL} (\vx) := \argmax_{\vc \in \calC} \tilde p(\vc \mid \vx)
\]
and $\tilde p (\vC \mid \vX)$ is a learned conditional distribution on concepts. (Note that $ \vf_{DSL} (\vX) \neq \tilde p(\vC \mid \vX)$.) Hence, $\valpha \in \calA$ can be defined as:
\[
    \valpha(\vg) := \bbE_{\vx \sim p^*(\vX \mid \vg)} [ \Ind{\vC = \vf_{DSL} (\vx) } ]
\]
Notice that the learned $\vbeta: \calG \to \Delta_\calY$ consists in a look up table from concepts vectors $\vc \in \calC$, thereby  giving the conditional distribution:
\[
    p_{DSL}(\vY \mid \vg) := \bbE_{\vx \sim p^*(\vX \mid \vg)} [ \vbeta ( \vf_{DSL}(\vx) )  ]
\]
By considering the measure defined by $p^*(\vx \mid \vg) \de \vx $ and the transformation $\vc = \vf_{DSL} (\vx) $, we can pass to the new measure $p(\vc \mid \vg) = \valpha(\vg)$ obtaining:
\begin{align}
    p_{DSL}(\vY \mid \vg) &= \sum_{\vc \in \calC} \vbeta(\vc) p(\vc \mid \vg) \\
    &= \sum_{\vc \in \calC} \vbeta(\vc) \valpha(\vg)_{\vc} \\
    &= (\vomega_{DSL} \circ \valpha)(\vg)
\end{align}
where we used:
\[
    \vomega_{DSL}( p(\vC)) := \sum_{\vc \in \calC}  \vbeta(\vc) p(\vC = \vc)
\]
This form matches that of probabilistic logic methods in \cref{eq:this-one}, hence \cref{assu:monotonic} similarly applies.

\textbf{Concept Bottleneck Models}. We consider \CBM{s} implementing a linear layer as $\vomega: \Delta_\calC \to \Delta_\calY$, as customary \citep{koh2020concept}. The linear layer consists of a matrix $\vW \in \bbR^{a \times b}$ with $a:= |\calG|$ rows and $ b:= |\calY|$ columns. In this setting, a probability distribution $p(\vC) \in \Delta_\calC$ corresponds to the input vector passed to the inference layer $\vomega$, implemented by the linear layer $\vW$ and a softmax operator. %
Notice that we can equivalently consider the tensor associated with the linear layer, $\vw_{\vc}^{\vy}$, where lower-indices $\vc = (c_1, \ldots, c_k)$ are the possible values of concepts and higher-indices $\vy = (y_1, \ldots, y_\ell)$ are the possible values of the labels. 
The scalar index $c$ (resp. $y$) runs over concept vectors $\vc \in \calC$ (resp. $\vy$), \eg for two dimensional binary concepts, $c=1$ corresponds to $\vc = (0,1)^\top \in \{0,1\}^2$ ($\vW_{1,:} = \vw_{01}^{\vy}$) and $c=2$ to $\vc = (1,0)^\top$ ($\vW_{2,:} = \vw_{10}^{\vy}$).
In the following, we make use of the tensor notation for simplicity of exposition.

Given a concept probability vector $\vp \in \Delta_\calC$, with components $\vp_{\vc} = p(\vC = \vc)$, the conditional distribution over labels is given by the softmax operator:
\[
    p(\vy \mid \vc)
    = \vomega (\vp)_{\vy}
    = \frac{\exp \big( \sum_{\vc \in \calC} \vW_{\vc, \vy} \vp_{\vc} \big) }{ \sum_{\vy' \in \calY}  \exp \big( \sum_{\vc \in \calC} \vW_{\vc, \vy'} \vp_{\vc} \big)}
    = \frac{\exp \big( \sum_{\vc \in \calC} \vw_{\vc}^{\vy} \vp_{\vc} \big) }{ \sum_{\vy' \in \calY}  \exp \big( \sum_{\vc \in \calC} \vw_{\vc}^{\vy'} \vp_{\vc} \big)} 
\]
A \CBM may not satisfy \cref{assu:monotonic} for arbitrary choices of weights $\vW$. 
Notice also that, since $\vomega$ includes a softmax operator, the $\vbeta \in \Vset{\calB}$ expressed by \CBM{s} \textit{cannot} be deterministic .\footnote{In fact, attributing near $1$ probability to a label $Y$ is only possible if the weights are extremely high in magnitude.}  This is a limitation for all \CBM. Specifically, in the context of \cref{assu:labels}, it means that they cannot reach an optimum of maximum likelihood and therefore learn deterministic maps $\vbeta^* \in \Vset{\calB}$ (\cref{eq:max-condition}). 

To proceed, we focus on a special case: a near-optimal \CBM that can express an ``almost deterministic'' conditional distribution $\vbeta \in {\calB}$, where a subset of weights $\vw^{\vy}_{\vc}$ is very high or very low.
For this to happen, we need $\vbeta$ to be peaked on concept vectors $\vc \in \calC$. This can happen only when the magnitude of one $\vw_{\vc}^{\vy}$ is much higher than other $\vw_{\vc}^{\vy'}$. Hence, we formulate the following condition for \CBM:

\begin{definition}
\label{def:M-deterministic-cbm}
    Consider a \CBM $(\vf, \vomega) \in \calF \times \Omega$ with weights $\vW = \{ \vw_{\vc}^{\vy} \}$ and let $M > |\calY -1|$. 
    We say that it is $\log(M)$-deterministic if, for all $\vc \in \calC$, there exists a $\vy \in \calY$ such that:
    \begin{align}
        &\forall \vy' \neq \vy, \;
        \vw_{\vc}^{\vy} - \vw_{\vc}^{\vy'} \geq \log M 
        \label{eq:cbm-condition1}
        \\
        & \forall \vy' \neq \vy, \forall \vy'' \neq  \vy, \; |\vw_{\vc}^{\vy'} - \vw_{\vc}^{\vy''}| \leq \log M
        \label{eq:cbm-condition2}
    \end{align}
    
\end{definition}

This is helpful, as any $log(M)$-deterministic \CBM can flexibly approximate deterministic $\vbeta$'s, as we show next:

\begin{proposition}
\label{prop:prob-values-cbm-maximum}
    Consider a $\log (M)$-deterministic \CBM $(\vf, \vomega) \in \calF \times \Omega$. %
    We have:
    \[
        \forall \vc \in \calC, \;
        \max_{\vy \in \calY} \vomega (\vc)_{\vy}  \geq  \frac{1}{1 + (|\calY|-1)/M}
    \]
    Also:
    \[
        \forall \vc \in \calC, \;
        \lim_{M \to +\infty} \max_{\vy \in \calY} \vomega (\vc)_{\vy}  = 1
    \]
\end{proposition}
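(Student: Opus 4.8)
The plan is to evaluate the softmax directly at a vertex input and to exploit only the first defining inequality of $\log(M)$-determinism. First I would specialize the general softmax expression derived earlier for the inference layer to a point-mass input $\vp = \Ind{\vC = \vc}$, so that the linear pre-activation collapses to the single weight $\vw_{\vc}^{\vy}$ and
\[
    \vomega(\vc)_{\vy} = \frac{\exp(\vw_{\vc}^{\vy})}{\sum_{\vy' \in \calY} \exp(\vw_{\vc}^{\vy'})}.
\]
Fixing an arbitrary $\vc \in \calC$, I would then take the distinguished label $\vy \in \calY$ guaranteed by \cref{def:M-deterministic-cbm} and lower bound $\max_{\vy' \in \calY} \vomega(\vc)_{\vy'}$ by the single coordinate $\vomega(\vc)_{\vy}$, reducing the whole statement to estimating this one quantity from below.

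The key step is to divide numerator and denominator by $\exp(\vw_{\vc}^{\vy})$, which gives
\[
    \vomega(\vc)_{\vy} = \frac{1}{1 + \sum_{\vy' \neq \vy} \exp(\vw_{\vc}^{\vy'} - \vw_{\vc}^{\vy})}.
\]
Condition \cref{eq:cbm-condition1} states $\vw_{\vc}^{\vy} - \vw_{\vc}^{\vy'} \geq \log M$ for every $\vy' \neq \vy$, hence each exponent obeys $\vw_{\vc}^{\vy'} - \vw_{\vc}^{\vy} \leq -\log M$ and each summand is at most $1/M$. Summing the $|\calY| - 1$ off-diagonal terms yields $\sum_{\vy' \neq \vy} \exp(\vw_{\vc}^{\vy'} - \vw_{\vc}^{\vy}) \leq (|\calY| - 1)/M$, and since $t \mapsto 1/(1+t)$ is decreasing this delivers $\vomega(\vc)_{\vy} \geq 1/(1 + (|\calY|-1)/M)$. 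As $\max_{\vy'} \vomega(\vc)_{\vy'} \geq \vomega(\vc)_{\vy}$ for this particular $\vy$, the first claimed inequality follows uniformly in $\vc$.

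For the limit I would observe that $1/(1 + (|\calY|-1)/M) \to 1$ as $M \to +\infty$, while $\max_{\vy} \vomega(\vc)_{\vy} \leq 1$ always holds because $\vomega(\vc)$ is a probability distribution; a squeeze argument then forces the limit to equal $1$. I expect no genuine obstacle here, as the argument is a direct softmax estimate; the only point requiring care is tracking the direction of the inequality in \cref{eq:cbm-condition1} when passing to the ratio form. It is worth remarking that the second defining condition \cref{eq:cbm-condition2} is not needed for this bound, and that the hypothesis $M > |\calY| - 1$ is precisely what makes the lower bound exceed $1/2$, which incidentally certifies that the distinguished $\vy$ is the $\argmax$ so that the maximum is indeed attained there.
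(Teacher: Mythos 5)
Your proof is correct and follows essentially the same route as the paper's: evaluate the softmax at the point-mass input, divide through by $\exp(\vw_{\vc}^{\vy})$, bound each of the $|\calY|-1$ off-terms by $1/M$ via \cref{eq:cbm-condition1}, and pass to the limit. If anything, your version is slightly tidier: you anchor the argument at the distinguished label guaranteed by \cref{def:M-deterministic-cbm} and lower-bound the maximum by that single coordinate (whereas the paper starts from the argmax and implicitly identifies it with the distinguished label), and you make the squeeze with the trivial upper bound $\max_{\vy}\vomega(\vc)_{\vy}\le 1$ explicit, which the paper leaves tacit.
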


\begin{proof}
    For $\vc \in \calC$, let $\vy = \argmax_{\vy'} \vomega(\Ind{\vC = \vc})_{\vy'}$. We consider the expression:
    \begin{align}
        \vomega(\Ind{\vC = \vc}) &= \frac{\exp \vw_{\vc}^{\vy}  }{ \sum_{\vy' \in \calC} \exp \vw_{\vc}^{\vy'} } \\
        &= \frac{1}{1 + \sum_{\vy' \neq \vy} \exp (- \vw_{\vc}^{\vy} +\vw_{\vc}^{\vy'}) } 
    \end{align}
    We now make use of the fact that the \CBM is $\log(M)$-deterministic and use \cref{eq:cbm-condition1} to obtain:
    \begin{align}
        \frac{1}{1 + \sum_{\vy' \neq \vy} \exp (- (\vw_{\vc}^{\vy} - \vw_{\vc}^{\vy'})) }
        &\geq \frac{1}{1 + \sum_{\vy' \neq \vy} \exp (- \log M) } \\    
        &= \frac{1}{1 + \sum_{\vy' \neq \vy} \frac{1}{M}} \\
        &= \frac{1}{1 + (|\calY -1|) / {M}} 
    \end{align}
    Putting everything together yields:
    \[
        \max_{\vy \in \calY} \vomega (\vc)_{\vy}  \geq  \frac{1}{1 + (|\calY|-1)/M}
    \]

    Now, consider the limit for large $M \in \bbR$: 
    \[
        \lim_{M \to + \infty} \max_{\vy \in \calY} \vomega (\vc)_{\vy}  \geq  \lim_{M \to + \infty}  \frac{1}{1 + (|\calY|-1)/M} = 1
    \]
    This concludes the proof.
\end{proof}

The last point of \cref{prop:prob-values-cbm-maximum} shows a viable way to get peaked label distributions from a $\log(M)$-deterministic \CBM. Specifically, the limit guarantees that these \CBM{s} can approach an optimal likelihood. Now, we prove that a $\log(M)$-deterministic \CBM respects \cref{assu:monotonic}.

\begin{proposition}
    A \CBM $(\vf, \vomega) \in \calF \times \Omega$ that is $\log(M)$-deterministic (\cref{def:M-deterministic-cbm}) 
    satisfies \cref{assu:monotonic}, \ie 
    for all $\lambda \in (0,1)$
    and for all $\vc \neq \vc'$ such that
    $\argmax_{\vy \in \calY} \vomega(\Ind{\vC=\vc})_{\vy} 
    \neq 
    \argmax_{\vy \in \calY} \vomega(\Ind{\vC=\vc'})_{\vy} 
    $, it holds
    :
    \[
        \max_{\vy \in \calY} \vomega ( \lambda \Ind{\vC = \vc_1} + (1- \lambda) \Ind{\vC = \vc_2} )_{\vy} < \max \big(
            \max_{\vy \in \calY} \vomega( \Ind{\vC = \vc_1} )_{\vy},
            \max_{\vy \in \calY} \vomega( \Ind{\vC = \vc_2} )_{\vy}
        \big)
    \]
\end{proposition}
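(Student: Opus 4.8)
The plan is to reduce the statement to the explicit softmax form of $\vomega$ and analyze a single scalar function along the mixing segment. Writing $a_{\vy}:=\vw_{\vc}^{\vy}$, $b_{\vy}:=\vw_{\vc'}^{\vy}$ and $m_{\vy}(\lambda):=\lambda a_{\vy}+(1-\lambda)b_{\vy}$, the input $\lambda\Ind{\vC=\vc}+(1-\lambda)\Ind{\vC=\vc'}$ produces logits $m_{\vy}(\lambda)$, so that $\max_{\vy\in\calY}\vomega(\lambda\Ind{\vC=\vc}+(1-\lambda)\Ind{\vC=\vc'})_{\vy}=e^{g(\lambda)}$ with $g(\lambda):=\max_{\vy}m_{\vy}(\lambda)-\log\sum_{\vy}e^{m_{\vy}(\lambda)}$. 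Since $e^{(\cdot)}$ is increasing and $g(1),g(0)$ are the maximal label probabilities at $\vc$ and $\vc'$, the claim is equivalent to $g(\lambda)<\max(g(0),g(1))$ for every $\lambda\in(0,1)$. Set $\vy_1:=\argmax_{\vy}a_{\vy}$ and $\vy_2:=\argmax_{\vy}b_{\vy}$; the hypothesis $\argmax\vomega(\Ind{\vC=\vc})\neq\argmax\vomega(\Ind{\vC=\vc'})$ forces $\vy_1\neq\vy_2$, and $\log(M)$-determinism makes each a strict winner by margin $\geq\log M$ with all remaining logits bunched inside a window of width $\log M$.

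The key step, and the one I expect to be the main obstacle, is to show that the mixture's top label is always one of $\vy_1,\vy_2$, i.e. no label that loses for both $\vc$ and $\vc'$ can ever dominate the mixture. For such a loser $\vy$, the margin and bunching conditions of \cref{def:M-deterministic-cbm} give $m_{\vy}-m_{\vy_1}\leq(1-2\lambda)\log M$ and $m_{\vy}-m_{\vy_2}\leq(2\lambda-1)\log M$, at least one of which is $\leq 0$ for every $\lambda$; hence $m_{\vy}\leq\max(m_{\vy_1},m_{\vy_2})$. Because $m_{\vy_1}(\lambda)-m_{\vy_2}(\lambda)$ is affine in $\lambda$ and flips from negative (at $\lambda=0$) to positive (at $\lambda=1$), there is a unique crossover $\lambda_c\in(0,1)$, so the segment splits into exactly two pieces: the argmax is $\vy_2$ on $[0,\lambda_c)$ and $\vy_1$ on $(\lambda_c,1]$. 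This is precisely where the full strength of $\log(M)$-determinism (both the margin bound and the bunching bound, together with $M>1$) enters.

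Next I would establish strict concavity of $g$ on each piece together with the signs of its endpoint slopes. On a piece with fixed argmax $\vy^{*}$, the term $\max_{\vy}m_{\vy}(\lambda)=m_{\vy^{*}}(\lambda)$ is affine in $\lambda$ while $\log\sum_{\vy}e^{m_{\vy}(\lambda)}$ is strictly convex in $\lambda$, since $a-b$ is not a constant vector (indeed $a_{\vy_1}-a_{\vy_2}>0$ and $b_{\vy_1}-b_{\vy_2}<0$, so $d_{\vy_1}\neq d_{\vy_2}$ for $d_{\vy}:=a_{\vy}-b_{\vy}$); thus $g$ is strictly concave on each piece. Differentiating gives $g'(\lambda)=d_{\vy^{*}}-\bbE_{\vp}[d]$, where $\vp$ is the mixture softmax and $\tfrac{d}{d\lambda}\bbE_{\vp}[d]=\mathrm{Var}_{\vp}(d)>0$. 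The same margin/bunching inequalities also force $d_{\vy_1}=\max_{\vy}d_{\vy}$ and $d_{\vy_2}=\min_{\vy}d_{\vy}$, and since the softmax has full support with $d$ non-constant, $\bbE_{\vp}[d]$ lies strictly between these extremes, so $g'(0^{+})=d_{\vy_2}-\bbE_{\vp}[d]<0$ and $g'(1^{-})=d_{\vy_1}-\bbE_{\vp}[d]>0$.

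Finally I would assemble the valley argument. On $[0,\lambda_c]$ strict concavity makes $g'$ decreasing, so $g'(\lambda)\leq g'(0^{+})<0$ and $g$ is strictly decreasing there; on $[\lambda_c,1]$ strict concavity gives $g'(\lambda)\geq g'(1^{-})>0$ and $g$ is strictly increasing. Hence $g$ attains its minimum at $\lambda_c$, with $g(\lambda)<g(0)$ for $\lambda\in(0,\lambda_c]$ and $g(\lambda)<g(1)$ for $\lambda\in[\lambda_c,1)$, so $g(\lambda)<\max(g(0),g(1))$ for all $\lambda\in(0,1)$. Translating back through $e^{g}$ yields exactly the extremality inequality of \cref{assu:monotonic}. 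The only genuinely delicate point is the second paragraph: ruling out loser-for-both labels as mixture winners is what collapses a potential multi-piece, multi-hump landscape into a clean two-piece valley, and it is the place where both inequalities in \cref{def:M-deterministic-cbm} must be used simultaneously.
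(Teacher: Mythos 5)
Your proof is correct and is essentially the paper's own argument transposed to logit space: your key step---that no label losing at both $\vc$ and $\vc'$ can dominate the mixture, via $m_{\vy}-m_{\vy_1}\le(1-2\lambda)\log M$ and $m_{\vy}-m_{\vy_2}\le(2\lambda-1)\log M$---is exactly the paper's step (2), whose ratio bounds $M^{\lambda_2-\lambda_1}$ and $M^{\lambda_1-\lambda_2}$ are the exponentials of these same two inequalities. Your valley argument likewise rests on the identical ingredients of the paper's step (1)---the facts $d_{\vy_1}=\max_{\vy}d_{\vy}$ and $d_{\vy_2}=\min_{\vy}d_{\vy}$ (from margin plus bunching) and the softmax derivative $\partial_\lambda \log p_{\vy} = d_{\vy}-\bbE_{\vp}[d]$---differing only in packaging: you track the single piecewise strictly concave function $g=\log\max_{\vy}p_{\vy}$ around its crossover $\lambda_c$, whereas the paper shows $p_{\vy_1}$ and $p_{\vy_2}$ are individually monotone on all of $[0,1]$; your explicit use of full support and non-constancy of $d$ to get strict inequalities is, if anything, slightly more careful than the paper's.
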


\begin{proof}
    Let  $\vy_1 := \argmax_{\vy \in \calY} \vomega(\Ind{\vC = \vc_1})_{\vy} $ and $\vy_2 := \argmax_{\vy \in \calY} \vomega(\Ind{\vC = \vc_2})_{\vy} $ and $\lambda_1 := \lambda$ and $\lambda_2 := 1- \lambda$, with $\lambda \in (0,1)$ and $\vy_1 \neq \vy_2$. 
    The proof consists of two steps. 
    
    First, we check that the values taken by $\vomega ( \lambda \Ind{\vC = \vc_1} + (1- \lambda) \Ind{\vC = \vc_2} )_{\vy_i}$, for $i \in \{1,2\},$ are lower than those taken at the extremes.
    The explicit expression for $\vomega(\cdot)_{\vy_1}$ is given by:
    \[
        \vomega( \lambda_1 \Ind{\vC = \vc_1} +  \lambda_2 \Ind{\vC = \vc_2} )_{\vy_1} = \frac{
            e^{ \lambda_1 \vw_{\vc_1}^{\vy_1} + \lambda_2 \vw_{\vc_2}^{\vy_1}   }
        }{
            \sum_{\vy \in \calY} e^{ \lambda_1 \vw_{\vc_1}^{\vy} + \lambda_2 \vw_{\vc_2}^{\vy}}
        }
    \]
    We differentiating over $\lambda$, leveraging the fact that $\partial \lambda_1 / \partial \lambda=1$ and $\partial \lambda_2 / \partial \lambda=-1$. Let $Z(\lambda) := \sum_{\vy \in \calY} \exp({ \lambda_1 \vw_{\vc_1}^{\vy} + \lambda_2 \vw_{\vc_2}^{\vy}})$. We first evaluate the derivative of this expression:
    \begin{align}
        \frac{\partial Z(\lambda)}{\partial \lambda} &= \sum_{\vy \in \calY} \frac{\partial}{\partial \lambda} e^{ \lambda_1 \vw_{\vc_1}^{\vy} + \lambda_2 \vw_{\vc_2}^{\vy}} \\
        &= \sum_{\vy \in \calY} e^{ \lambda_1 \vw_{\vc_1}^{\vy} + \lambda_2 \vw_{\vc_2}^{\vy}} (\partial_\lambda \lambda_1 \vw_{\vc_1}^{\vy} + \partial_\lambda \lambda_2 \vw_{\vc_2}^{\vy}) \\
        &= \sum_{\vy \in \calY} e^{ \lambda_1 \vw_{\vc_1}^{\vy} + \lambda_2 \vw_{\vc_2}^{\vy}} (\vw_{\vc_1}^{\vy} - \vw_{\vc_2}^{\vy}) \\
        &= Z(\lambda) \sum_{\vy \in \calY} \frac{e^{ \lambda_1 \vw_{\vc_1}^{\vy} + \lambda_2 \vw_{\vc_2}^{\vy}}}{Z(\lambda)}  (\vw_{\vc_1}^{\vy} - \vw_{\vc_2}^{\vy}) \\
        &= Z(\lambda) \sum_{\vy \in \calY}  (\vw_{\vc_1}^{\vy} - \vw_{\vc_2}^{\vy})  \vomega(\lambda_1 \Ind{\vC = \vc_1} +  \lambda_2 \Ind{\vC = \vc_2} )_{\vy}
    \end{align}
    This is legitimate as $Z(\lambda) > 0$ by definition. Using this result we get:
    \begin{align}
        &\frac{\partial}{\partial \lambda} \frac{
            e^{ \lambda_1 \vw_{\vc_1}^{\vy_1} + \lambda_2 \vw_{\vc_2}^{\vy_2}   }
        }{
            Z(\lambda)
        } = 
        \frac{
            {\partial_\lambda e^{ \lambda_1 \vw_{\vc_1}^{\vy_1} + \lambda_2 \vw_{\vc_2}^{\vy_1}} } Z(\lambda) - e^{ \lambda_1 \vw_{\vc_1}^{\vy_1} + \lambda_2 \vw_{\vc_2}^{\vy_1}} \partial_\lambda Z(\lambda)
        }{
            Z(\lambda)^2
        } \\
        &= 
        \frac{
            {e^{ \lambda_1 \vw_{\vc_1}^{\vy_1} + \lambda_2 \vw_{\vc_2}^{\vy_1}} (\vw_{\vc_1}^{\vy_1} - \vw_{\vc_2}^{\vy_1}) } Z(\lambda)}{
            Z(\lambda)^2
            } \notag 
            \\
            &\qquad
            -  
        \frac{e^{ \lambda_1 \vw_{\vc_1}^{\vy_1} + \lambda_2 \vw_{\vc_2}^{\vy_1}} Z(\lambda) \sum_{\vy \in \calY}  (\vw_{\vc_1}^{\vy} - \vw_{\vc_2}^{\vy})  \vomega(\lambda_1 \Ind{\vC = \vc_1} +  \lambda_2 \Ind{\vC = \vc_2} )_{\vy}
        }{
            Z(\lambda)^2
        } \\
        &=
        \frac{
            {e^{ \lambda_1 \vw_{\vc_1}^{\vy_1} + \lambda_2 \vw_{\vc_2}^{\vy_1}} (\vw_{\vc_1}^{\vy_1} - \vw_{\vc_2}^{\vy_1}) }}{Z(\lambda)} \notag \\ 
            & \qquad 
            - \frac{ e^{ \lambda_1 \vw_{\vc_1}^{\vy_1} + \lambda_2 \vw_{\vc_2}^{\vy_1}} \sum_{\vy \in \calY}  (\vw_{\vc_1}^{\vy} - \vw_{\vc_2}^{\vy})  \vomega(\lambda_1 \Ind{\vC = \vc_1} +  \lambda_2 \Ind{\vC = \vc_2} )_{\vy}
        }{
            Z(\lambda)
        } \\
        &=
        \frac{
            {e^{ \lambda_1 \vw_{\vc_1}^{\vy_1} + \lambda_2 \vw_{\vc_2}^{\vy_1}} \big(  (\vw_{\vc_1}^{\vy_1} - \vw_{\vc_2}^{\vy_1}) }  - \sum_{\vy \in \calY}  (\vw_{\vc_1}^{\vy} - \vw_{\vc_2}^{\vy})  \vomega(\lambda_1 \Ind{\vC = \vc_1} +  \lambda_2 \Ind{\vC = \vc_2} )_{\vy} \big)
        }{
            Z(\lambda)
        }
    \end{align}
    Next, we analyze the sign of the derivative. Notice that we can focus only on the following term since the others are always positive:
    \[
      (\vw_{\vc_1}^{\vy_1} - \vw_{\vc_2}^{\vy_1}) -  \sum_{\vy \in \calY}  (\vw_{\vc_1}^{\vy} - \vw_{\vc_2}^{\vy})  \vomega(\lambda_1 \Ind{\vC = \vc_1} +  \lambda_2 \Ind{\vC = \vc_2} )_{\vy}   
    \]
    From this expression, we can make use of the fact that in general, for a scalar function $f(\vx)$, we have $\bbE_{\vX}[f( \vX)] \leq \max_{\vx} f(\vx)$ and consider the following:
    \begin{align}
        (\vw_{\vc_1}^{\vy_1} - \vw_{\vc_2}^{\vy_1}) -  \sum_{\vy \in \calY}  (\vw_{\vc_1}^{\vy} - \vw_{\vc_2}^{\vy})  \vomega(\lambda_1 \Ind{\vC = \vc_1} & +  \lambda_2 \Ind{\vC = \vc_2} )_{\vy}   \notag \\
        &\geq
        (\vw_{\vc_1}^{\vy_1} - \vw_{\vc_2}^{\vy_1}) -  \max_{\vy \in \calY} (\vw_{\vc_1}^{\vy} - \vw_{\vc_2}^{\vy})  \\
        &= (\vw_{\vc_1}^{\vy_1} - \vw_{\vc_2}^{\vy_1}) -  (\vw_{\vc_1}^{\vy_1} - \vw_{\vc_2}^{\vy_1}) \\
        &= 0
    \end{align}
    where in the second line we used that the maximum is given by $\vy_1$. Therefore, the derivative is always increasing in the $[0,1]$ interval, meaning that:
    \[
        \forall \lambda \in [0,1], \; \vomega( \lambda_1 \Ind{\vC = \vc_1} +  \lambda_2 \Ind{\vC = \vc_2} )_{\vy_1} \leq \vomega (\Ind{\vC = \vc_1})_{\vy_1}
    \]
    where the equality holds if an only if $\lambda=1$. Similarly, the derivative for $\vomega(\cdot)_{\vy_2}$ gives:
    \begin{align}    
        &\frac{\partial  \vomega(\lambda_1 \Ind{\vC = \vc_1} +  \lambda_2 \Ind{\vC = \vc_2} )_{\vy_2} }{\partial \lambda} \notag \\
        &\quad = 
        \frac{
            {e^{ \lambda_1 \vw_{\vc_1}^{\vy_2} + \lambda_2 \vw_{\vc_2}^{\vy_2}} \big(  (\vw_{\vc_1}^{\vy_2} - \vw_{\vc_2}^{\vy_2}) }  - \sum_{\vy \in \calY}  (\vw_{\vc_1}^{\vy} - \vw_{\vc_2}^{\vy})  \vomega(\lambda_1 \Ind{\vC = \vc_1} +  \lambda_2 \Ind{\vC = \vc_2} )_{\vy} \big)
        }{
            Z(\lambda)
        }
    \end{align}
    
    By using the fact that $\bbE_{\vX} [f(\vX)] \geq \min_{\vx} f(\vx)$ for a scalar function $f(\vx)$, we obtain:
    \begin{align}
        (\vw_{\vc_1}^{\vy_2} - \vw_{\vc_2}^{\vy_2}) -  \sum_{\vy \in \calY}  (\vw_{\vc_1}^{\vy} - \vw_{\vc_2}^{\vy})  \vomega(\lambda_1 \Ind{\vC = \vc_1} & +  \lambda_2 \Ind{\vC = \vc_2} )_{\vy} \notag  \\
        &\leq 
        (\vw_{\vc_1}^{\vy_2} - \vw_{\vc_2}^{\vy_2}) -  \min_{\vy \in \calY} (\vw_{\vc_1}^{\vy} - \vw_{\vc_2}^{\vy})  \\
        &= (\vw_{\vc_1}^{\vy_2} - \vw_{\vc_2}^{\vy_2}) -  (\vw_{\vc_1}^{\vy_2} - \vw_{\vc_2}^{\vy_2}) \\
        &= 0
    \end{align}
    where in the second line we used that the minimum is given by $\vy_2$. Hence, the derivative is always decreasing for $\lambda \in [0,1]$ and it holds:
    \[
        \forall \lambda \in [0,1], \; \vomega( \lambda_1 \Ind{\vC = \vc_1} +  \lambda_2 \Ind{\vC = \vc_2} )_{\vy_2} \leq \vomega (\Ind{\vC = \vc_2})_{\vy_2}
    \]
    where the equality holds if and only if $\lambda = 0$.

    (2) Now, we check the same holds when choosing another element $\vy' \neq \vy_1, \vy_2$. To this end, we consider the following expressions:
    \[
        \frac{ p(\vy \mid  \lambda_1 \Ind{\vC= \vc_1} + \lambda_2 \Ind{\vC= \vc_2} )}{p(\vy_1 \mid \vc_1)}, \quad 
        \frac{ p(\vy \mid  \lambda_1 \Ind{\vC= \vc_1} + \lambda_2 \Ind{\vC= \vc_2} )}{p(\vy_2 \mid \vc_2)}
    \]
    where $p(\vy \mid \vp) := \vomega (\vp)_{\vy}$. 
    Consider the first expression. We can rewrite it as follows:
    \begin{align}
        &\frac{ p(\vy \mid  \lambda_1 \Ind{\vC= \vc_1} + \lambda_2 \Ind{\vC= \vc_2} )}{p(\vy_1 \mid \vc_1)} \\
        &= \frac{ p(\vy \mid  \lambda_1 \Ind{\vC= \vc_1} + \lambda_2 \Ind{\vC= \vc_2} )}{p(\vy_1 \mid  \lambda_1 \Ind{\vC= \vc_1} + \lambda_2 \Ind{\vC= \vc_2} )} \frac{p(\vy_1 \mid  \lambda_1 \Ind{\vC= \vc_1} + \lambda_2 \Ind{\vC= \vc_2} )}{p(\vy_1 \mid \vc_1)} \\
        &\leq \frac{ p(\vy \mid  \lambda_1 \Ind{\vC= \vc_1} + \lambda_2 \Ind{\vC= \vc_2} )}{p(\vy_1 \mid  \lambda_1 \Ind{\vC= \vc_1} + \lambda_2 \Ind{\vC= \vc_2} )} 
        \label{eq:cbm-first-expression-last}
    \end{align}
    where in the last line we made use of the fact that the second fraction in the right-hand side of the first line is always $\leq 1$. Similarly we have that:
    \begin{align}
        \frac{ p(\vy \mid  \lambda_1 \Ind{\vC= \vc_1} + \lambda_2 \Ind{\vC= \vc_2} )}{p(\vy_2 \mid \vc_2)} &\leq \frac{ p(\vy \mid  \lambda_1 \Ind{\vC= \vc_1} + \lambda_2 \Ind{\vC= \vc_2} )}{p(\vy_2 \mid  \lambda_1 \Ind{\vC= \vc_1} + \lambda_2 \Ind{\vC= \vc_2} )} 
    \end{align}
    We proceed to substituting explicitly the expression for $p(\vy \mid \lambda_1 \Ind{\vC = \vc_1} + \lambda_2 \Ind{\vC=\vc_1}$ into the upper bound of \cref{eq:cbm-first-expression-last}:
    \begin{align}
        \frac{ p(\vy \mid  \lambda_1 \Ind{\vC= \vc_1} + \lambda_2 \Ind{\vC= \vc_2} )}{p(\vy_1 \mid  \lambda_1 \Ind{\vC= \vc_1} + \lambda_2 \Ind{\vC= \vc_2} )} &= \frac{e^{ \lambda_1 \vw_{\vc_1}^{\vy} + \lambda_2 \vw_{\vc_2}^{\vy}}  }{e^{ \lambda_1 \vw_{\vc_1}^{\vy_1} + \lambda_2 \vw_{\vc_2}^{\vy_1}}} 
        \frac{Z(\lambda)}{Z(\lambda)} \\
        &= \exp \big( 
        \lambda_1 ( \vw_{\vc_1}^{\vy} - \vw_{\vc_1}^{\vy_1}) + \lambda_2 ( \vw_{\vc_2}^{\vy} - \vw_{\vc_2}^{\vy_1})
        \big) \\
        &\leq \exp \big( 
        - \lambda_1 \log(M) + \lambda_2 ( \vw_{\vc_2}^{\vy} - \vw_{\vc_2}^{\vy_1})
        \big)
        \tag{Substituting the bound from \cref{eq:cbm-condition1}} \\
        &\leq \exp \big( 
        - \lambda_1 \log(M) + \lambda_2 \log(M)
        \big)
        \tag{Substituting the bound from \cref{eq:cbm-condition2}} \\
        &= M^{\lambda_2 - \lambda_1}
        \label{eq:cbm-beauty-1}
    \end{align}
    With similar steps and substitutions we get that:
    \[
        \frac{ p(\vy \mid  \lambda_1 \Ind{\vC= \vc_1} + \lambda_2 \Ind{\vC= \vc_2} )}{p(\vy_2 \mid  \lambda_1 \Ind{\vC= \vc_1} + \lambda_2 \Ind{\vC= \vc_2} )} \leq M^{\lambda_1 - \lambda_2}
        \label{eq:cbm-beauty-2}
    \]
    Taking the product of \cref{eq:cbm-beauty-1} and \cref{eq:cbm-beauty-2} we obtain:
    \begin{align}
        \frac{ p(\vy \mid  \lambda_1 \Ind{\vC= \vc_1} + \lambda_2 \Ind{\vC= \vc_2} )}{p(\vy_1 \mid  \lambda_1 \Ind{\vC= \vc_1} + \lambda_2 \Ind{\vC= \vc_2} )}
        &\cdot
        \frac{ p(\vy \mid  \lambda_1 \Ind{\vC= \vc_1} + \lambda_2 \Ind{\vC= \vc_2} )}{p(\vy_2 \mid  \lambda_1 \Ind{\vC= \vc_1} + \lambda_2 \Ind{\vC= \vc_2} )} 
        \notag \\
        & \qquad  \leq  M^{\lambda_2 - \lambda_1}M^{
        \lambda_1  - \lambda_2 } \\
         & \qquad = 1
    \end{align}  
    Notice also that:
    \begin{align}
        \frac{ p(\vy \mid  \lambda_1 \Ind{\vC= \vc_1} + \lambda_2 \Ind{\vC= \vc_2} )}{p(\vy_1 \mid  \lambda_1 \Ind{\vC= \vc_1} + \lambda_2 \Ind{\vC= \vc_2} )}
        &\cdot
        \frac{ p(\vy \mid  \lambda_1 \Ind{\vC= \vc_1} + \lambda_2 \Ind{\vC= \vc_2} )}{p(\vy_2 \mid  \lambda_1 \Ind{\vC= \vc_1} + \lambda_2 \Ind{\vC= \vc_2} )} \notag \\
        & \hspace{-18.5em} \geq \min \left( \frac{ p(\vy \mid  \lambda_1 \Ind{\vC= \vc_1} + \lambda_2 \Ind{\vC= \vc_2} )^2}{p(\vy_1 \mid  \lambda_1 \Ind{\vC= \vc_1} + \lambda_2 \Ind{\vC= \vc_2} )^2},
        \frac{ p(\vy \mid  \lambda_1 \Ind{\vC= \vc_1} + \lambda_2 \Ind{\vC= \vc_2} )^2}{p(\vy_2 \mid  \lambda_1 \Ind{\vC= \vc_1} + \lambda_2 \Ind{\vC= \vc_2} )^2}   \right)
    \end{align}
    which in turn means that:
    \begin{align}
        \min & \left( \frac{ p(\vy \mid  \lambda_1 \Ind{\vC= \vc_1} + \lambda_2 \Ind{\vC= \vc_2} )^2}{p(\vy_1 \mid  \lambda_1 \Ind{\vC= \vc_1} + \lambda_2 \Ind{\vC= \vc_2} )^2},
        \frac{ p(\vy \mid  \lambda_1 \Ind{\vC= \vc_1} + \lambda_2 \Ind{\vC= \vc_2} )^2}{p(\vy_2 \mid  \lambda_1 \Ind{\vC= \vc_1} + \lambda_2 \Ind{\vC= \vc_2} )^2}   \right) \leq 1 \\
        \implies
        & \min \left( \frac{ p(\vy \mid  \lambda_1 \Ind{\vC= \vc_1} + \lambda_2 \Ind{\vC= \vc_2} )}{p(\vy_1 \mid  \lambda_1 \Ind{\vC= \vc_1} + \lambda_2 \Ind{\vC= \vc_2} )},
        \frac{ p(\vy \mid  \lambda_1 \Ind{\vC= \vc_1} + \lambda_2 \Ind{\vC= \vc_2} )}{p(\vy_2 \mid  \lambda_1 \Ind{\vC= \vc_1} + \lambda_2 \Ind{\vC= \vc_2} )}   \right) \leq 1 
    \end{align}

    This last expression is in line with the condition of \cref{assu:monotonic}, showing that, for all $\vy \in \calY$, either $\vomega(\Ind{\vC= \vc_1})_{\vy_1}$ or $\vomega(\Ind{\vC= \vc_2})_{\vy_2}$ are greater or equal to 
    $\vomega(\lambda \Ind{\vC= \vc_1} + (1- \lambda)\Ind{\vC= \vc_2} )_{\vy}$. 
    Combining step (1) and (2), we obtain that \cref{assu:monotonic} holds and that:
    \[
        \max_{\vy \in \calY} \vomega ( \lambda \Ind{\vC = \vc_1} + (1- \lambda) \Ind{\vC = \vc_2} )_{\vy} \leq \max \big(
            \max_{\vy \in \calY} \vomega( \Ind{\vC = \vc_1} )_{\vy},
            \max_{\vy \in \calY} \vomega( \Ind{\vC = \vc_2} )_{\vy}
        \big)
    \]
    where equality holds if and only if $\lambda \in \{0,1\}$.
\end{proof}

\subsection{Numerical evaluation of Assumption~\ref{assu:monotonic}}
\label{sec:numerical-evaluation-assumptio}

Finally, we investigate experimentally whether the models used in our experiments satisfy \cref{assu:monotonic}. 
We have already shown in \cref{sec:analysis-assumption-app} that the \DSL inference layer reduces to Probabilistic Logic methods and, therefore, we only consider \DPL and \DSLDPL as representatives, and our empirical results support this claim.
We evaluate \CBM{s} separately.
In this evaluation, train models on \MNISTAdd, where there are two concepts $\vc = (c_1, c_2) \in [10]^2$ and $19$ total labels $y \in [19]$. This yields a total of $100 \times 19$ weights for the inference layer $\vomega$, which are fixed (by the prior knowledge) for \DPL and learned from data for \DSLDPL and \CBM.

We begin from \DPL, where the prior knowledge $\BK$ defines the inference layer $\vomega^*$. In this case, we find that -- as expected -- \cref{assu:monotonic} is satisfied by all possible pairs $(\vc_1, \vc_2) \in [10]^2 \times [10]^2$ that predict distinct labels (as requested by the assumption), see \cref{fig:DPL-behavior-assumption}. 

We now turn to \DSLDPL. Due to the linearity of the inference layer, in order to study the learned inference layer $\vomega$ it suffices to consider random weights for $\vomega$, cf. \cref{eq:guess-what}. Also in this case, we find that for all possible pairs $(\vc_1, \vc_2) \in [10]^2 \times [10]^2$ that predict distinct labels, \cref{assu:monotonic} is satisfied. See \cref{fig:DPL-random-behavior-assumption}.

Finally, we evaluate whether a \CBM trained on \MNISTAdd and that is close to achieving optimal likelihood satisfies, at least approximately, \cref{assu:monotonic}.
To this end, we train the \CBM for $150$ epochs, reaching a mean negative log-likelihood of $0.0884$ on the test set.
We find that \cref{assu:monotonic} is satisfied for the $95\%$ of possible pairs $(\vc_1, \vc_2) \in [10]^2 \times [10]^2$ giving different labels. For the remaining $5\%$ the assumption is \textit{marginally} violated with a maximum discrepancy:
\begin{align}
    &\max_{\vc_1, \vc_2, \lambda} \left( \max_{\vy} \vomega(\lambda \Ind{\vC = \vc_i} + (1-\lambda) \Ind{\vC = \vc_j})_{\vy} 
    -
    \max \big( 
        \max_{\vy} \vomega(\Ind{\vc_1})_{\vy}, \max_{\vy} \vomega(\Ind{\vc_2})_{\vy} 
    \big) \right) \\&\leq 1.38 \cdot 10^{-3}
\end{align}
The results are illustrated in \cref{fig:cbm-behavior-assumption}.

\begin{figure}[!t]
    \centering

    \begin{tabular}{cccc}
        \includegraphics[height=8em]{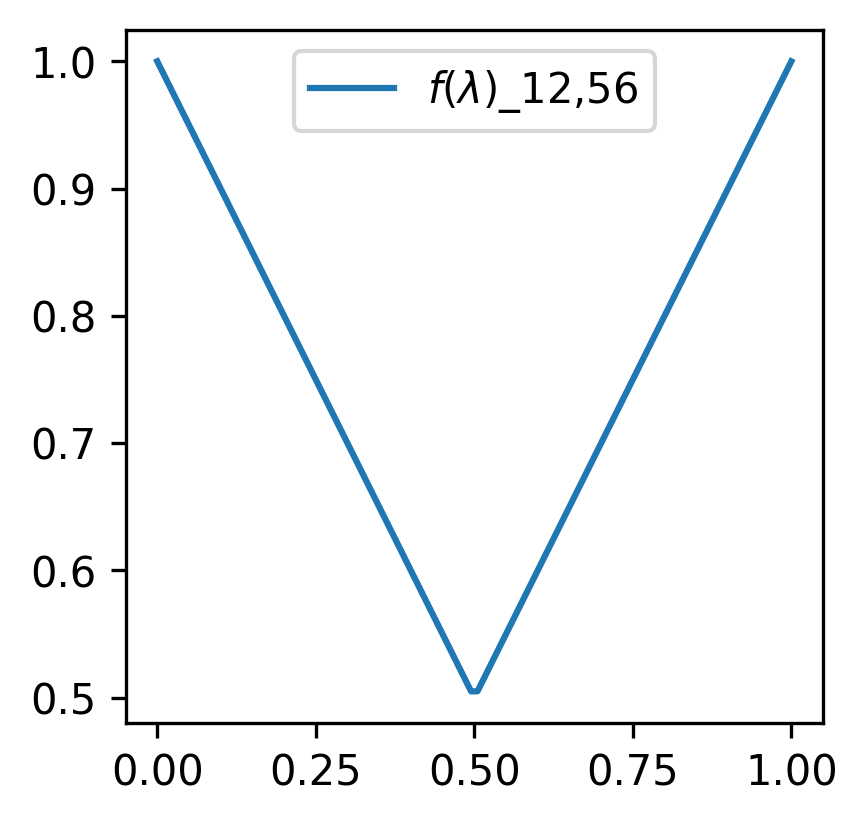}
        &
        \includegraphics[height=8em]{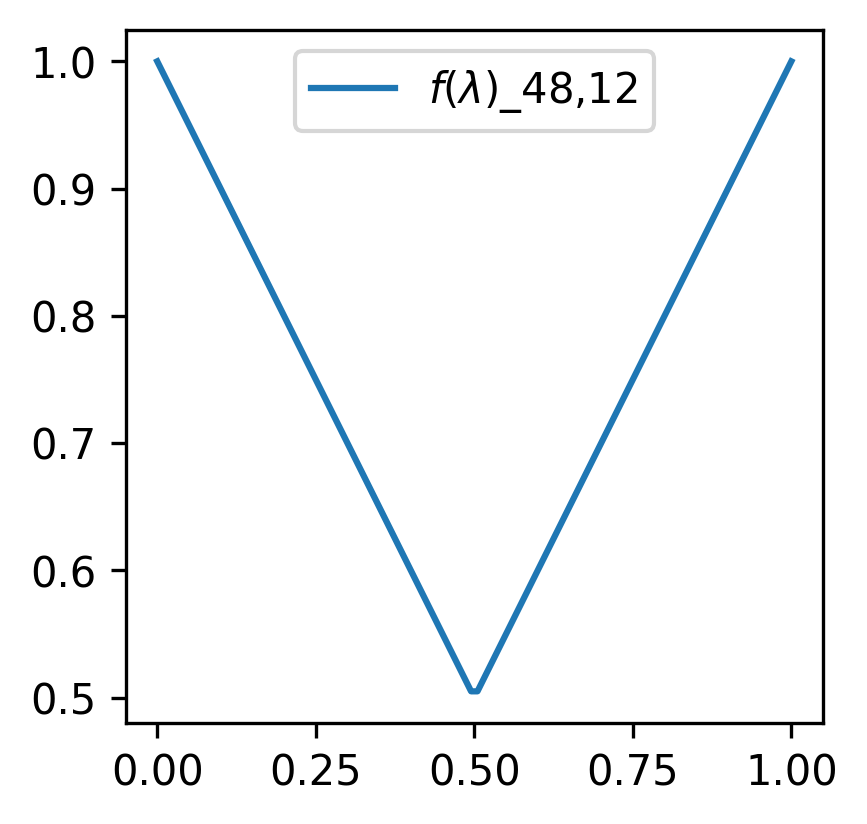} 
        &
        \includegraphics[height=8em]{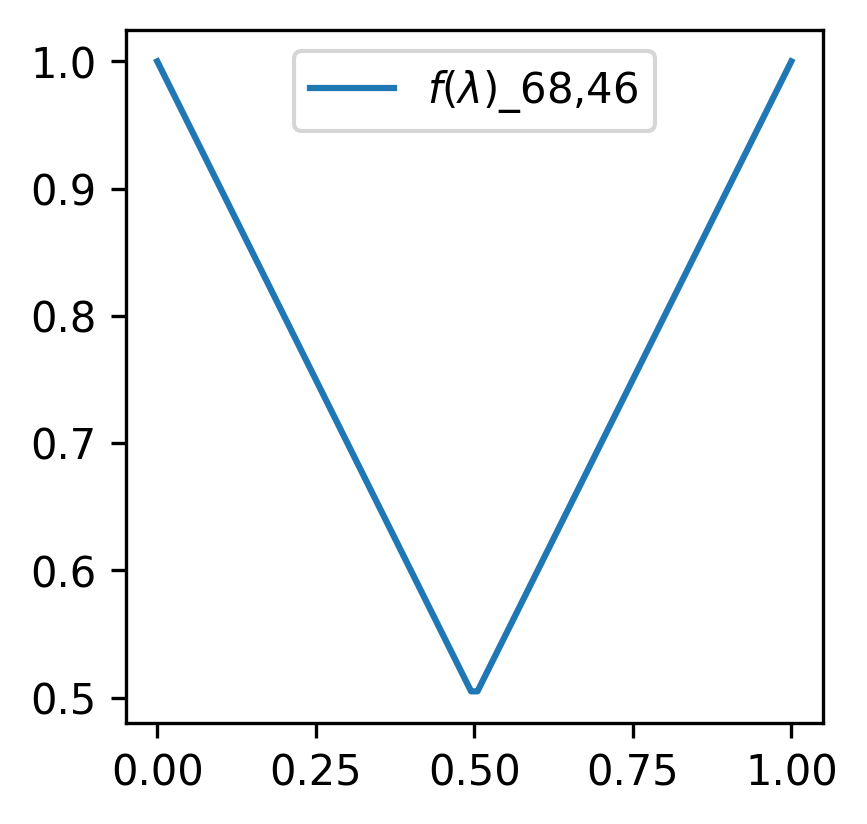}
        &
        \includegraphics[height=8em]{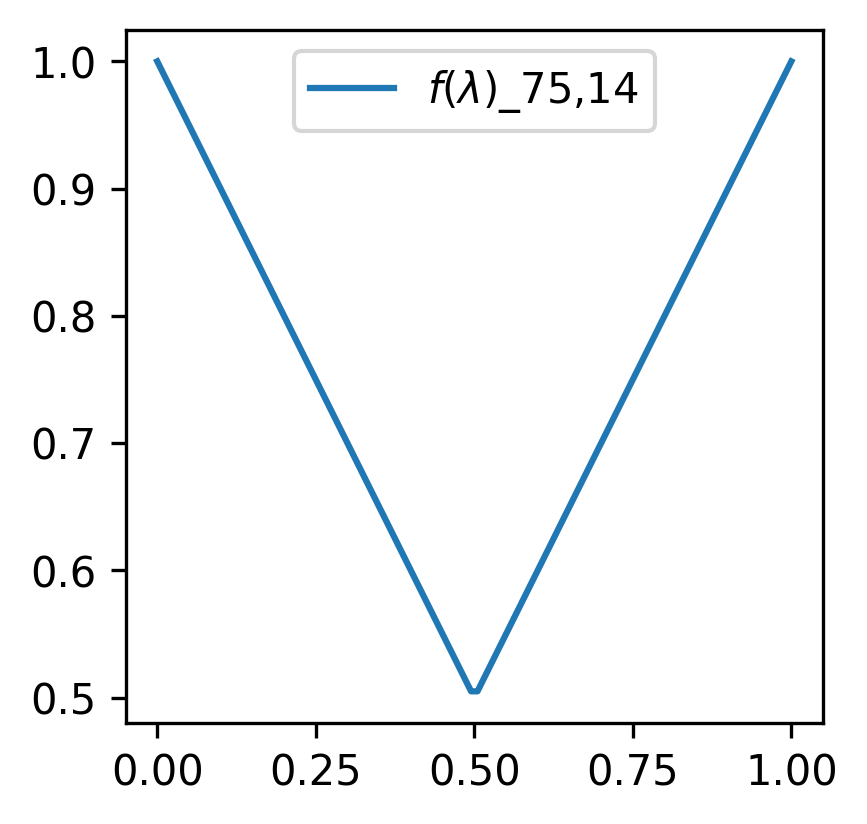}
    \end{tabular}

    \includegraphics[width=0.75\textwidth]{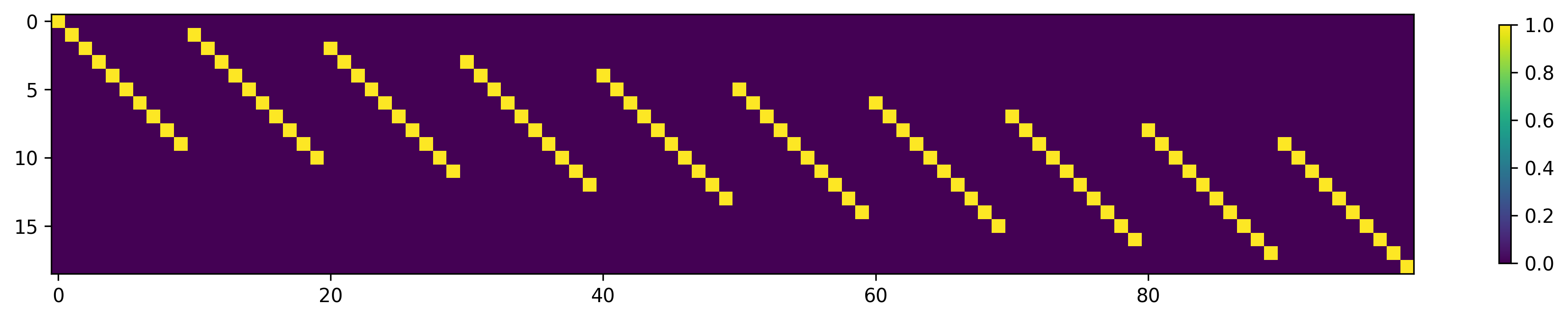}

    \caption{\textbf{The logic inference layer of DPL in \MNISTAdd}. %
    ({Top}) The behavior of $f(\lambda) := \max_{\vy} \vomega(\lambda \Ind{\vC = \vc_i} + (1-\lambda) \Ind{\vC = \vc_j})$, for four pairs $i,j$ sampled randomly from the $100$ possible worlds. 
    The x-axis represents $\lambda$ and the y-axis $f(\lambda)$.
    (Bottom) The linear layer weights for DPL. %
    }
    \label{fig:DPL-behavior-assumption}
\end{figure}

\begin{figure}[!t]
    \centering

    \begin{tabular}{cccc}
        \includegraphics[height=8em]{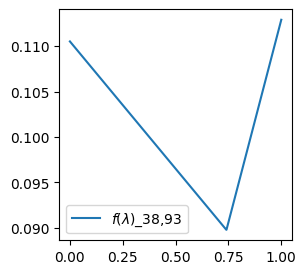}
        &
        \includegraphics[height=8em]{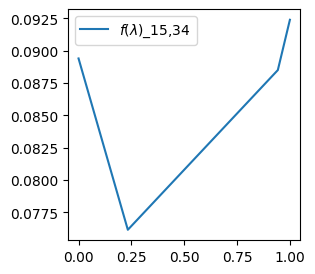} 
        &
        \includegraphics[height=8em]{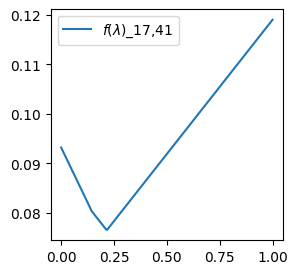}
        &
        \includegraphics[height=8em]{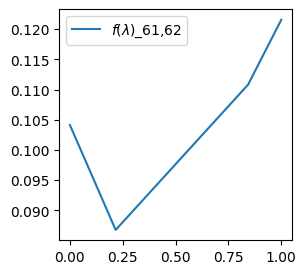}
    \end{tabular}

    \includegraphics[width=0.75\textwidth]{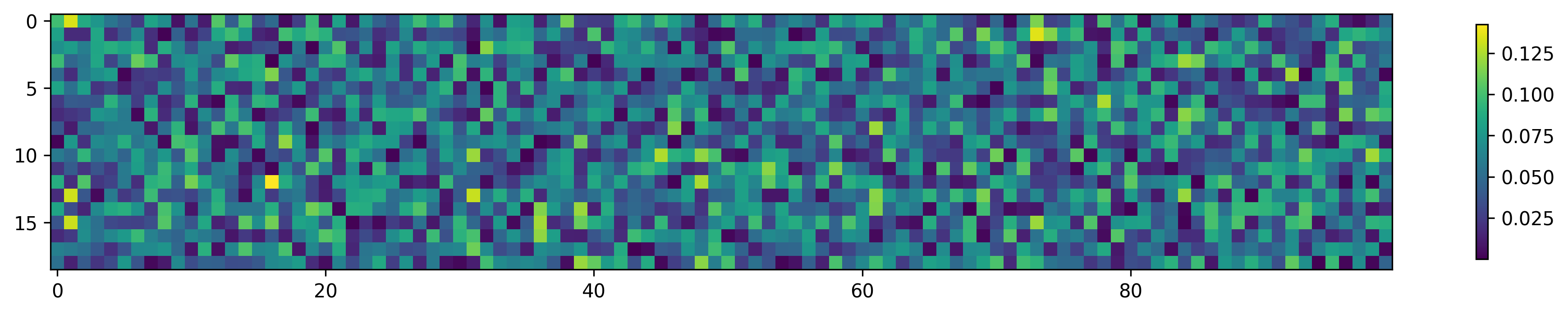}

    \caption{\textbf{A random inference layer for \DSLDPL in \MNISTAdd}. %
    ({Top}) The behavior of $f(\lambda) := \max_{\vy} \vomega(\lambda \Ind{\vC = \vc_i} + (1-\lambda) \Ind{\vC = \vc_j})$, for four pairs $i,j$ sampled randomly from the $100$ possible worlds. 
    (Bottom) The linear layer weights for \DSLDPL. %
    }
    \label{fig:DPL-random-behavior-assumption}
\end{figure}

\begin{figure}[!t]
    \centering

    \begin{tabular}{cccc}
        \includegraphics[height=8em]{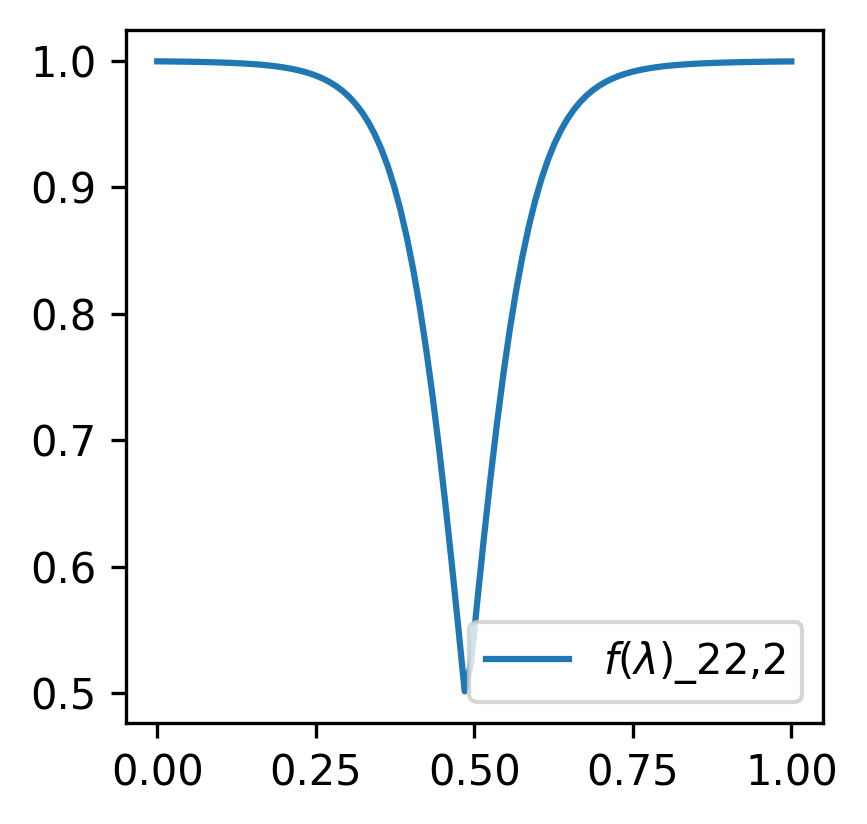}
        &
        \includegraphics[height=8em]{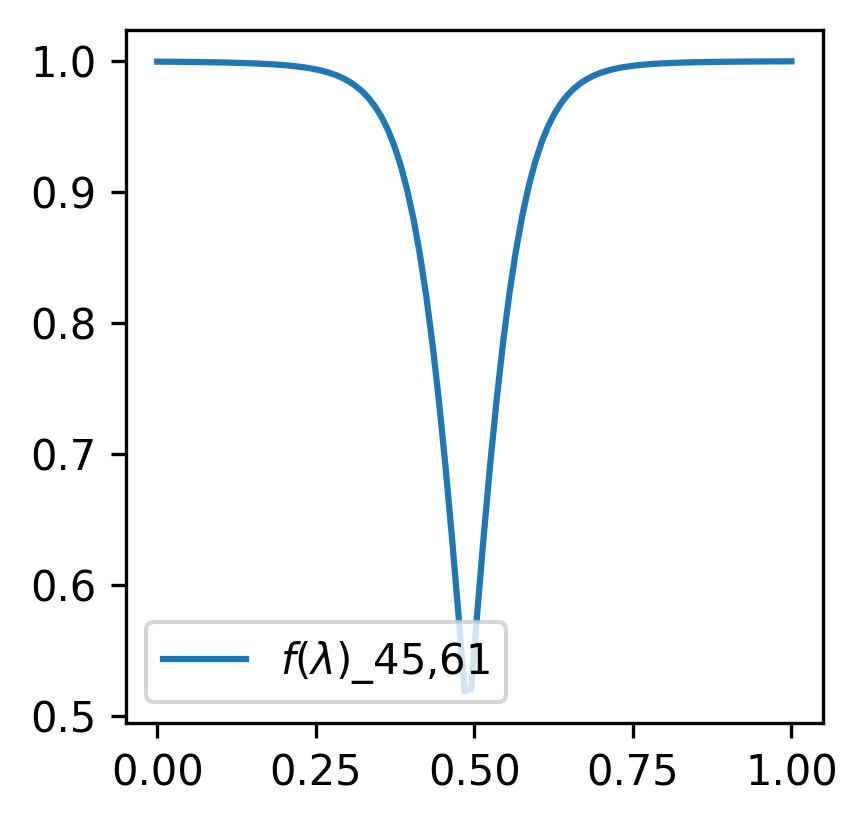} 
        &
        \includegraphics[height=8em]{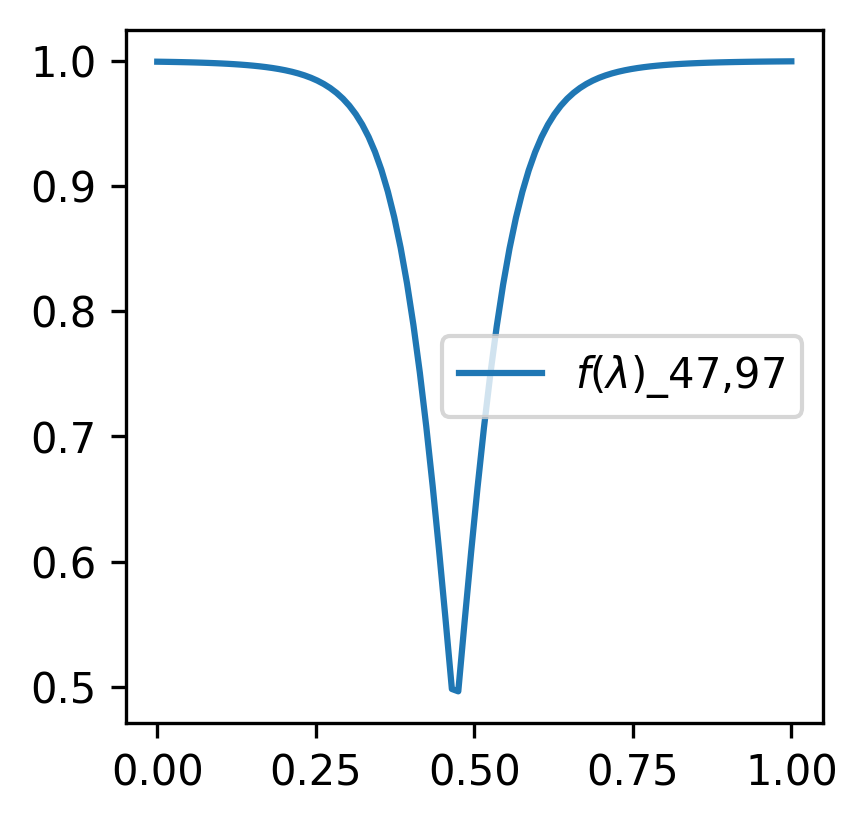}
        &
        \includegraphics[height=8em]{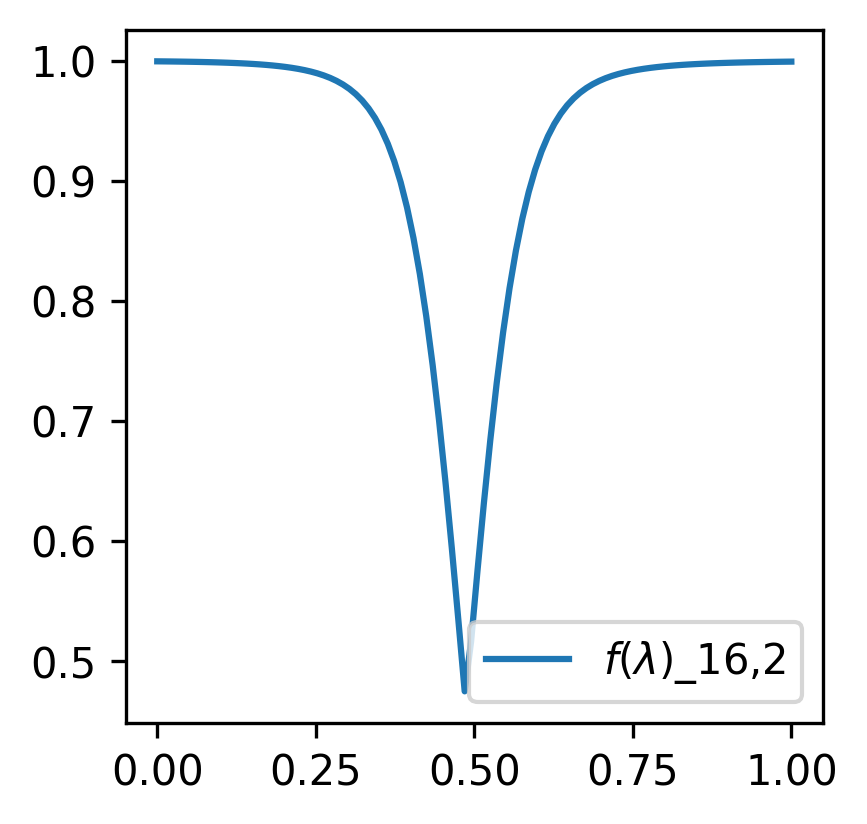}
    \end{tabular}

    \includegraphics[width=0.75\textwidth]{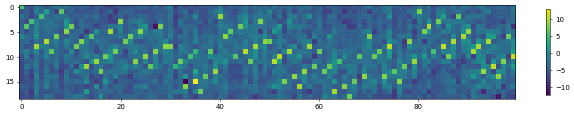}

    \caption{\textbf{Optimal \CBM in \MNISTAdd}. We visualize the learned weights by a \CBM achieving maximum log-likelihood in \MNISTAdd.
    ({Top}) The behavior of $f(\lambda) := \max_{\vy} \vomega(\lambda \Ind{\vC = \vc_i} + (1-\lambda) \Ind{\vC = \vc_j})$, for four pairs $i,j$ sampled randomly the $100$ possible worlds. 
    The sampled pairs align with \cref{assu:monotonic}.
    (Bottom) The linear layer weights for the trained \CBM. %
    }
    \label{fig:cbm-behavior-assumption}
\end{figure}

\end{document}